\newlength\aftertitskip     \newlength\beforetitskip
\newlength\interauthorskip  \newlength\aftermaketitskip
\def\maketitle{\par
 \begingroup
   \def\thefootnote{\fnsymbol{footnote}}
   \def\@makefnmark{\hbox to 4pt{$^{\@thefnmark}$\hss}}
   \@maketitle \@thanks
 \endgroup
\setcounter{footnote}{0}
 \let\maketitle\relax \let\@maketitle\relax
 \gdef\@thanks{}\gdef\@author{}\gdef\@title{}\let\thanks\relax}
\def\@startauthor{\noindent \normalsize\bf}
\def\@endauthor{}
\def\@starteditor{\noindent \small {\bf Editor:~}}
\def\@endeditor{\normalsize}
\def\@maketitle{\vbox{\hsize\textwidth
 \linewidth\hsize \vskip \beforetitskip
 {\begin{center} \LARGE\@title \par \end{center}} \vskip \aftertitskip
 {\def\and{\unskip\enspace{\rm and}\enspace}%
  \def\addr{\small\it}%
  \def\email{\hfill\small\tt}%
  \def\name{\normalsize\bf}%
  \def\AND{\@endauthor\rm\hss \vskip \interauthorskip \@startauthor}
  \@startauthor \@author \@endauthor}
}}
\title{Robust Training of Neural Networks using \\ Scale Invariant Architectures}
\author{\name Zhiyuan Li\thanks{Work done at Google Research New York} \email{zhiyuanli@cs.princeton.edu}\\
  \addr{Princeton University}\\ 
  \name Srinadh Bhojanapalli \email{bsrinadh@google.com}\\
    \addr{Google Research New York}\\
  \name Manzil Zaheer \email{manzilzaheer@google.com}\\
    \addr{Google DeepMind New York}\\
  \name Sashank J.\ Reddi \email{sashank@google.com}\\
    \addr{Google Research New York}\\
  \name Sanjiv Kumar \email{sanjivk@google.com}\\
  \addr{Google Research New York}
}
\theoremstyle{plain}
\newtheorem{theorem}{Theorem}[section]
\newtheorem{lemma}[theorem]{Lemma}
\newtheorem{example}[theorem]{Example}
\newtheorem{claim}{Claim}[theorem]
\theoremstyle{definition}
\newtheorem{definition}[theorem]{Definition}
\newtheorem{assumption}[theorem]{Assumption}
\newtheorem{condition}[theorem]{Condition}
\theoremstyle{remark}
\newtheorem{remark}[theorem]{Remark}
\newcommand{\gd}{\textsc{GD}\xspace}
\newcommand{\gdwd}{\textsc{GD+WD}\xspace}
\newcommand{\sgd}{\textsc{SGD}\xspace}
\newcommand{\sgdwd}{\textsc{SGD+WD}\xspace}
\newcommand{\adagrad}{\textsc{Adagrad}\xspace}
\newcommand{\adam}{\textsc{Adam}\xspace}
\newcommand{\rmsprop}{\textsc{RMSprop}\xspace}
\newcommand{\lamb}{\textsc{Lamb}\xspace}
\newcommand{\bert}{\textsc{Bert}\xspace}
\newcommand{\resnet}{\textsc{ResNet}\xspace}
\newcommand{\adafactor}{\textsc{Adafactor}\xspace}
\newcommand{\sibert}{\textsc{Sibert}\xspace}
\crefname{claim}{claim}{claims}
\crefname{conjecture}{conjecture}{conjectures}
\crefname{assumption}{assumption}{assumptions}
\crefname{condition}{condition}{conditions}
\newtheorem{example-shaded}{Example}[section]
\definecolor{light-gray}{gray}{0.95}
\definecolor{shadecolor}{named}{light-gray}
\newcommand{\normxx}[1]{\norm{\bx(#1)}_2^2}
\newcommand{\gnormxx}[1]{\norm{\nabla L(\bx(#1))}_2^2}
\begin{document}

\maketitle

\begin{abstract}
In contrast to \sgd, adaptive gradient methods like \adam allow robust training of modern deep networks, especially large language models. 
However, the use of adaptivity not only comes at the cost of extra memory but also raises the fundamental question: \emph{can non-adaptive methods like \sgd enjoy similar benefits}? 
In this paper, we provide an affirmative answer to this question by proposing to achieve both robust and memory-efficient training via the following general recipe: 
(1) modify the architecture and make it \emph{scale invariant}, i.e. the scale of parameter doesn't affect the output of the network, 
(2) train with \sgd and weight decay, and optionally 
(3) clip the global gradient norm proportional to weight norm multiplied by $\sqrt{\tfrac{2\lambda}{\eta}}$, where $\eta$ is learning rate and $\lambda$ is weight decay. 
We show that this general approach is robust to rescaling of parameter and loss by proving that its convergence only depends logarithmically on the scale of initialization and loss, whereas the standard \sgd might not even converge for many initializations. 
Following our recipe, we design a scale invariant version of \bert, called \sibert, which when trained simply by vanilla \sgd achieves performance comparable to \bert trained by adaptive methods like \adam on downstream tasks.
\end{abstract}

\section{Introduction}

Neural architectures like transformers are the cornerstone for modern machine learning applications. 
However, training them is difficult and often results in training instability \cite{Liu20, Zhang20}.
To enable stable training, one typically requires adaptive and carefully tuned learning rates. However, the reason behind this issue is not very well-understood and lacks a formal treatment. 

In this paper, we hypothesize that a primary cause of such behavior is the $k$-homogeneous ($k\ge 2$) nature of the network i.e., property where network's output is scaled by $s^k$ when its parameters are scaled by $s$. To illustrate our point, we consider the following instructive toy model.

\begin{shaded}
\begin{example}
Consider logistic regression with $1$-dimensional non-separable data, $\{z_i,y_i\}_{i=1}^n\in \left(\RR\times \{\pm1\}\right)^n$. The loss is defined as $L(x_1,,\ldots, x_{2k}) =\tilde{L}(X) :=- \sum_{i=1}^n\ln (1+e^{-z_iy_iX})$ where $X = x_1\ldots x_{2k}$ and $k\ge 2$.

Since $\tilde{L}$ is convex with bounded smoothness in $X$, there exists step size that are independent of any initialization that allow GD to converge to the optimal solution.
In sharp contrast, the reparametrized loss $L(x_1,,\ldots, x_{2k})$ with $2k$-homogeneous structure does not enjoy this nice stability property --- the learning rate has to be tuned according to the initialization. In particular, when $\eta \ge \frac{2}{|\nabla \tilde{L}(X(0))|}(X(0))^{\frac{1}{k}-1}$ and $X(0)>X^*$ where $X^*>0$ is the global minimizer, $X(t)$ will monotonically increase and explode, if all $x_i$ are initialized to be the same.
\end{example}
\end{shaded}

We refer the reader to Appendix \ref{sec:intro_analysis} for a formal justification of this example. In the above example, the success of optimization is very sensitive to the right choice of the learning rate that depends on the initialization. Furthermore, the training cannot recover once the norm explodes due to large gradient update. 

In the above one-dimensional example it is still possible to find a small workable learning rate by extensive grid search that depends on the initial point, however, the situation can get worse when the $k$-homogeneous structure has an unbalanced initialization as below.
\begin{shaded}
\begin{example}
Consider solving low-rank matrix decomposition by Gradient Descent. Let $L(A,B) = \frac{1}{2}\norm{AB^\top-Y}_2^2$ where $A,B\in\RR^{d\times r}$ are both initialized i.i.d. gaussian with covariance $\sigma_A^2\gg\sigma_B^2\approx \sigma_A^{-2}$, $Y\in\RR^{d\times d}$ and $d\gg r$.  

Solving this optimization problem requires  $A$ and $B$ learning the column and row space of $Y$ respectively, but the unbalanced initialization will force the learning rate to be small enough such that $B$ does not explode and, thus, $A$ is almost frozen. To see this, note in the standard convergence analysis of GD, we need LR smaller than $2/\norm{\nabla^2 L}$ to ensure the Descent Lemma holds, \emph{i.e.}, loss decreases in a single step. Here we have that the smoothness w.r.t $A$ (fixing $B$) is $\lambda_{max}(BB^T)$ and the smoothness w.r.t. $B$ (fixing $A$) is $\lambda_{max}(AA^T)$. Thus, LR can be at most $O(\frac{1}{\sigma_A^2})$, but the gradient of $A$ is only of magnitude $O(\sigma_B)$, resulting in $A$  learning the column space slowly. Specifically, when $d=1$ and $Y=0$ and for any $r\ge 1$,  choosing $\eta> \frac{4}{\norm{\nabla_B^2 L}}$ will cause \gd to provably explode~\citep{lewkowycz2020large}.
\end{example}
\end{shaded}

Similar issues can exist in deep neural networks as the $k$-homogeneous structure is quite common. For instance, \cite{Liu20} identified the gradient norm varies with depth and that no single learning rate is globally optimal for all layers.
To this end, one has to resort to adaptive methods like \adam to handle the $k$-homogeneous structure of deep networks and allow for its robust training.
However, this not only comes at the expense of higher memory, but also raises the key question of our interest: 

\emph{Can non-adaptive methods like SGD enjoy fast and robust convergence without training instability?} 

Answering this question, requires us to first define our notion of robustness. In this paper, we primarily aim for three aspects of robustness by preventing: explosion of parameters (e.g. due to frequent large gradient updates), slow progress in training (e.g. due to loss plateaus) and loss explosion or spikes (e.g. due to possibly infrequent large magnitude updates). In this paper, we propose a 
simple yet powerful general approach for achieving such fast and robust convergence. At a high level, our recipe for robust training includes three key ingredients: 
\begin{enumerate}
\item \emph{Designing architectural scale invariance which allows for improved training stability and prevents explosion of the parameters}. We show that by using scale invariance in the architecture (i.e., making the network $0$-homogeneous), one can effectively control the gradient updates when the parameter norm is large.

\item \emph{Using SGD with weight decay for training, wherein enabling weight decay improves training efficiency under rescaling of loss and initialization}. While scale invariance prevents explosion of parameters, the training convergence has strong dependence on initialization scale and learning rate, which  can make training inefficient in face of  parameter and initialization rescaling. Use of SGD with weight decay circumvents this issue. 
\item 
\emph{Using a novel Relative Global Clipping to prevent spikes in training loss and improve overall convergence speed}. Although scale invariance in the architecture already guarantees the training stability, it does not prevent severe non-monotonic loss explosion. By using a new global clipping approach, we show that one can prevent such loss explosions effectively.
\end{enumerate}

We show that this surprisingly simple training recipe can not only improve the memory efficiency over adaptive methods but also achieves robust training. In light of the above background, we list our main contributions below.
\begin{itemize}
	\item In \Cref{sec:methods}, we propose a new general recipe for memory efficient, robust training using (1) scale invariant architecture; (2) SGD+WD for training and (3) a novel clipping rule, called Relative Global Clipping, for clipping the updates. Following this recipe, we design a new variant of \bert called Scale Invariant \bert (\sibert).
	\item In \Cref{sec:gd,sec:main_sgd_wd}, we prove the convergence rate to the approximate first order point for GD and \sgd for scale invariant loss. 
	We show that \sgdwd matches the standard rates, even without the knowledge about the smoothness of loss and is robust to the scale of initialization or loss.
	\item In \Cref{sec:main_clip_sgd}, we show \sgdwd with Relative Global Clipping has better parameter norm convergence via a novel analysis. With assumptions that the clipping does not bring too much bias in expected gradients, we show similar convergence result to \sgdwd.
	\item In our empirical analysis in \Cref{sec:exp}, we demonstrate that \sibert trained using simple \sgd can achieve performance comparable to standard \bert trained with \adam. Furthermore, we also verify our theoretical claims. To our knowledge, this is the first time a \bert-like model has been effectively trained using vanilla \sgd.
\end{itemize}

\section{Related Work \& Background}

The literature on adaptive methods and scale invariance in neural networks is vast, so we only discuss works that are most relevant to our paper.

\paragraph{Adaptive Methods \& Clipping Methods.} Adaptive learning rates
have long been studied \cite{polyak1987introduction}. In machine learning, adaptive learning rates have been popularized by \adagrad, which particularly benefits from sparse stochastic
gradients \cite{duchi2011adaptive}. Inspired by \adagrad, several adaptive methods, like \adam, \rmsprop and its variants have been proposed in the deep learning community \cite{kingma2014adam,tieleman2012lecture,reddi2018convergence,You2020lamb,shazeer2018adafactor}. These approaches have been crucial in the success of many deep learning applications \cite{vaswani2017attention,devlin2018bert,raffel2019exploring}. Several works have studied the benefits of adaptive methods in deep learning settings (e.g. \cite{Liu20,Zhang20}).  However, as mentioned earlier, these benefits come at the cost of  computational and memory efficiency.   \citet{anil2019memory} proposed a variant of  \adagrad requiring fewer parameters for adaptivity, but still requires momentum. \adafactor~\citep{shazeer2018adafactor} removes momentum and uses much fewer adaptivity parameters, but for large models, \adafactor still needs momentum to ensure training stability~\citep{chowdhery2022palm}.  Our approach is also related to normalized and projected gradient descent, which has been studied for quasi-convex and non-convex settings (e.g. see \cite{hazan2015beyond, levy2016power, Huang2017ProjectionBW}). However, these methods have seen very limited success.

Clipping based optimization methods, especially gradient clipping, are widely used in deep learning applications to improve training stability or ensure privacy \cite{pascanu13,Chen20,ZhangHSJ20}. These approaches typically use a constant threshold to clip the gradients before the update. However, choosing this threshold is difficult and requires careful tuning. Adaptive variants of clipping methods partially alleviate this issue and are closely related to adaptive methods \cite{Zhang20}; however, they again incur additional computation and memory costs. 

\paragraph{Scale Invariance in deep networks.} Various normalization schemes are the main source of scale invariance in deep learning, \emph{e.g.}, BatchNorm~\cite{ioffe2015batch}, LayerNorm~\cite{ba2016layer}, Weight Normalization~\cite{salimans2016weight}, GroupNorm~\cite{wu2018group}, InstanceNorm~\cite{ulyanov2016instance}.   Scale invariance from normalization  allows \gd and \sgd to converge to stationary points from any initialization and with any learning rate, in $O(T^{-1/2})$ and $\tilde{O}(T^{-1/4})$ rates respectively \cite{arora2018theoretical}. The interplay between \sgd, scale invariance and WD has also been well studied. It was shown that the effect of WD for normalized networks can be replaced by LR schedules \cite{hoffer2018norm, zhang2018three}. \citet{li2019exponential} formally builds the equivalence between \sgdwd and \sgd with an exponential increasing LR schedule for scale invariant loss. \citet{van2017l2} first proposed the notion of effective LR, $\eta/\norm{\bx}_2^2$, for normalized networks, and showed that the unique stationary value of $\norm{\bx}_2^4$ is proportional to $\lambda/\eta$, where $\eta$ is LR and $\lambda$ is WD. \citet{li2020reconciling} proved that the parameter norm always converges to the above value by modeling \sgd as Stochastic Differential Equation.  \citet{wan2020spherical} proved the parameter norm converges to the same value directly for \sgdwd, but only in expectation.

\subsection{Preliminary}
In this section we present the definition of scale invariant functions and some of their useful properties. For $\bx\in\RR^d$, we define $\overline{\bx}: = \frac{\bx}{\norm{\bx}_2}$. We say a function is $\mathcal{C}^k$ iff it is $k$-times continuously differentiable.
\begin{definition} 
	Given a cone $U\subset \RR^d$, we say a function $f:U\to \RR$ is \emph{(positively) $k$-homogeneous} or \emph{of homogeneity of degree $k$} iff for any $c>0$ and $x\in U$, $f(c\bx)=c^kf(\bx)$. We say a function is \emph{scale invariant} iff it is $0$-homogeneous. 
\end{definition}
Now we present some useful properties of the derivatives of homogeneous functions.
\begin{theorem}[Euler's Homogeneous Function Theorem]\label{thm:euler_homo}
	For any  $k$-homogeneous $\mathcal{C}^1$ function $f$, it holds that $\inner{\nabla f(\bx)}{\bx} = kf(\bx)$.
\end{theorem}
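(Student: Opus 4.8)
The plan is to reduce the multivariate identity to a one-variable calculus fact by freezing the direction $\bx$ and treating the homogeneity relation as an identity in the scalar $c > 0$. Define the auxiliary function $g(c) := f(c\bx)$ for $c > 0$; since $U$ is a cone, $c\bx \in U$ whenever $\bx \in U$, so $g$ is well-defined on $(0,\infty)$, and it inherits $\mathcal{C}^1$ smoothness from $f$. The defining property of $k$-homogeneity gives the identity $g(c) = c^k f(\bx)$ on this whole interval.

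First I would differentiate both sides of $g(c) = c^k f(\bx)$ with respect to $c$. On the left, the chain rule yields $g'(c) = \inner{\nabla f(c\bx)}{\bx}$, where the gradient is evaluated at the scaled point $c\bx$ and the extra factor $\bx$ arises from $\frac{d}{dc}(c\bx) = \bx$. On the right, since $f(\bx)$ is constant in $c$, we obtain $k c^{k-1} f(\bx)$. Equating the two expressions gives $\inner{\nabla f(c\bx)}{\bx} = k c^{k-1} f(\bx)$ for every $c > 0$. Evaluating at $c = 1$ then collapses this to the claimed identity $\inner{\nabla f(\bx)}{\bx} = k f(\bx)$.

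The only point requiring care --- and the closest thing to an obstacle --- is the justification of the chain-rule step: one must invoke the $\mathcal{C}^1$ hypothesis so that $\nabla f$ exists and is continuous (hence $g$ is differentiable with the stated derivative), and one must use the cone structure of $U$ to guarantee that the scaled argument $c\bx$ remains in the domain for every $c > 0$, so that both sides of the identity are meaningful along the entire ray. Once these two structural facts are in hand the computation is immediate, and no further estimates are needed.
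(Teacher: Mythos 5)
Your proof is correct and is the standard textbook argument for Euler's theorem: parametrize along the ray via $g(c)=f(c\bx)$, differentiate the homogeneity identity $g(c)=c^k f(\bx)$ in $c$, and set $c=1$. The paper states this result as a classical fact without supplying a proof, so there is nothing to compare against; your handling of the two structural points (the cone hypothesis keeping $c\bx$ in the domain, and $\mathcal{C}^1$ justifying the chain rule) is exactly what is needed.
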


\begin{lemma}\label{lem:homo}
For any $k$-homogeneous $\mathcal{C}^l$ function $f$, $\nabla^l f$ is $k-l$ homogeneous.	
\end{lemma}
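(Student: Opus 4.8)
The plan is to proceed by induction on $l$, reducing the whole statement to the single claim that the gradient of an $m$-homogeneous $\mathcal{C}^1$ function is $(m-1)$-homogeneous. First I would note that the notion of homogeneity extends componentwise to tensor-valued maps: I interpret ``$\nabla^l f$ is $(k-l)$-homogeneous'' to mean that the symmetric $l$-tensor-valued map $\nabla^l f$ satisfies $\nabla^l f(c\bx) = c^{k-l}\nabla^l f(\bx)$ for all $c>0$, equivalently that each mixed partial $\partial_{i_1}\cdots\partial_{i_l} f$ is $(k-l)$-homogeneous as a scalar function in the sense of the preceding definition. The base case $l=0$ is then simply the hypothesis that $\nabla^0 f = f$ is $k$-homogeneous.

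For the inductive step, assume $\nabla^{l-1} f$ is $(k-l+1)$-homogeneous; since $f$ is $\mathcal{C}^l$, this map is $\mathcal{C}^1$, so it suffices to establish the key one-step claim applied to $g := \nabla^{l-1} f$ with $m = k-l+1$. The heart of the argument is to differentiate the defining identity $g(c\bx) = c^m g(\bx)$ with respect to $\bx$ while holding $c$ fixed. By the chain rule the left-hand side has gradient $c\,\nabla g(c\bx)$, the extra factor $c$ coming from the Jacobian of $\bx \mapsto c\bx$, whereas the right-hand side has gradient $c^m \nabla g(\bx)$. Equating the two and cancelling one power of $c$ (valid since $c>0$) yields $\nabla g(c\bx) = c^{m-1}\nabla g(\bx)$, i.e. $\nabla g = \nabla^l f$ is $(m-1) = (k-l)$-homogeneous, which closes the induction.

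I expect no serious obstacle here; the only points requiring care are bookkeeping ones. I would make sure the $\mathcal{C}^l$ assumption is invoked at each stage so that all derivatives up to order $l$ exist and the chain-rule differentiation of the homogeneity identity is legitimate, and I would be explicit that homogeneity of the higher-order object $\nabla^l f$ is understood entry-by-entry. One could alternatively differentiate $f(c\bx) = c^k f(\bx)$ directly $l$ times and read off the factor $c^{k-l}$ in a single pass, but the inductive presentation keeps the accumulated chain-rule factors transparent and aligns cleanly with the scalar definition of homogeneity already stated.
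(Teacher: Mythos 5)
Your proof is correct: differentiating the identity $g(c\bx)=c^m g(\bx)$ in $\bx$ via the chain rule to pick up one factor of $c$ on the left, then cancelling, is the standard argument, and the induction on $l$ with the componentwise interpretation of homogeneity for tensor-valued maps is sound. The paper states this lemma as a known fact and gives no proof of its own, so there is nothing to compare against; your write-up supplies exactly the routine argument the authors left implicit.
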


\begin{lemma}[Equivalent Scaling]\label{lem:equivalent_scaling} The  properties below hold (and generalize to stochastic loss):
    \begin{enumerate}
        \item For any loss $L$, LR $\eta$,  WD $\lambda$ and initialization $\bx(0)$, rescaling $(L,\eta,\lambda,\bx(0))\to (cL,\eta/c,c\lambda,\bx(0))$ doesn't change GD iterate   $\bx(t)$ for any $t\ge0$.
        \item For any scale invariant loss $L$, LR $\eta$,  WD $\lambda$ and initialization $\bx(0)$, rescaling $(L,\eta,\lambda,\bx(0))\to (L,c^2\eta,\lambda/c^2,c\bx(0))$ doesn't change the direction of GD iterate $\overline \bx(t)$ for any $t\ge0$. (see Lemma 2.4 in \cite{li2019exponential})
    \end{enumerate}
\end{lemma}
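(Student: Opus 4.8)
The plan is to write out the single \gd-with-weight-decay update and track how each rescaling propagates through it, then close the argument by a one-line induction on $t$. Recall the update takes the form $\bx(t+1) = (1-\eta\lambda)\bx(t) - \eta\nabla L(\bx(t))$, where the factor $(1-\eta\lambda)$ is the weight-decay contraction and $-\eta\nabla L$ is the ordinary gradient step. Both parts reduce to checking that, under the stated substitutions, this recursion maps onto itself (possibly up to an overall scalar), after which the claim follows by induction from the matching initializations.

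For the first part I would simply substitute the rescaled quantities $(cL,\eta/c,c\lambda)$ into the update. The weight-decay coefficient becomes $1-(\eta/c)(c\lambda)=1-\eta\lambda$, and since $\nabla(cL)=c\nabla L$ the gradient term becomes $-(\eta/c)\,c\nabla L = -\eta\nabla L$, so the rescaled recursion is \emph{identical} to the original one. As the initialization $\bx(0)$ is left unchanged, the two trajectories coincide for all $t\ge0$. No homogeneity assumption on $L$ is needed here; this is purely the cancellation $\tfrac1c\cdot c=1$ appearing in both the decay and the gradient terms.

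For the second part the load-bearing input is that a scale-invariant (i.e.\ $0$-homogeneous) loss has a $(-1)$-homogeneous gradient, which is exactly \Cref{lem:homo} with $k=0$, $l=1$; concretely $\nabla L(c\bx)=c^{-1}\nabla L(\bx)$ for all $c>0$. I would introduce the rescaled trajectory $\bx'(t)$ started at $\bx'(0)=c\bx(0)$ with learning rate $c^2\eta$ and weight decay $\lambda/c^2$, and prove the stronger statement $\bx'(t)=c\,\bx(t)$ for all $t$, rather than merely equality of directions. The base case holds by construction. For the inductive step I substitute $\bx'(t)=c\bx(t)$ into the rescaled update: the decay factor is $1-(c^2\eta)(\lambda/c^2)=1-\eta\lambda$, while the gradient term becomes $-c^2\eta\,\nabla L(c\bx(t)) = -c^2\eta\cdot c^{-1}\nabla L(\bx(t)) = -c\eta\,\nabla L(\bx(t))$ by the $(-1)$-homogeneity. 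Collecting terms gives $\bx'(t+1)=c\big[(1-\eta\lambda)\bx(t)-\eta\nabla L(\bx(t))\big]=c\,\bx(t+1)$, closing the induction. Since $c>0$, normalizing yields $\overline{\bx'}(t)=\overline{\bx}(t)$, so the direction is preserved.

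I expect no serious obstacle: the only genuinely nontrivial step is invoking the homogeneity of the gradient in the second part, and everything else is bookkeeping of the powers of $c$. The extension to stochastic loss (asserted in the statement) is immediate, because both rescalings act on each sampled gradient in the same way, so the per-step identities above hold sample-by-sample and the induction carries over verbatim.
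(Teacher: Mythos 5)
Your proof is correct and is essentially the argument the paper intends: the paper gives no proof of its own, deferring part 2 to Lemma 2.4 of \cite{li2019exponential}, which is exactly your induction establishing the stronger identity $\bx'(t)=c\,\bx(t)$ via the $(-1)$-homogeneity of $\nabla L$ from \Cref{lem:homo}, while part 1 is the same direct cancellation you describe. Nothing is missing.
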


\section{Methods}\label{sec:methods}
In this section, we provide a more detailed description of our recipe for robust and memory-efficient network training, which includes three building blocks: (1) scale invariant architecture (Section~\ref{sec:si-design}), (2) SGD with Weight Decay (Section~\ref{sec:sgd-wd-algorithm}) and optionally (3) the \emph{Relative Global Clipping} (Section~\ref{sec:rel-clipping} and \Cref{alg:clipped_sgd}). 
\begin{algorithm}
\caption{ $\sqrt{C}$-Clipped SGD + WD}\label{alg:clipped_sgd}
\begin{algorithmic}
\REQUIRE Total steps $T$, Scale invariant loss $\{L_t\}_{t\ge 1}^T$,  initialization $\bx(0)$, LR $\eta$, WD $\lambda$, clipping factor $C>1$ ($C=\infty\Leftrightarrow$  no clipping).\\
\FOR {$t=0$ \textbf{to} $T-1$}
\STATE $N_t\gets \min\left\{\sqrt{\frac{2C\lambda}{\eta}}\norm{\bx(t)}_2, \norm{\nabla L_{t}(\bx(t))}_2\right\}$.\\

\STATE	$\bx(t+1) \gets (1-\eta\lambda) \bx(t) - \eta  N_t \frac{\nabla L_{t}(\bx(t))}{\norm{\nabla L_{t}(\bx(t))}_2}$.
\ENDFOR
\end{algorithmic}
\end{algorithm}

\subsection{Designing Scaling Invariant Architectures}
\label{sec:si-design}

We first revisit an approach for introducing scale invariance in neural networks, which is presented in \cite{li2019exponential}. Viewing the neural network computation as a directed graph, the high level idea is to ensure same homogeneity degree of different edges reaching a node. For example in a \resnet block, the output from an affine transform is added back to the input $z$ from the previous layer yielding $z + \text{Aff}(z)$. Now if we scale all the network parameters by $c$, both $z$ and $\text{Aff}(z)$ should have the same degree of homogeneity and scale as $c^k$. Otherwise the network is no longer homogeneous and, hence, cannot be scale invariant.

In this paper, we apply the above design philosophy to develop a scale invariant version of \bert~\citep{devlin2018bert} --- a transformer based model. A transformer has two main building blocks that need to be made scale invariant -- residual block and Attention~\cite{vaswani2017attention}. For residual block, \citet{li2019exponential} already demonstrated how to make both the PreNorm and PostNorm version of \resnet scale invariant (see Appendix of their paper for more details). In this paper,  we use their PreNorm variant (see \Cref{fig:SI_encoder}). Furthermore, we design a novel scale invariant version of Attention block in transformer, as described below. 






\paragraph{Scale Invariant Attention:} Recall the standard self attention block computes the following for a given input $Q,K,V \in \RR^{n \times d_{model}}$:
\begin{align*} 
\attn(Q,K,V) = \softmax(\frac{Q W^Q (KW^K)^\top}{\sqrt{d_k}}) V W^V. 	
\end{align*}
Here  $W^Q, W^K\in \RR^{d_{ model}\times d_k}$ and $W^V\in \RR^{d_{ model}\times d_v}$ are affine transformations and, hence, are all 1-homogeneous transformations. The $\softmax$ function computes row wise softmax normalization. It is easy to see that standard attention is not homogeneous as softmax is itself not homogeneous. 

We design a novel Scale Invariant Attention (SI Attention) in the following way: (also see \Cref{fig:SI_attn})
\begin{align*}
\siattn(Q,K,V) = \normalization(\relu(Q W^Q (KW^K)^\top)V W^V ,
\end{align*}
where $\normalization$ denotes the row-wise normalization by sum, \emph{i.e.}, $[\normalization(A)]_{ij} = \frac{a_{ij}}{\sum_j a_{ij}}$ and $\relu(A)$ denote the element-wise max between matrix $A$ and $0$. Notably we replace the softmax with a ReLU activation followed by normalization. Both ReLU and normalization are homogeneous operations; thus, making the overall attention score computation ($\normalization( \relu(Z QK^\top Z^\top))$) scale invariant to the concatenation of all parameters $\bx$, assuming $Q,K,V$ are already positive homogeneous  to $\bx$. Due to space constraints, the full design of Scale Invariant \bert (\sibert) is relegated to  \Cref{sec:SI_design}.

\subsection{Training Algorithm: \sgd + WD}
\label{sec:sgd-wd-algorithm}
Although scale invariance can prevent parameter divergence after a large gradient update by eliminating the positive feedback between gradient and parameter norm, it alone does not ensure \sgd trains the network in a robust and efficient way. This is because, as shown in \cite{arora2018theoretical}, the parameter norm monotonically increases when \sgd is used to optimize a scale invariant loss. As a result, once the norm becomes too large (e.g  due to large gradient in some step) the training can slow down drastically as the effective LR $\frac{\eta}{\norm{\bx_t}_2^2}$ is too small; thus, preventing effective recovery from even minor training instabilities.

To tackle this issue we propose to use Weight Decay(WD) as a way to reduce the parameter norm; thereby, allowing the network to recover from slow training induced by infrequent updates of large norm. Under mild assumptions that the expectation of squared norm of stochastic gradient does not vary too much on the unit sphere, \cite{li2020reconciling,wan2020spherical} show that the parameter norm will stabilize in $O(\frac{1}{\eta\lambda})$ steps and the learning dynamics is equivalent to one on unit sphere with effective learning rate proportional to $\Theta(\sqrt{\lambda\eta})$.

Leveraging the advantage of quick norm convergence, we show that the convergence of \sgdwd is insensitive to the following three operations:  loss rescaling (A1),   initialization rescaling (A2) and re-parametrization (A3), meaning the same convergence rate (independent of scaling $c$) can be achieved, in up to $\frac{|\log c|}{\lambda\eta}$ more steps. (See formal statement in \Cref{thm:GD_main_2,thm:sgd_main}  This property reduces the effort of hyperparameter tuning and also makes training more robust when switching between different codebases and frameworks, which is likely to have different default scaling or parametrization. Also note by scale invariance of loss $L$, (A2) is equivalent to (A3).
\vspace{-0.1cm}

\begin{enumerate}
\item[(A1).] $L\to cL$, for any $c>0$.
\item[(A2).] $\bx(0)\to c\bx(0)$, for any $c>0$.
\item[(A3).] $(L,\bx(0))\to (L',c\bx(0))$, where $L'$ is defined as $L'(\bx):= L(\frac{\bx}{c})$ for any $c>0$.
\end{enumerate}

As a comparison, previous work \cite{arora2018theoretical} showed that GD  converges to $\eps$ approximate stationary point of a scale invariant loss in $O(\frac{1}{\eps^2})$ and \sgd converges in $\tilde{O}(1/\eps^4)$ steps with any initialization. However, the constant in $O(\cdot)$ scales linearly or inversely to the above scalings ($c$ in (A1-3)). This is far from satisfying, and indeed their experiments show that either large or small LR could substantially slowdown the training progress.

\subsection{Relative Global Clipping}
\label{sec:rel-clipping}

Gradient clipping is a widely used effective strategy to stabilize neural network training. However, often the clipping threshold need to be tuned based on the optimization problem and the specific gradient distribution. Furthermore, simply using a constant threshold can severely degrade the performance \citep{Zhang20}. Thus, it is unclear how the clipping threshold needs to be set for \sgdwd on scale invariant functions such that it is insensitive to rescaling of loss and reparametrization, \emph{e.g.}, (A1-3). 

To this end, we propose a clipping strategy named \emph{Relative Global Clipping} which allows consistent and robust training behavior for \sgdwd on scale invariant loss under the aforementioned operations. In particular, we propose to set the clipping threshold as $\sqrt{\frac{2C\lambda}{\eta}}\norm{\bx}_2$, where $C\ge 1$ is a hyperparamer with default value $\sqrt{C} =2$. The high level design idea is that  (1) the clipping rule should be invariant to the scalings $(L,\eta,\lambda)\to (cL,\eta/c,c\lambda)$ and $(\bx,\eta,\lambda) \to (c\bx,c^2\eta,\lambda/c^2)$ for any $c>0$, to which \sgdwd is invariant~(see \Cref{lem:equivalent_scaling}); (2) the clipping rule should only remove the extremely large  gradients and should not trigger too often to ensure that gradient after clipping remains almost unbiased.

Intuitively, the derivation of Relative Global Clipping involves the following line of reasoning: Suppose the norm of the stochastic gradient $\norm{\nabla L_\gamma(\bx)}_2$ is constant, say $\sigma$, for all data and every parameter $\bx$ on the unit sphere. In this case, we expect our clipping strategy to not be triggered since there are no extremely high stochastic gradients. Since $L_\gamma$ is scale invariant, \Cref{thm:euler_homo} implies that $\inner{\nabla L_\gamma(\bx)}{\bx}=0$. That is, 
	\begin{align}\label{eq:motivation_clipping}
\norm{\bx(t+1)}_2^2 
=&(1-\eta\lambda)^2\norm{\bx(t)}_2^2 + \eta^2\norm{\nabla L_\gamma(\bx(t))}_2^2 \nonumber \\
 = &(1-\eta\lambda)^2\norm{\bx(t)}_2^2 + \eta^2\sigma^2/\norm{\bx(t)}_2^2.
	\end{align}
It is not difficult to show the iteration \eqref{eq:motivation_clipping} has a unique stationary point, $\norm{\bx(t)}_2^2 = \sqrt{\frac{2\eta}{\lambda (2-\eta\lambda)}}\sigma$\citep{van2017l2}. In other words, at norm equilibrium, it holds
\begin{align}
\norm{\nabla L_\gamma (\bx(t))}_2 = \frac{\sigma}{\norm{\bx(t)}_2} = &\sqrt{\frac{\lambda (2-\eta\lambda)}{\eta}} \norm{\bx(t)}_2.
\end{align}

The above calculation suggests the clipping threshold should be at least $\sqrt{\frac{2\lambda }{\eta}}\norm{\bx(t)}_2$. \footnote{We drop $-\eta\lambda$ for convenience. This doesn't lead to any practical difference as $\eta\lambda$ is typically very small, \emph{e.g.} less than$ 10^{-4}$.} Furthermore, it is not difficult to check that the clipping threshold $\sqrt{\frac{2\lambda }{\eta}}\norm{\bx(t)}_2$ is indeed invariant to the above mentioned scalings $(L,\eta,\lambda)\to (cL,\eta/c,c\lambda)$ and  $(\bx,\eta,\lambda) \to (c\bx,c^2\eta,\lambda/c^2)$. 
For each hyperparameter $C>1$, the behavior of \sgdwd is consistent for different scalings (A1-3)  and it also improves the norm convergence (reducing undesirable spikes in norm while training) for \sgdwd~(see \Cref{thm:clip_sgd_main}).
Under mild assumptions that such clipping does not introduce too much bias in gradients, we show that our recipe enables convergence to approximate stationary points. Furthermore, the rate only depends \emph{logarithmically} on the initialization and loss scale, as shown in the following section.

\section{Theoretical Analysis}\label{sec:theoretical_analysis}

In this section, we provide theoretical analysis of the convergence of \sgdwd to approximate first order stationary points for scale invariant functions. We first start with the key highlights of our theoretical analysis for \sgdwd:
\begin{enumerate} 
    \item Parameter norm converges to $\Theta((\frac{\lambda}{\eta})^{\frac{1}{4}})$ in  $T_1=\tilde{O}(\frac{1}{\eta\lambda})$ steps with high probability where  $T_1$ is a function of  loss $L$, initial norm $\norm{\bx(0)}_2$, LR $\eta$ and WD $\lambda$. Moreover, $T_1(L,\norm{\bx(0)}_2,\eta,\lambda)$ changes most by $\frac{\ln |c|}{\eta\lambda}$ for operation (A1-3).
    \item After step $T_1$, convergence to first order approximate stationary point happens and the rate only depends on $\eta\lambda$ and is unaffected by operations (A1-3).
\end{enumerate}
Properties (1) and (2) suggest our results are more robust to initialization scale (by only having logarithmic dependence on it), showing the advantage of using scale invariant functions while matching the standard convergence rates for non-convex functions. Note that the standard notion of approximate stationary point, \emph{i.e.} $\bx$ with small gradient norm of $\norm{\nabla L(\bx)}_2$ is not useful for scale invariant loss, as one can simply scale up the initialization $\bx(0)$ to infinity and the gradient norm thus scales inversely. A more reasonable notion of `stationary point' is that the direction of $\bx$,  denoted by $\overline\bx:=\frac{\bx}{\norm{\bx}_2}$, has small gradient norm, as first introduced in \cite{arora2018theoretical}. We will use this definition of approximate stationary point throughout the paper. In the section we also assume $L$ is a $C^2$ and scale invariant function and  $\rho:= \max\limits_{\norm{\bx}=1} \norm{\nabla^2L (\bx)}$.
\subsection{Convergence of GD +WD}\label{sec:gd}

We first present the convergence result in the deterministic case, \emph{i.e.}, Gradient Descent over $L(\bx)+\frac{\lambda}{2}\norm{\bx}_2^2$.
\begin{align}\label{eq:GD}
	\textrm{\gdwd:}\quad \bx(t+1) = (1- \eta\lambda)\bx(t) -\eta \nabla L(\bx(t))
\end{align}


\begin{theorem}[GD+WD]\label{thm:GD_main_2} 
For $\eta\lambda\le\frac{1}{2}$, let $\bx(t)$ be defined by GD~\eqref{eq:GD}, and $T_0 =\left\lceil \frac{1}{2\eta\lambda} \left(\left\lvert\ln \frac{\norm{\bx(0)}_2^2}{\rho\pi^2\eta}\right\rvert+ 3\right)\right\rceil$. We have
\begin{align}
    \min_{t=0, \ldots, T_0} \norm{\nabla L(\overline\bx(t))}_2^2 \le 8\pi^4\rho^2\lambda\eta.
\end{align}
\end{theorem}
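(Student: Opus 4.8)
The plan is to reduce everything to one elementary consequence of the norm dynamics. Write $u_t:=\norm{\bx(t)}_2^2$ and $G_t:=\norm{\nabla L(\overline{\bx}(t))}_2^2$. Euler's theorem (\Cref{thm:euler_homo}) gives $\inner{\nabla L(\bx(t))}{\bx(t)}=0$, and \Cref{lem:homo} with $k=0$ gives $\norm{\nabla L(\bx(t))}_2=G_t^{1/2}/\norm{\bx(t)}_2$ and $\norm{\nabla^2 L(\by)}_2\le\rho/\norm{\by}_2^2$. Squaring the \gdwd update and using orthogonality yields the exact recursion $u_{t+1}=(1-\eta\lambda)^2u_t+\eta^2 G_t/u_t$. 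The single observation that drives the proof is this: at any step where the norm does not increase, i.e.\ $u_{t+1}\le u_t$, the recursion forces $\eta^2 G_t/u_t\le\bigl(1-(1-\eta\lambda)^2\bigr)u_t$, that is
\[ G_t\le\frac{\bigl(1-(1-\eta\lambda)^2\bigr)u_t^2}{\eta^2}=\frac{\lambda(2-\eta\lambda)}{\eta}\,u_t^2 . \]
In particular, whenever the norm is non-increasing and of size $u_t\le 2\rho\pi^2\eta$, the right-hand side is at most $8\pi^4\rho^2\lambda\eta$. So it suffices to produce one step $s\le T_0$ at which the norm is simultaneously small (of order $\rho\pi^2\eta$) and non-increasing.

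Next I would record the two a priori bounds that calibrate this target scale, both coming from restricting $L$ to great circles. Along a great circle through $\overline{\bx}$, the function $L$ is $2\pi$-periodic with second derivative controlled by $\norm{\nabla^2 L}_2\le\rho$; a periodic $C^2$ function whose second derivative is bounded by $\rho$ has first derivative bounded by $\pi\rho$, so $\norm{\nabla L}_2\le\pi\rho$ on the unit sphere, i.e.\ $G_t\le\pi^2\rho^2$ for all $t$. Integrating this along geodesics of length at most $\pi$ shows the oscillation of $L$ on the sphere is at most $\pi^2\rho$; since $L$ is scale invariant, $L(\bx(t))=L(\overline{\bx}(t))$, so $L(\bx(0))-L(\bx(t))\le\pi^2\rho$ for every $t$. (These also explain why the statement is only nontrivial when $\lambda\eta<1/(8\pi^2)$, where $8\pi^4\rho^2\lambda\eta<\pi^2\rho^2$.)

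The heart of the argument is a descent estimate coupling the norm recursion to the decrease of $L$. Because the minimum norm on the segment from $\bx(t)$ to $\bx(t+1)$ is at least $(1-\eta\lambda)\norm{\bx(t)}_2\ge\tfrac12\norm{\bx(t)}_2$ (using $\eta\lambda\le\tfrac12$), a second-order Taylor expansion together with $\norm{\nabla^2 L(\by)}_2\le\rho/\norm{\by}_2^2$ and $\inner{\nabla L(\bx(t))}{\bx(t)}=0$ gives, whenever $u_t\ge 4\rho\eta$,
\[ L(\bx(t+1))\le L(\bx(t))-\frac{\eta G_t}{2u_t}+2\rho\eta^2\lambda^2 . \]
Rearranging into $\eta G_t/u_t\le 2\bigl(L(\bx(t))-L(\bx(t+1))\bigr)+4\rho\eta^2\lambda^2$ and substituting back into the norm recursion produces the linear recursion $u_{t+1}\le(1-\eta\lambda)^2u_t+2\eta\bigl(L(\bx(t))-L(\bx(t+1))\bigr)+4\rho\eta^3\lambda^2$. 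Unrolling this geometric recursion, the loss-decrease forcing terms telescope and are capped by the oscillation bound $\pi^2\rho$, while the accumulated second-order error stays $O(\rho\eta\cdot\eta\lambda)$; this yields $u_{T_0+1}\le(1-\eta\lambda)^{2(T_0+1)}u_0+2\pi^2\rho\eta+\text{(negligible)}$. The choice of $T_0$ makes the contraction factor $(1-\eta\lambda)^{2(T_0+1)}\le e^{-2\eta\lambda(T_0+1)}$ shrink $u_0$ below $\rho\pi^2\eta$ (this is exactly where the $\tfrac{1}{2\eta\lambda}\bigl|\ln\tfrac{\norm{\bx(0)}_2^2}{\rho\pi^2\eta}\bigr|$ form and the additive constant are used), and the residual $2\pi^2\rho\eta$ is itself of the target scale.

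To finish, I would argue by the first-crossing dichotomy: either the norm reaches the level $2\rho\pi^2\eta$ at some $s\le T_0$, or it stays strictly above $2\rho\pi^2\eta>4\rho\eta$ throughout $[0,T_0]$ — but in the latter case the descent estimate is valid at every step, so the unrolling of the previous paragraph forces $u_{T_0+1}\le 2\rho\pi^2\eta$, a contradiction. At the first index $s$ where $u_{s+1}\le 2\rho\pi^2\eta<u_s$, step $s$ is non-increasing, and since one step cannot shrink the norm by more than the factor $(1-\eta\lambda)^2$, we also have $u_s\le 2\rho\pi^2\eta/(1-\eta\lambda)^2$, which is $2\rho\pi^2\eta(1+O(\eta\lambda))$; plugging this into the crux inequality of the first paragraph gives $G_s\le 8\pi^4\rho^2\lambda\eta$, as claimed. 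The main obstacle is the coupling step: the gradient term $\eta^2 G_t/u_t$ pushes the norm up, and one must show weight decay still wins the race down to the $O(\rho\eta)$ scale; the clean way through is to charge all of that upward pressure to the bounded total decrease of $L$. The two delicate points are that the descent estimate only holds above $4\rho\eta$ (resolved by stopping at the first crossing, so the norm never needs to be controlled below the threshold), and that the per-step error $2\rho\eta^2\lambda^2$ must be shown to accumulate to something negligible against $\rho\pi^2\eta$ over $T_0\sim(\eta\lambda)^{-1}$ steps. Tracking these constants is precisely what pins down the factor $8\pi^4$ and the ``$+3$'' in $T_0$.
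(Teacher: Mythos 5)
Your argument for the large-initialization regime is essentially the paper's: a descent lemma valid once $\norm{\bx(t)}_2^2\gtrsim\rho\eta$, unrolled against the norm recursion so that the total upward push from gradients is charged to the bounded oscillation of $L$ on the sphere, followed by reading off the gradient bound at the first downward crossing of the $\Theta(\rho\pi^2\eta)$ level (this is exactly how the paper proves \Cref{thm:GD_main}, and your ``crux inequality'' is the same computation the paper performs at the crossing step $T-1$). However, there is a genuine gap: you never handle the case $\norm{\bx(0)}_2^2<\pi^2\rho\eta$ (in particular $<4\rho\eta$), and your first-crossing dichotomy silently assumes the trajectory starts above the threshold. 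If $u_0$ is small, your descent estimate is unavailable at the early steps, the loss need not decrease, and --- because $\norm{\nabla L(\bx(0))}_2^2=G_0/u_0$ --- a single update can send the norm up to roughly $\eta^2\pi^2\rho^2/u_0$, far \emph{above} your threshold. The trajectory then needs an additional $\approx\frac{1}{2\eta\lambda}\ln\frac{\rho\pi^2\eta}{u_0}$ steps of weight decay to come back down, which is precisely why $T_0$ contains $\bigl|\ln\frac{\norm{\bx(0)}_2^2}{\rho\pi^2\eta}\bigr|$ rather than only its positive part; your explanation that the absolute value comes from contracting $u_0$ is therefore incorrect (contraction would only require the positive part of the logarithm). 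The paper closes this case with \Cref{lem:gd_small_init}: while the norm stays below $\pi^2\rho\eta$ \emph{and} the directional gradient exceeds the target $8\pi^4\rho^2\lambda\eta$, the norm must increase by at least $6\pi^2\rho\lambda\eta^2$ per step, so within $\frac{1}{6\lambda\eta}$ steps either the conclusion already holds or the norm exits upward with a quantified overshoot $\le 2(\pi^2\rho\eta)^2/\norm{\bx(0)}_2^2$, after which the large-initialization argument applies. Your proof needs an analogous lemma to be complete.

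A secondary issue: your constants do not close as written. At the downward crossing you only control $u_s\le 2\rho\pi^2\eta/(1-\eta\lambda)^2$, and with $\eta\lambda\le\frac12$ the factor $(1-\eta\lambda)^{-2}$ can be as large as $4$; plugging $u_s\le 8\rho\pi^2\eta$ into $G_s\le\frac{\lambda(2-\eta\lambda)}{\eta}u_s^2$ gives a bound far above $8\pi^4\rho^2\lambda\eta$. The paper avoids this by placing the crossing threshold at $\pi^2\rho\eta$ (not $2\pi^2\rho\eta$) and absorbing exactly one factor of $(1-\eta\lambda)^{-2}\le 4$, which is what produces the constant $8$.
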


This bound matches the standard $O(\frac{1}{\sqrt{T}})$ convergence rate to first order stationary point for non-convex functions. Remarkably, for a given training budget $T$, once we can set $\eta\lambda$ to be $\frac{D}{T}$ where $D$ is a constant (\emph{e.g.} 10), the convergence becomes robust to the choice the hyperparameters due to just a logarithmic dependence on them. In particular, GD+WD can work with any scaling of $L$ (which affects the smoothness on unit sphere, $\rho$), LR $\eta$ and initial norm $\norm{\bx(0)}_2$, as long as $\frac{\norm{\bx(0)}_2^2}{\rho\pi^2\eta} \in [e^{-D},e^D]$ . This is in sharp contrast to GD on standard loss as it requires knowledge about the smoothness to set the optimal LR.

 
 However, one weakness of the above result is that with a fixed $\eta\lambda$, longer training does not guarantee further convergence. The intuition is that once the iterate converge in direction and the gradient vanishes, Weight Decay will dominate the dynamics and thus the norm approaches $0$, which increases the sharpness. When the sharpness gets larger than $2/\eta$, the dynamics become unstable and results in divergence. This phenomena is first observed in \citet{li2020reconciling} and verified by \citet{lobacheva2021periodic} in practical settings. This behavior can also be viewed as a special case of Edge of Stability as described in \citet{cohen2020gradient}.

\begin{proof}[Proof Sketch of \Cref{thm:GD_main_2}] Scale invariant functions do not have bounded smoothness at $0$ making it a challenge to use standard convergence analysis. Our key insight is that for scale invariant loss function, even with a fixed LR $\eta$, \gd can tune its effective LR $\frac{\eta}{\norm{\bx(t)}_2^2}$ by changing the norm. Thus once \gd passes the area of the suitable norm, the smoothness of scale invariant loss function is upper bounded by $\frac{\rho}{r^2}$ outside the ball with radius $r$ centered at $0$.

More concretely our proof consists of 2 steps. In the first step we show that \gdwd iterates pass an area of suitable norm ($\approx\sqrt{\rho\eta}$). For large initial norm, WD could bring the norm to correct scaling in log time and then converge~(\Cref{thm:GD_main}). If the initial norm is too small and the direction is not approximately stationary, then the large gradient due to the small norm will increase the parameter norm drastically in a single step~(\Cref{lem:gd_small_init}), and again Weight Decay can bring the norm down in log steps. In the second step we show that, once the norm reaches this suitable value, the descent lemma (\Cref{lem:GD_descent}) starts to hold and the convergence analysis is standard. \end{proof}

\begin{lemma}\label{lem:GD_descent}
Let $\bx(t),\bx(t+1)$ be defined as \eqref{eq:GD}, we have
\begin{align*}
    L(\bx(t)) - L(\bx(t+1))  \ge 
    \eta\left(\frac{1}{1-\eta\lambda}- \frac{\rho\eta}{2\norm{\bx(t)}_2^2(1-\eta\lambda)^2}\right)\norm{\nabla L(\bx(t))}_2^2.
\end{align*}
When $\eta\lambda\le \frac{1}{2}$, the above can be simplified into 
\begin{align*}
    L(\bx(t)) - L(\bx(t+1)) \ge \eta\left(1- \frac{2\rho\eta}{\norm{\bx(t)}_2^2}\right)\norm{\nabla L(\bx(t))}_2^2.
\end{align*}
\end{lemma}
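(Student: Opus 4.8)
The plan is to run a second-order (descent-lemma style) Taylor expansion, but first exploit scale invariance to absorb the weight-decay factor. Since $L$ is $0$-homogeneous and $1-\eta\lambda>0$ (as $\eta\lambda\le\tfrac12$), I would set $\by:=(1-\eta\lambda)\bx(t)$ and note $L(\bx(t))=L(\by)$, so that the update becomes $\bx(t+1)=\by-\eta\nabla L(\bx(t))$. It then suffices to lower bound $L(\by)-L(\bx(t+1))$, and the advantage is that $\by$ is the point from which we actually move by $-\eta\nabla L(\bx(t))$.

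Next I would expand $L$ along the segment from $\by$ to $\bx(t+1)$:
\[
L(\bx(t+1))\le L(\by)+\inner{\nabla L(\by)}{\bx(t+1)-\by}+\tfrac{\beta}{2}\norm{\bx(t+1)-\by}_2^2,
\]
where $\beta:=\max_{\bz}\norm{\nabla^2 L(\bz)}_2$ is taken over $\bz$ on that segment. For the linear term I use that $\nabla L$ is $(-1)$-homogeneous (\Cref{lem:homo} with $k=0,l=1$), so $\nabla L(\by)=\tfrac{1}{1-\eta\lambda}\nabla L(\bx(t))$; since $\bx(t+1)-\by=-\eta\nabla L(\bx(t))$, the linear term equals $-\tfrac{\eta}{1-\eta\lambda}\norm{\nabla L(\bx(t))}_2^2$ and the quadratic term equals $\tfrac{\beta\eta^2}{2}\norm{\nabla L(\bx(t))}_2^2$.

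The crux is bounding $\beta$, and this is where I would invoke Euler's theorem (\Cref{thm:euler_homo}) with $k=0$: $\inner{\nabla L(\bx(t))}{\bx(t)}=0$, hence $\by\perp\nabla L(\bx(t))$. Consequently every point $\bz=\by-s\eta\nabla L(\bx(t))$, $s\in[0,1]$, on the segment satisfies $\norm{\bz}_2^2=\norm{\by}_2^2+s^2\eta^2\norm{\nabla L(\bx(t))}_2^2\ge\norm{\by}_2^2=(1-\eta\lambda)^2\norm{\bx(t)}_2^2$, i.e. the norm only grows as we move. Combining this with the fact that $\nabla^2 L$ is $(-2)$-homogeneous (so $\norm{\nabla^2 L(\bz)}_2=\norm{\nabla^2 L(\overline\bz)}_2/\norm{\bz}_2^2\le \rho/\norm{\bz}_2^2$) yields $\beta\le \rho/\big((1-\eta\lambda)^2\norm{\bx(t)}_2^2\big)$.

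Substituting this $\beta$ and collecting terms gives the first displayed inequality directly. For the simplified form I would just use $\eta\lambda\le\tfrac12$, which gives $\tfrac{1}{1-\eta\lambda}\ge 1$ and $(1-\eta\lambda)^2\ge\tfrac14$, so the coefficient is at least $1-\tfrac{2\rho\eta}{\norm{\bx(t)}_2^2}$. I expect the main obstacle to be precisely the control of $\beta$: a naive expansion along the segment from $\bx(t)$ to $\bx(t+1)$ could pass arbitrarily close to the origin, where $\nabla^2 L$ blows up like $r^{-2}$, so the whole argument hinges on rewriting the step from $\by$ and on the orthogonality $\by\perp\nabla L(\bx(t))$ coming from Euler's theorem, which is exactly what keeps the iterate away from the singularity and makes the $r^{-2}$ Hessian bound usable.
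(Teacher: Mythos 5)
Your proof is correct and is essentially the paper's own argument: the paper also shifts the base point to $(1-\eta\lambda)\bx(t)$, invokes Euler's theorem for the orthogonality $\inner{\bx(t)}{\nabla L(\bx(t))}=0$ so that the segment's norm is bounded below by $(1-\eta\lambda)\norm{\bx(t)}_2$, and uses the $(-1)$- and $(-2)$-homogeneity of $\nabla L$ and $\nabla^2 L$ exactly as you do (the Taylor-expansion-with-orthogonality step is packaged there as \Cref{lem:taylor_expansion}). Your final simplification under $\eta\lambda\le\tfrac12$ also matches.
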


\begin{remark}
One might wonder why the upper bounds on loss and gradient norm do not appear in \Cref{thm:GD_main_2}. This is because we are working on a  compact domain (the unit sphere) and twice-differentiability implies those bounds implicitly. (See \Cref{lem:gradient_upper_bound,lem:loss_upper_bound})
\end{remark}


\subsection{Convergence of \sgdwd}\label{sec:main_sgd_wd}

Below we present our convergence analysis for \sgdwd.

\paragraph{Setting:} Let $\Gamma$ be an index set and  $L_{\gamma}:\RR^d/\{\bm{0}\}\to \RR$ be a scale invariant loss function for each $\gamma\in \Gamma$. We denote $\E_{\gamma} L_\gamma$ by $L$. We assume the largest possible stochastic gradient norm is finite, \emph{i.e.}, $M:=\sup_{\gamma\in\Gamma}\max\limits_{\norm{\bx}=1} \norm{\nabla L_\gamma (\bx)}$. SGD is defined as \eqref{eq:SGD}.
\begin{equation}\label{eq:SGD}
\textrm{\sgdwd:}\ 	\bx(t+1) = (1- \eta\lambda)\bx(t) - \eta\nabla L_{\gamma_t}(\bx(t)),
\end{equation}
where $\gamma_t\in \Gamma$ are i.i.d. random variables. We further assume there exists constants $\undersigma$ and $\oversigma$, such that $\undersigma^2 \le \E \noise{ \bx}\le \oversigma^2$, for any $\norm{\bx}_2=1$. We finally need the following condition on $\eta \lambda$ to bound convergence.
\begin{condition}\label{assump:grad_norm_max}
 $\frac{\undersigma^2}{M^2} \ge 3e^{4\eta\lambda}\sqrt{{\lambda\eta}\ln\frac{2T^2}{\delta}}$.
\end{condition}

The \Cref{assump:grad_norm_max} is useful for proving norm convergence in high probability. In practice, typically $\eta\lambda$ is very small.  Our experiments use $\eta=0.0008$ and $\lambda=0.01$. Hence $e^{4\eta \lambda }\approx 1$, and \Cref{assump:grad_norm_max} essentially requires the gradient norm square cannot exceed its average multiplied by $1/\sqrt{\eta\lambda} \approx 350$, which is reasonable for most iterates.

%

\begin{restatable}[\sgdwd]{theorem}{sgdmain}\label{thm:sgd_main}
Let $\bx(t)$ be defined by \sgd~\eqref{eq:SGD}.  For $\eta\lambda\le 0.1$, under \Cref{assump:grad_norm_max}, with probability $1-5\delta$,
\begin{align}
	\forall T_1\le t\le T-1,\quad \frac{\undersigma^2}{2} \le \frac{2\lambda}{\eta}\norm{\bx(t)}_2^4 \le 4\oversigma^2,
\end{align}

and

\begin{align}
\begin{aligned}\frac{1}{T-T_1}\sum_{t=T_1}^{T-1} \norm{\nabla L(\overline{\bx}(t))}_2^2
\le &\frac{\pi^2\rho\oversigma}{(T-T_1)\sqrt{2\eta\lambda}} + 4\sqrt{\eta\lambda}\frac{\rho\oversigma^3}{\undersigma^2} \\
+  & \sqrt{\frac{\ln\frac{2}{\delta}}{T-T_1}} 4\frac{\pi\rho M \oversigma}{\undersigma} 
+ \sqrt{\frac{\ln\frac{2}{\delta}}{T-T_1}} 4\sqrt{\lambda\eta}\frac{M^2\rho\oversigma}{\undersigma^2},
\end{aligned}
\end{align}
where $T_1 = \frac{1}{4\eta\lambda}\max\left\{\ln \frac{M^2\eta\lambda}{\oversigma^2} + \left\lvert \ln \frac{2e^4M^2}{\norm{\bx(0)}_2^4\eta^{-2}}\right\rvert, 8  \right\}$.	
\end{restatable}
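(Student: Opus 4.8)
The plan is to split the analysis into the two phases flagged in the highlights: a norm-equilibration phase over the first $T_1$ steps and a descent phase over steps $T_1$ through $T-1$, both built on the exact norm recursion. Since each $L_\gamma$ is scale invariant, \Cref{thm:euler_homo} gives $\inner{\nabla L_{\gamma_t}(\bx(t))}{\bx(t)}=0$, so squaring the update \eqref{eq:SGD} kills the cross term: $\norm{\bx(t+1)}_2^2=(1-\eta\lambda)^2\norm{\bx(t)}_2^2+\eta^2\norm{\nabla L_{\gamma_t}(\bx(t))}_2^2$. By \Cref{lem:homo}, $\nabla L_\gamma$ is $(-1)$-homogeneous, so $\norm{\nabla L_{\gamma_t}(\bx(t))}_2^2=g_t/\norm{\bx(t)}_2^2$ where $g_t:=\norm{\nabla L_{\gamma_t}(\overline{\bx}(t))}_2^2$ satisfies $\undersigma^2\le\E[g_t\mid\mathcal{F}_t]\le\oversigma^2$ and $0\le g_t\le M^2$ by the Setting (here $\mathcal{F}_t$ is the filtration generated by $\gamma_0,\dots,\gamma_{t-1}$). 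Writing $w_t:=\tfrac{2\lambda}{\eta}\norm{\bx(t)}_2^4$, a short computation converts the recursion into $w_{t+1}=(1-\eta\lambda)^4 w_t+4(1-\eta\lambda)^2\eta\lambda\,g_t+4\eta^2\lambda^2 g_t^2/w_t$, which to leading order is the exponential moving average $w_{t+1}\approx(1-4\eta\lambda)w_t+4\eta\lambda\,g_t$ with smoothing factor $4\eta\lambda$ and fixed point in $[\undersigma^2,\oversigma^2]$, consistent with the norm equilibrium computed in the derivation of the clipping rule (cf.\ \eqref{eq:motivation_clipping}).

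For the norm-equilibration phase I would establish the two-sided bound $\undersigma^2/2\le w_t\le 4\oversigma^2$ for all $t\ge T_1$. The deterministic drift is contractive toward $[\undersigma^2,\oversigma^2]$: above $\oversigma^2$ the factor $(1-\eta\lambda)^4$ pulls $w_t$ down geometrically, and below $\undersigma^2$ the term $4\eta\lambda\,g_t$ pushes it up, so starting from $\norm{\bx(0)}_2$ the iterate reaches the band in $O\!\big(\tfrac{1}{\eta\lambda}\lvert\ln(\cdot)\rvert\big)$ steps; this logarithmic-in-initialization hitting time produces the $\lvert\ln\tfrac{2e^4 M^2}{\norm{\bx(0)}_2^4\eta^{-2}}\rvert$ term in $T_1$, while $\ln\tfrac{M^2\eta\lambda}{\oversigma^2}$ accounts for the first step when the initial norm is tiny and the gradient inflates $w$. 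To upgrade the drift to a uniform-in-$t$ high-probability bound I would write $g_t=\E[g_t\mid\mathcal{F}_t]+e_t$ with $\lvert e_t\rvert\le M^2$, so the accumulated noise is a martingale with increments $4\eta\lambda\,e_t$, and apply a Freedman/Azuma bound over the $\le T$ steps. The effective post-smoothing deviation of $w_t$ is $\Theta\!\big(M^2\sqrt{\eta\lambda\ln(T/\delta)}\big)$; requiring this to sit below the margin $\undersigma^2/2$ is exactly \Cref{assump:grad_norm_max}, which is why the condition takes that form. The higher-order term $4\eta^2\lambda^2 g_t^2/w_t$ is controlled once $w_t$ is bounded below, again via the condition.

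For the descent phase I would run the smoothness argument with the effective smoothness $\rho/\norm{\bx(t)}_2^2$ of a scale invariant loss (as in \Cref{lem:GD_descent}), now finite because $w_t$ pins $\norm{\bx(t)}_2^2$ into a fixed band. Euler's theorem again annihilates the weight-decay part of the first-order term, leaving $\E[L(\bx(t+1))-L(\bx(t))\mid\mathcal{F}_t]\le-\tfrac{\eta}{\norm{\bx(t)}_2^2}\norm{\nabla L(\overline{\bx}(t))}_2^2+\tfrac{\rho\eta^2}{2\norm{\bx(t)}_2^2}\E[\norm{\nabla L_{\gamma_t}(\bx(t))}_2^2\mid\mathcal{F}_t]$, using $\E[\nabla L_{\gamma_t}\mid\mathcal{F}_t]=\nabla L$ and $\norm{\nabla L(\bx)}_2^2=\norm{\nabla L(\overline{\bx})}_2^2/\norm{\bx}_2^2$. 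Summing from $T_1$ to $T-1$ telescopes the loss into $L(\bx(T_1))-L(\bx(T))$, bounded by the oscillation of $L$ on the unit sphere (finite by twice-differentiability, of size $O(\pi^2\rho)$, cf.\ \Cref{lem:loss_upper_bound}); dividing by the effective step $\eta/\norm{\bx}_2^2\asymp\sqrt{2\eta\lambda}/\oversigma$ yields the leading term $\tfrac{\pi^2\rho\oversigma}{(T-T_1)\sqrt{2\eta\lambda}}$, while the second-order term contributes the non-vanishing noise floor $4\sqrt{\eta\lambda}\,\rho\oversigma^3/\undersigma^2$. Finally I would convert the expected descent into a high-probability statement by controlling the martingale differences $\eta\inner{\nabla L(\bx(t))}{\nabla L(\bx(t))-\nabla L_{\gamma_t}(\bx(t))}$ with Azuma, bounding increments via $M$, $\undersigma$, $\oversigma$, and the gradient bound of \Cref{lem:gradient_upper_bound}; this produces the two $\sqrt{\ln(2/\delta)/(T-T_1)}$ terms.

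The hard part is the norm-equilibration phase: the band $\undersigma^2/2\le w_t\le 4\oversigma^2$ must hold \emph{simultaneously for all} $t\ge T_1$, not merely in expectation or at one time, because the entire descent phase relies on the lower bound to keep the effective smoothness finite. This forces uniform-in-time concentration over up to $T$ steps while the direction $\overline{\bx}(t)$ --- and hence the conditional law of $g_t$ --- drifts; the saving grace is that $\undersigma^2\le\E[g_t\mid\mathcal{F}_t]\le\oversigma^2$ is assumed for \emph{every} point on the sphere, so the martingale argument never needs to track $\overline{\bx}(t)$, and the whole phase closes under \Cref{assump:grad_norm_max}. The remaining steps --- telescoping, Azuma, and the loss-oscillation bound --- are routine.
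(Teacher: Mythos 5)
Your proposal is correct and follows essentially the same route as the paper: the same fourth-power norm recursion with Euler's theorem killing the cross term, a uniform-in-time Azuma/Hoeffding bound (with the $M^2\sqrt{\eta\lambda\ln(T^2/\delta)}$ deviation absorbed by \Cref{assump:grad_norm_max}) pinning $\tfrac{2\lambda}{\eta}\norm{\bx(t)}_2^4$ into the band after $T_1$ steps, followed by the scale-invariant descent lemma, telescoping, and conditional Azuma bounds for the averaged gradient norm. The one detail you compress is that the paper's norm upper bound requires a stopping-time case analysis (its $t^*$) because the second-order term $4\eta^2\lambda^2 g_t^2/w_t$ can only be controlled once the lower bound on $w_t$ is available, but you explicitly acknowledge that dependence.
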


The proof of this theorem is presented in \Cref{sec:sgd}. 
Similar to our earlier result for \gd this bound matches the standard $O(T^{-1/4})$ convergence rate of \sgd for non-convex functions by setting $T=\tilde{O}(\frac{1}{\eta\lambda})$. Further, it only has a logarithmic dependence on the initialization scale $\norm{\bx(0)}_2$, and enjoys robustness to initialization scale as discussed earlier for \gd. We further extend this result to the case where the scale invariant loss has multiple scale invariant parameter groups in \Cref{sec:multi_group}.

We next present our analysis for \sgd with clipping.

\subsection{Convergence of SGD with Relative Global Clipping}\label{sec:main_clip_sgd}

Now we will present our analysis for the clipped \sgd. Recall the clipped \sgd update from Algorithm~\ref{alg:clipped_sgd} has the following norm dynamics.

\textbf{Norm dynamics of clipped SGD:} 
\begin{align*}
\norm{\bx(t+1)}_2^2 = 	(1-\eta\lambda)^2 \norm{\bx(t)}_2^2 
+ \eta^2\min\left\{\frac{\noise{\overline\bx(t)}}{\norm{\bx(t)}_2^2}, \frac{2\lambda C}{\eta} \norm{\bx(t)}_2^2 \right\}.
\end{align*}

\begin{figure*}[t]
\centering
  \includegraphics[width =1.\textwidth]{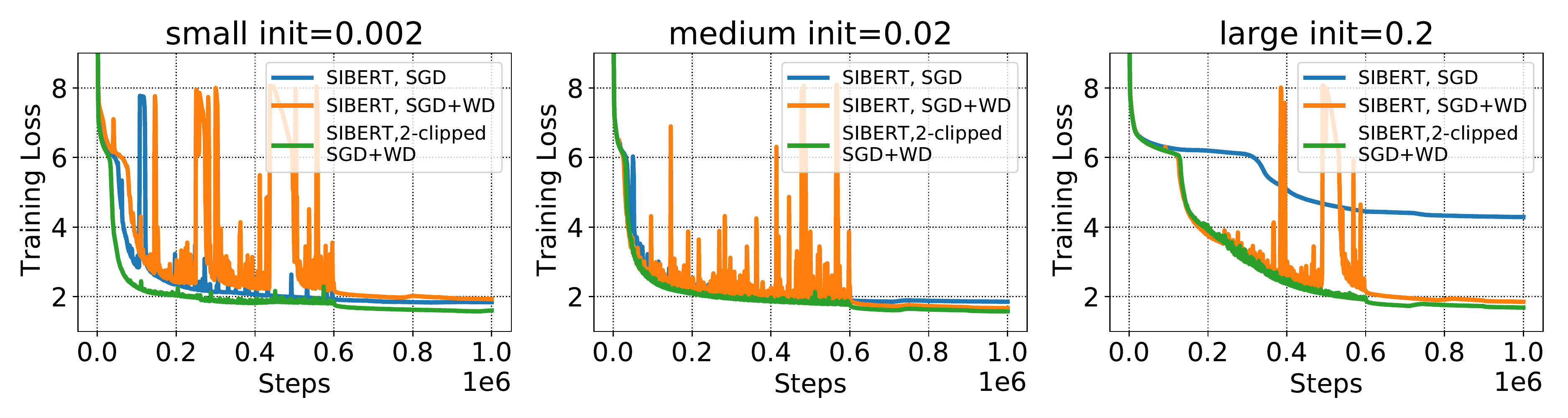}
  \caption{\sgdwd optimizes the scale invariant training loss of \sibert robustly for all initialization scales, and thus for loss scalings and different learning rates (with $\lambda\eta$ fixed). Here the default initialization for parameters in \sibert encoder is a truncated normal distribution with standard deviation equal to $0.02$ (the same as \bert). }
  \label{fig:diff_init}
\end{figure*}

To present our bound we need the following definitions.
\begin{definition}[$C$-clipped mean]
Given a distribution $P$ on $\mathbb{R}_{\ge 0}$ and constant $C>1$, we define $F_{P, C}(\mu)=\EE_{t\sim P}[\min\{t,C\mu\}]$, and define the \emph{$C$-clipped mean} of $ P$, $\mu_{P, C}$ as the largest positive real number satisfying that $F_{P, C}(C\mu_{P, C}) = \mu_{P, C}$. 
Such a definition is valid because $F_{P, C}(0)=0$ and thus $0$ is always a solution. 

For convenience, we also define  $G_{P,C}(\mu): =F_{P, C}(C\mu)-\mu$ and $M_{P, \frac{1}{C}}$ is defined as the $\frac{1}{C}$ median of $P$, that is, $M_{P,C}:=\sup\left\{M\ge 0\mid \PP_{t\sim P}[t\ge M] \ge \frac{1}{C}\right\}$. Since the cumulative density function $\PP_{t\sim P}[t\ge M]$ is left continuous in $M$, it holds that $\PP_{t\sim P}[t\ge M_{P,C}]\ge \frac{1}{C}$. 
\end{definition}

Let $P_{\bx}$ denote the distribution of $\norm{\nabla L_{\gamma}(\bx)}_2^2$. Below is a mild assumption saying $P_{\bx}$ is universally well-concentrated from below in the sense that the mean of the smallest $(1-\frac{1}{C})$ part of $P_{\bx}$ is at least a constant fraction of the $C$-clipped mean of $P_{\bx}$. Since $\mu_{P_{\bx},C}\le \mu_{\bx}$, the assumption below holds whenever $\alpha_C \mu_{\bx}\le \EE_{t\sim P_{\bx}}[t\ind[t<M_{P_{\bx},\frac{1}{C}}]]$.

\begin{assumption}\label{assump:g_max}
 $\exists \alpha_C>0$, such that for all $\bx\neq 0$, $\alpha_C \cdot\mu_{P_{\bx},C}\le \EE_{t\sim P_{\bx}}[t\ind[t<M_{P_{\bx},\frac{1}{C}}]]$.
 \end{assumption}
 
We further define $\mumin:= \min\limits_{\norm{\bx}_2=1} \mu_{P_{\bx},C}$ and $\mumax:= \max\limits_{\norm{\bx}_2=1} \mu_{P_{\bx},C}$ and  have the following theorem:

\begin{restatable}[$\sqrt{C}$-Clipped \sgdwd]{theorem}{clipsgdmain}\label{thm:clip_sgd_main}
Let $\bx(t)$ be defined by $\sqrt{C}$-Clipped \sgd+WD~(\Cref{alg:clipped_sgd}).  Under \Cref{assump:g_max},  for $\eta\lambda=O(\min\{1, \frac{\alpha_C}{C\ln T/\delta^2}\})$, with probability $1-5\delta$, we have
	\begin{align}\label{eq:clip_norm_convergence}
			&\forall T'\le t\le T-1,\quad	\frac{\mumin}{2} \le \frac{2\lambda}{\eta}\norm{\bx(t)}_2^4 \le 2\mumax .
	\end{align}
and 
\begin{align}\label{eq:clipping_sgd}
\begin{aligned}
\frac{1}{T-T'}\sum_{t=T'}^{T-1} \inner{\nabla L(\overline{\bx}(t))}{\widetilde{\nabla L}(\bx(t)) }
\le &\frac{\pi^2\rho\sqrt{\mumax}}{(T-T')\sqrt{2\eta\lambda}} + 4\sqrt{\eta\lambda}\frac{\rho\mumax^{\frac{3}{2}}}{\mumin} \\
+  & \sqrt{\frac{\ln\frac{2}{\delta}}{T-T'}} 8\frac{\pi\rho  \mumax^2}{\mumin} 
+ \sqrt{\frac{\ln\frac{2}{\delta}}{T-T'}} 16\sqrt{\lambda\eta}\frac{\rho\mumax^3}{\mumin^2}.
\end{aligned}
\end{align}
where $T' = \frac{1}{\alpha_C\eta\lambda} \max\left\{\ln \frac{R^2_0}{\mumax}, \ln \frac{ \mumin}{R_0^2}  \right\} +O(1) $ and $\widetilde{\nabla L}(\bx):=\EE\left[\nabla L_{\gamma}(\overline \bx) \min\left\{\sqrt{\frac{2C\lambda}{\eta}}\frac{\norm{\bx}^2_2}{\norm{\nabla L_{\gamma}({\overline \bx})}_2},1\right\}  \right]$.
\end{restatable}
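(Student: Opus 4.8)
The plan is to follow the same two-phase template as the unclipped result \Cref{thm:sgd_main}: first establish the high-probability two-sided norm confinement \eqref{eq:clip_norm_convergence}, and then run a stochastic descent/telescoping argument on the unit sphere to obtain the averaged stationarity bound \eqref{eq:clipping_sgd}. The starting point is the explicit norm recursion for \Cref{alg:clipped_sgd}: because $L_\gamma$ is scale invariant, $\nabla L_\gamma$ is $(-1)$-homogeneous (\Cref{lem:homo}) and $\inner{\nabla L_\gamma(\bx)}{\bx}=0$ (\Cref{thm:euler_homo}), so the weight-decay and gradient displacements are orthogonal and the cross term drops, leaving $\norm{\bx(t+1)}_2^2=(1-\eta\lambda)^2\norm{\bx(t)}_2^2+\eta^2\min\{\noise{\overline\bx(t)}/\norm{\bx(t)}_2^2,\ \tfrac{2\lambda C}{\eta}\norm{\bx(t)}_2^2\}$. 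I would track the potential $u(t):=\tfrac{2\lambda}{\eta}\norm{\bx(t)}_2^4$; clipping is active exactly when $\noise{\overline\bx(t)}>C\,u(t)$, so the conditional expected increment of $u(t)$ is governed by $F_{P_{\overline\bx(t)},C}$ evaluated near $u(t)$, whose self-consistent fixed point is the $C$-clipped mean $\mu_{P_{\overline\bx(t)},C}\in[\mumin,\mumax]$.

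For Phase 1 I would show that $\ln u(t)$ drifts geometrically toward the target window at rate $\Theta(\eta\lambda)$. When $u(t)$ is large the clipped noise contribution is capped at $\Theta(C\,u(t))$ while the contraction factor $(1-\eta\lambda)^4$ dominates, giving a downward drift; when $u(t)$ is small, \Cref{assump:g_max} guarantees that the sub-threshold part of $P_{\overline\bx(t)}$ still carries at least an $\alpha_C$ fraction of the clipped mean, so a definite amount of noise is injected and $u(t)$ drifts upward by $\gtrsim\alpha_C\eta\lambda$. This is why $\alpha_C$ appears both in the mixing time $T'=\tfrac{1}{\alpha_C\eta\lambda}\max\{\ln\tfrac{R_0^2}{\mumax},\ln\tfrac{\mumin}{R_0^2}\}+O(1)$ (the log-distance from the initial potential value $R_0^2$ to $[\mumin,\mumax]$) and in the step-size condition. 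The confinement to $[\mumin/2,2\mumax]$ then follows from a Freedman/Azuma bound on the martingale part of $\ln u(t)$; crucially, clipping caps the one-step multiplicative fluctuation, so the increments are bounded and the concentration is much sharper than in the unclipped case --- this is the source of the tighter window (compare $4\oversigma^2$ in \Cref{thm:sgd_main}) and of the claimed improved norm convergence. The condition $\eta\lambda=O(\min\{1,\alpha_C/(C\ln(T/\delta^2))\})$ is exactly what forces the $\Theta(\alpha_C\eta\lambda)$ drift to dominate the fluctuation uniformly over all $T$ steps after a union bound.

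For Phase 2 I would apply the local descent inequality (the sphere analog of \Cref{lem:GD_descent}, valid once the norm is stabilized away from $0$ by Phase 1) to $L(\bx(t+1))$. Scale invariance makes the weight-decay displacement orthogonal to $\nabla L(\bx(t))$, so the first-order term is $-\eta N_t\inner{\nabla L(\bx(t))}{\nabla L_{\gamma_t}(\bx(t))/\norm{\nabla L_{\gamma_t}(\bx(t))}_2}$; taking the conditional expectation over $\gamma_t$ and rescaling by $\norm{\bx(t)}_2$ reproduces precisely the clipped inner product $\inner{\nabla L(\overline\bx(t))}{\widetilde{\nabla L}(\bx(t))}$ on the left-hand side of \eqref{eq:clipping_sgd}, with $\widetilde{\nabla L}$ the expected clipped gradient defined in the statement. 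The second-order remainder is $O(\rho\eta^2 N_t^2/\norm{\bx(t)}_2^2)$, which Phase 1 bounds in terms of $\mumin,\mumax$, producing the $\sqrt{\eta\lambda}\,\rho\mumax^{3/2}/\mumin$ term. Telescoping $L(\overline\bx(t))$ over $t\in[T',T-1]$ and using that $L$ has bounded oscillation on the compact unit sphere (cf.\ the remark after \Cref{thm:GD_main_2}) yields the leading $\tfrac{\pi^2\rho\sqrt{\mumax}}{(T-T')\sqrt{2\eta\lambda}}$ term, while a second Freedman/Azuma estimate on the martingale gradient deviations --- again bounded because $N_t\le\sqrt{2C\lambda/\eta}\norm{\bx(t)}_2$ --- gives the remaining $\sqrt{\ln(2/\delta)/(T-T')}$ terms.

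The hard part will be the two-sided confinement in Phase 1, and within it the lower drift. Unlike the unclipped dynamics, where any positive expected squared gradient norm automatically injects noise, clipping could in principle starve $u(t)$ of signal if $P_{\overline\bx}$ placed almost all its mass above the moving threshold $C\,u(t)$; \Cref{assump:g_max} is precisely the quantitative guarantee that rules this out, and converting it into a clean multiplicative lower bound on the drift of $\ln u(t)$ is the delicate step. A second subtlety is that the clip threshold $C\,u(t)$ moves with the iterate, so the fixed-point characterization of $\mu_{P_{\bx},C}$ must be married to the dynamics; monotonicity and concavity of $F_{P_{\bx},C}$ supply existence, uniqueness, and the contraction, but one must carry the clipping bias through to Phase 2 --- this is exactly why the final bound controls $\inner{\nabla L(\overline\bx)}{\widetilde{\nabla L}(\bx)}$ rather than $\norm{\nabla L(\overline\bx)}_2^2$, and why both \Cref{assump:g_max} and the smoothness $\rho$ enter together.
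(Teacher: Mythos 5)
Your proposal is correct and follows essentially the same route as the paper: establish two-sided confinement of the potential $\tfrac{2\lambda}{\eta}\norm{\bx(t)}_2^4$ via the fixed-point/concavity structure of the clipped-mean map $G_{P_{\bx},C}$ under \Cref{assump:g_max} plus Azuma-type concentration, then rerun the \sgdwd descent argument with the clipped update, which naturally produces the biased inner product $\inner{\nabla L(\overline\bx)}{\widetilde{\nabla L}(\bx)}$ in place of $\norm{\nabla L(\overline\bx)}_2^2$ (the paper literally reuses the proof of \Cref{thm:sgd_main} with $M,\oversigma,\undersigma$ replaced by $2\mumax,\mumax,\mumin$). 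The only cosmetic difference is that you track $\ln u(t)$ throughout Phase 1, whereas the paper uses the logarithmic potential only for the approach phase (far from the window, via event $\mathcal{E}^3_T$) and switches to a direct affine recursion on $R_s^2$ with contraction factors $\beta_l,\beta_u$ for the warm-start confinement; both are workable.
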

The proof of this theorem is presented in \Cref{sec:proof_clipping}. Note that with clipping \Cref{thm:clip_sgd_main} shows that the norm convergence~\eqref{eq:clip_norm_convergence} is more robust as it doesn't need to make any assumption about the maximum gradient norm $M$, unlike \Cref{thm:sgd_main}. Indeed, from the definition of $C$-clipped mean, for each $\bx$, we can allow all the gradients with norm larger than $C\cdot\mu_{P_{\bx},C}$ to become infinity, and yet not affect the norm convergence, as $\mu_{P_{\bx},C}$ and the condition in \Cref{assump:g_max} do not change.

Under the additional assumption that $\inner{\nabla L(\overline{\bx(t)})}{\widetilde{\nabla L}(\bx(t)}$$=\Omega(\norm{\nabla L(\bx(t))}_2^2)$, we can use \Cref{eq:clipping_sgd} to show convergence to stationary points. This is a reasonable assumption if the clipping frequency is low, \emph{e.g.}, it's $1.5\%$ in our experiments for \sibert.

\begin{figure*}[!t]
  \includegraphics[width =\textwidth]{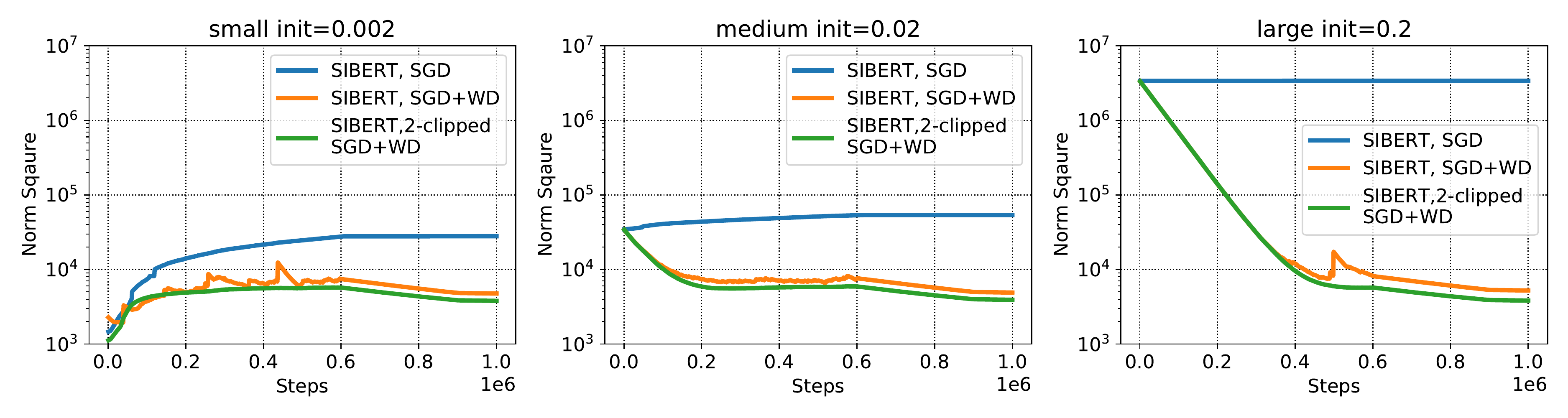}
  \caption{The robust optimization performance of \sgdwd over the scale invariant training loss of \sibert originates from its ability to fast adjust the parameter norm. In contrast, when the initial norm is too large, \sgd w.o. WD optimizes slowly. Relative Global Clipping reduces the spikes in the norm curve, which verifies our theoretical result \Cref{thm:clip_sgd_main} that clipping leads to better norm convergence. Here, only the norm of the scale invariant part, \emph{i.e.}, the encoder part is plotted.}
  \label{fig:norm_convergence}
\end{figure*}

\section{Experiments}\label{sec:exp}
We now conduct a comprehensive empirical study in order to demonstrate the following key aspects of our recipe: 
(i) yields competitive training performance using significantly low memory footprint,
(ii) training becomes highly robust to initialization scale, and
(iii) provides better convergence of norm with clipping.


\paragraph{Experimental Setup.}
We consider the standard task of pretraining a transformer model and fine-tuning it on benchmark datasets, following~\citet{devlin2018bert}.
We compare its performance with \sibert, a scale invariant version of \bert as described in Sec.~\ref{sec:si-design}. For both these models, we use their base size versions unless specified otherwise. For \sibert, the scale invariant portion is trained using SGD+WD with a piecewise constant LR schedule and  WD of $1e-2$. We use \lamb optimizer for the non-scale invariant parts. The initial LR for \sgd is $8e-4$ without warmup and is divided by $10$ at step $600$k and $900$k.  Default training is for $1M$ steps. For \lamb we use a linear decay schedule with initial learning rate $8e-4$ and a linear warmup of $10$k steps. 


\begin{figure}
\centering
  \includegraphics[width=0.6\columnwidth]{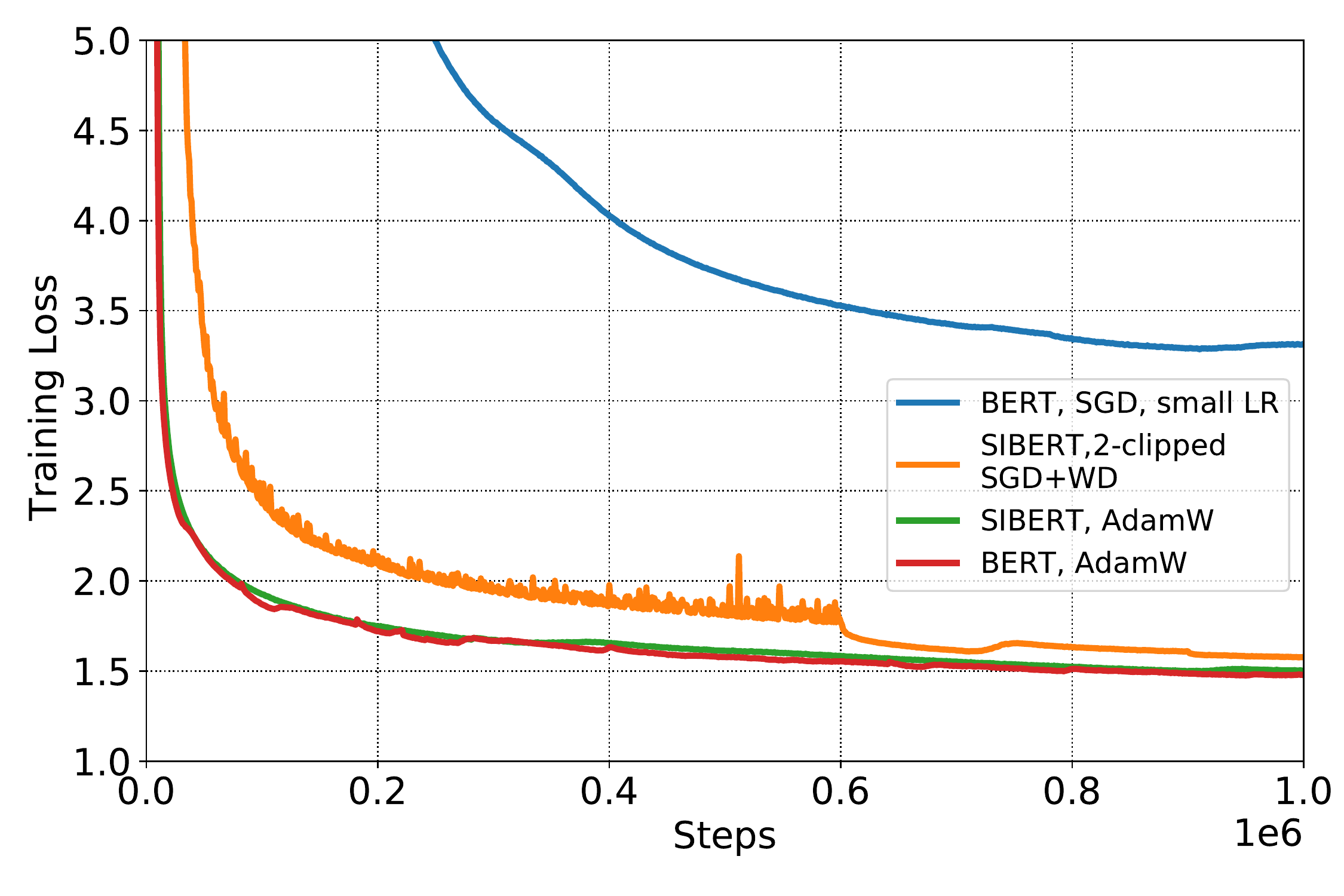}
  \caption{Our recipe (\sibert, \sgdwd and Relative Global Clipping) significantly improves the optimization performance compared to the baseline, \bert trained by \sgd with small LR. The final training loss is close to \bert trained by \adam.}
  \label{fig:compare_to_adam}
\end{figure}

\paragraph{Performance.}
We begin by establishing that proposed \sibert with \sgdwd training performs competitively.
In this regard, we first look at pretraining loss between standard training of \bert with \adam and our \sibert trained by \sgdwd with or without clipping (the clipping factor is set as $\sqrt{C}=2$).
From \Cref{fig:compare_to_adam}, one can see that our training curve closely follows that of \bert trained by \adam, but without the need for extra memory for keeping track of first and second order momentum.
If we use \sgd on standard \bert architecture, then either we have to use small learning rates, which slows down training, or the loss diverges.
This further highlights the importance of the scale invariant architecture, which improves training stability by eliminating the $k$-homogeneous structure. To our knowledge, this is the first work that shows effective training of \bert-like model using simple SGD (even without any momentum).

Next, we compare the downstream performance on three benchmark datasets (SQuADv1.1~\citep{rajpurkar2016squad}, SQuADv2~\citep{rajpurkar2018know} and MNLI~\citep{N18-1101}).
We tried to follow standard setup, e.g. \bert is finetuned by \adam. However for \sibert we had to use LAMB, as \adam is very sensitive to the scale.
We observe comparable performance and when trained longer it can even outperform conventional \bert.


\begin{table}[h]
  \caption{Downstream Performance of \sibert trained by \sgdwd+clipping is close to that of \bert trained \adam - which uses $3X$ more memory than \sgd. The gap is further reduced by doubling the training budget of \sibert.}
  \label{table:downstream_performance}
\small
\centering
  \begin{tabular}{@{}l@{  }llccc@{}}
  \toprule
     &&\!\!\!\small{ MNLI}\!\!\! & {\small SQuAD1}\!\!\! & \small{SQuAD2}  &  Pretraining\\
     && Acc & F1 & F1 & Loss \\
    \midrule
    \multirow{4}{*}{\rotatebox[origin=c]{90}{Base}}&
    \bert    &  \textbf{84.4}    &  \textbf{90.3} &  78.8 & \textbf{1.479}\\
    &\sibert &  81.1& 88.1 &  74.8 & 1.672\\
    &    \  + clipping & 82.6&  89.3& 76.8 & 1.58\\
    &    \  + 2x training  & 83.3 & \textbf{90.3} &  \textbf{80.0}& 1.495 \\
    \midrule
    \multirow{4}{*}{\rotatebox[origin=c]{90}{Large}}&
    \bert    &  \textbf{86.8} &	\textbf{92.4}	& \textbf{84.1} & \textbf{1.181}\\
    &\sibert  & 83.7 &	90.6   & 79.3 & 1.404\\
    &    \  + clipping & 85.3 &	91.6   & 81.3 & 1.322\\
    &    \  + 2x training  & 86.4 & \textbf{92.4} & 83.1 & 1.194  \\
   \bottomrule
  \end{tabular}
  
\end{table}

\paragraph{Training Stability: Insensitivity to the scale of initialization.}

To showcase ease of optimization offered by our recipe, we consider different initialization scales spanning two orders of magnitude.
The results for the pretraining task in ~\Cref{fig:diff_init} show good convergence across the board for our approach, whereas \sgd on its own struggles even with the scale invariant architecture.

Further note that these experiments simultaneously showcase robustness to rescaling of loss, parameterization, or LR.
This is because in a scale invariant model trained by \sgdwd(+clipping), it holds that all of following scalings are equivalent:
$(c_1L,c_2\bx(0),c_3\eta,c_4\lambda)\longleftrightarrow 
(L,\frac{c_2}{\sqrt{c_1c_3}}\bx(0),\eta,c_3c_4\lambda)$ for any $c_1,c_2,c_3,c_4>0$.

\paragraph{Training Stability: Improvement in parameter norm convergence.}
Finally, we look at parameter norms during training in experiments. We observe that even when starting from very different initialization scale, \sgdwd(+clipping) quickly brings parameter norm to desired ranges. 
In contrast, \sgd struggles when initial norm and learning rate are not aligned - see the rightmost plot with large initialization in \Cref{fig:norm_convergence}.
This shows that our recipe has the ability to quickly adapt to different initialization scales, in-line with our theoretical result (\Cref{thm:clip_sgd_main}) showing better norm convergence of \sgdwd(+clipping).

\section{Conclusion}


In this paper, we presented a simple yet effective method to robustly train transformers with non-adaptive methods such as \sgd. By designing novel scale invariant architecture and using a tailored optimization procedure --- which makes our optimization scheme truly \emph{architecture aware} --- we provably achieve robust training of neural networks with substantially low memory footprint when compared to adaptive methods. We believe designing neural architecture and the optimizer jointly is an exciting research direction and will yield even better training procedures in the future.


\bibliography{zhiyuan}
\bibliographystyle{icml2022}

\newpage
\appendix
\onecolumn

\section{Design Details of Scale Invariant BERT}\label{sec:SI_design}

\begin{definition}
For a module with $n$ inputs and $m$ outputs, we say the module is $(a_1,...a_n; b_1,...,b_m)$-homogeneous if the $m$ outputs are $b_i$-homogeneous to the network parameters whenever the $n$ inputs are $a_i$-homogeneous to the network parameters. A model is scale invariant iff its output is $(;0)$-homogeneous. (A complete model doesn't take any input from another module)
\end{definition}

Following \cite{li2019exponential}, we view the computation graph as a directed acyclic graph, where each module is a node and each tensor (including inputs, intermediate computation results and final output) as an edge. Each edge can be viewed as a function of parameters, and  we can decide the homogeneity by doing induction over the computation graph by its topological order. In detail, we know the $j$th output edge of some $(a_1,\ldots,a_n;b_1,ldots,b_n)$- homogeneous module is $b_j$ homogeneous if for each $1\le i\le n$, the $i$th input edge is $a_i$-homogeneous.  For convenience, we allow $a_i$,$b_i$ to be functions of free variable $x$, meaning the module is $(a1(x),\ldots,a_n(x); b_1(x),\ldots,b_m(x))$-homogeneous for every $x\in\RR$.

In \cref{tab:module_homo}, we summarize the homogeneity of building blocks in our design.

\paragraph{Overview of SIBERT structure:} Our SIBERT has two main parts --- encoder and classification head, which is the same to standard BERT. We only make encoder part scale invariant and train it by \sgdwd. We leave the classification head not scale invariant and train it by \lamb. Note the classification head is only used in pretraining and is not used in the downstream task.

\paragraph{(2;2)-homogeneous encoder layer:} As mentioned in \Cref{sec:SI_design}, residual block and attention are the two main building blocks that needs to be made scale invariant. Following \citet{li2019exponential}, we choose to use PreNorm structure for residual block and make it $(2;2)$-homogeneous.  We also replace GeLU~\cite{hendrycks2016gaussian} in BERT by ReLU for homogeneity. Since ReLU is $(1;1)$ homogeneous, we omit ReLU from the design, without affecting the final scale invariance.

\begin{table}[!htbp]
    \centering
    \caption{Homogeneity of building blocks of SIBERT.}
    \begin{tabular}{c|c|c}
 Symbol &Module &  Homogeneity \\
 \hline
I   &  Input  &(0;1) \\
B   & Adding Bias & (1;1)\\
N & Layer Normalization (no affine) &(x;0)\\
L &Linear Layer &(x;x+1)\\
Embed &Embedding Layer &(x;x+1)\\
NA & Layer Normalization  with affine  &(x;1)\\
FF & 2-layer feedforward network  &(0;2)\\
ATTN & {\color{violet}Scale Invariant Attention} &(x,x,x;x+2)\\
Encoder & {\color{blue}Our Encoder Layer} &(2;2)
    \end{tabular}
    \label{tab:module_homo}
\end{table}

\begin{figure}[!htbp]
    \centering
    \includegraphics[width=0.8\textwidth]{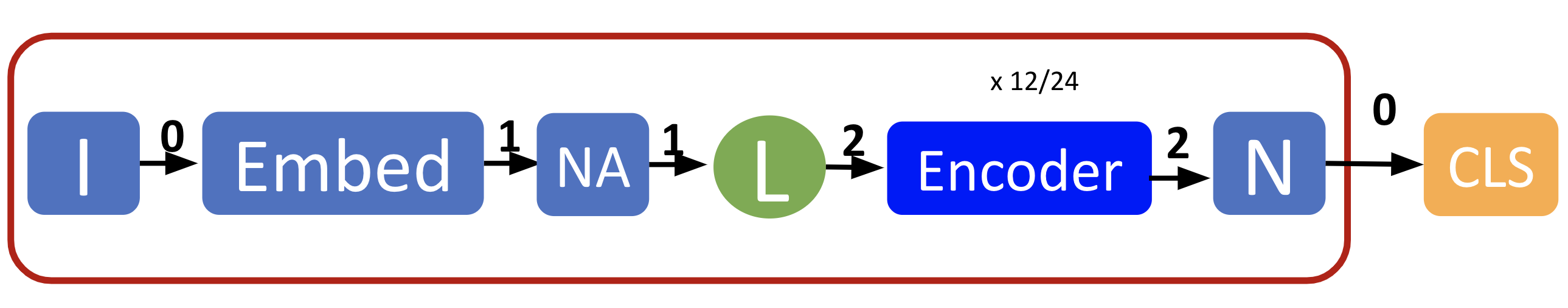}
    \caption{{\color{red} Encoder} and {\color{orange} Classification Head (CLS)}. `x12/24' means to stack $12$ our $(2;2)$-homogeneous encoder layer for base SIBERT (or 24 for large SIBERT)  }
    \label{fig:SI_overview}
\end{figure}

\begin{figure}[!htbp]
    \centering
    \includegraphics[width=0.8\textwidth]{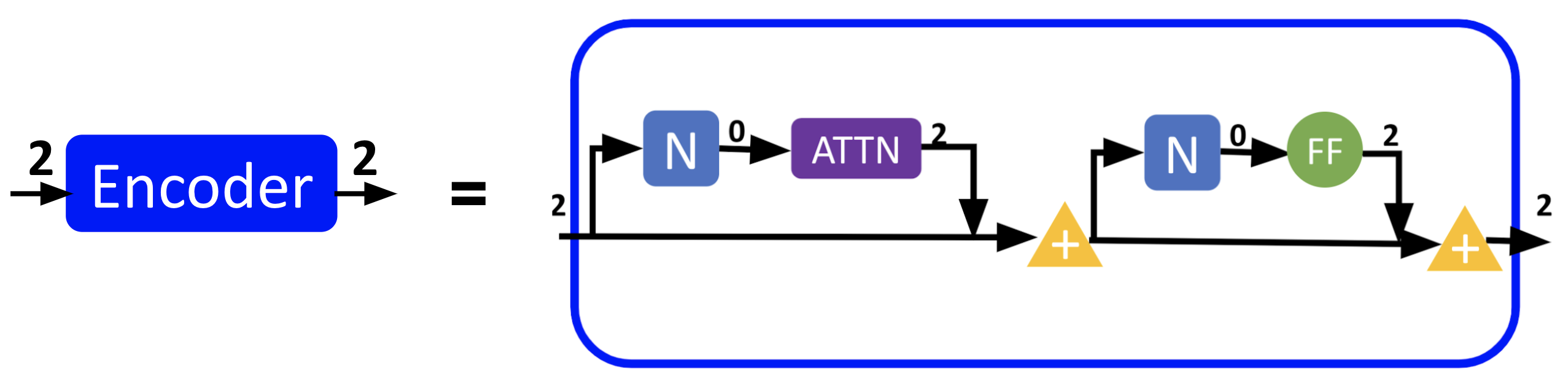}
    \caption{The $(2;2)$-homogeneous encoder layer. `ATTN' denotes our Scale Invariant Attention~(see \Cref{fig:SI_attn}). `FF' denotes the 2-layer feedforward structure, which is $(0;2)$-homogeneous.}
    \label{fig:SI_encoder}
\end{figure}

\begin{figure}[!htbp]
    \centering
    \includegraphics[width=0.7\textwidth]{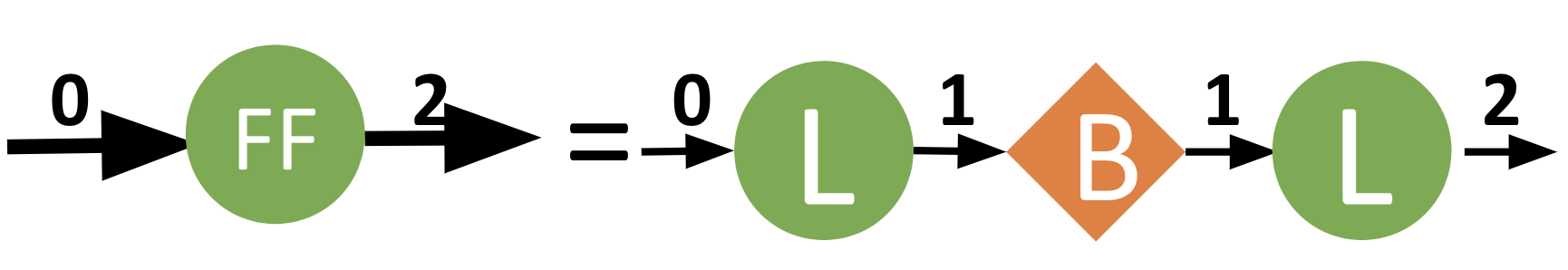}
    \caption{The $(0;2)$-homogeneous FeedForward layer}
    \label{fig:SI_FF}
\end{figure}

\begin{figure}[!hbt]
\centering
  \includegraphics[width = 0.78\textwidth]{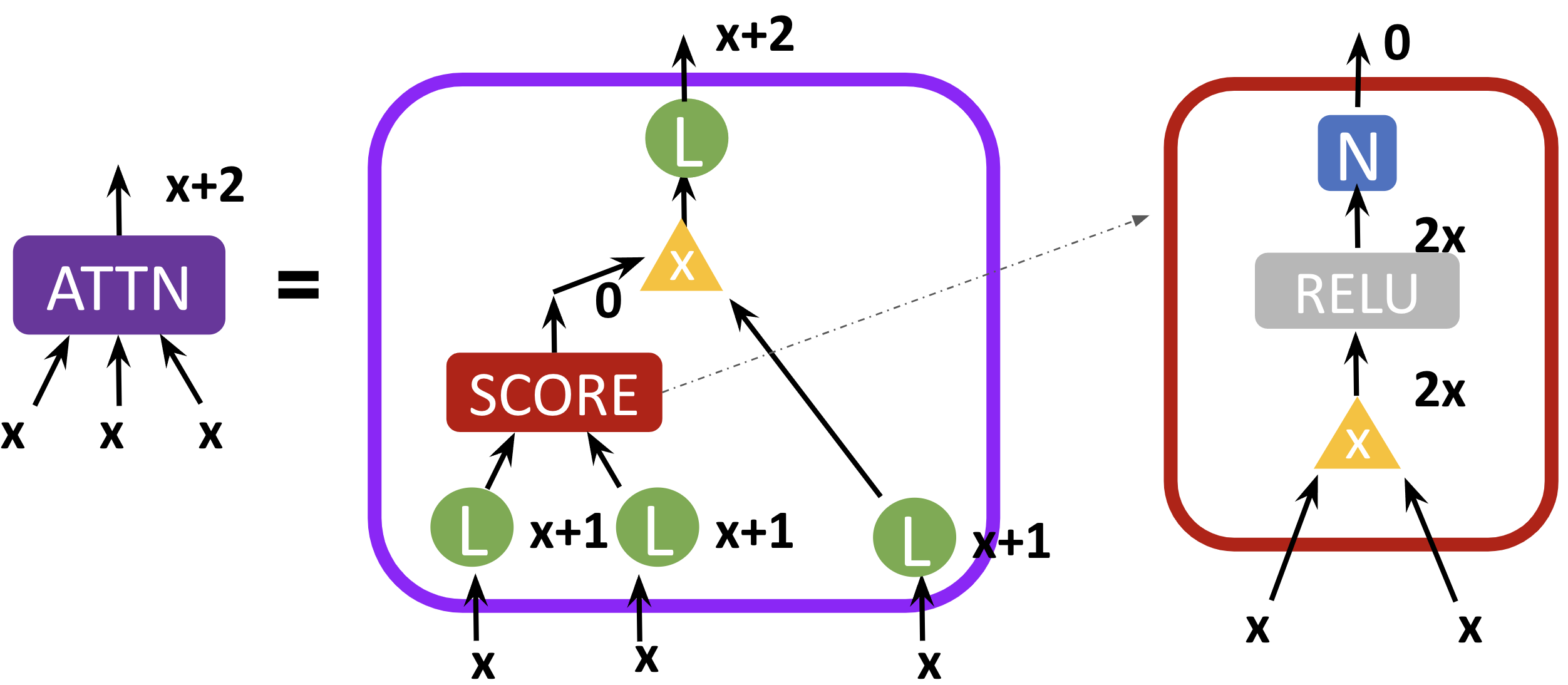}
  \caption{The $(x,x,x;x+2)$-homogeneous Attention, which is defined as $\musiattn(Q,K,V) = \sum_i \normalization(\relu(Q W_i^Q (KW_i^K)^\top) VW_i^V  W_i^O$, where  $W_i^Q, W_i^K$ $\in \RR^{d_{ model}\times d_k}$, $W_i^V\in \RR^{d_{ k}\times d_v}$ and $W_i^O\in \RR^{d_{ v}\times d_{model}}$ That is, if $Q,K,V$ are $k$-homogeneous functions of parameter $\bx$, then $\musiattn(Q,K,V)$ is $k+2$-homogeneous, for any $k\in\mathbb{R}$. We also call it \emph{Scale Invariant Attention} because its attention score is scale invariant.  }\label{fig:SI_attn}
\end{figure}

\newpage
 \section{Introduction examples analysis}
 \label{sec:intro_analysis}
 
 In the first example, since the data is non-separable, the global optimum $X^*$ must be finite  and, thus, $|\nabla \tilde{L}(X)|$ is positive  and monotone increases among all $X>X^*>0$. For simplicity, assume  $X^*>0$ and $x_1=\cdots=x_{2k}> ({X^*})^{\frac{1}{2k}}$ at initialization (and thus at any iteration $t$). It holds that $x_i(t+1) = x_i(t)-\eta \frac{X(t)}{x_i(t)} \nabla \tilde{L}(X(t))  = x_i(t) \left(1-\eta\frac{X(t)}{x_i^2(t)} \nabla \tilde{L}(X(t))\right)$, where $X(t)=\Pi_{j=1}^{2k} x_j(t)$. This implies $X(t+1)=X(t)\left(1-\eta\frac{X(t)}{\sqrt[k]{X(t)}}\nabla \tilde{L}(X(t))\right)^{2k}\ge 0$. Thus we conclude if $\eta \ge \frac{2}{|\nabla \tilde{L}(X(0))|}(X(0))^{\frac{1}{k}-1}$ and $X(0)>X^*$, $X(t)$ will increase monotonically and explode.
\section{Useful Lemmas}

\subsection{Scale Invariance}

\begin{lemma}[Smoothness]\label{lem:taylor_expansion}
For any $\bv,\bx\in \RR^d$ with $\inner{\bx}{\bv}=0$, suppose $L$ is scale-invariant and twice differentiable with $\rho:=\max_{\norm{\bx}_2=1} \norm{\nabla^2 L(\bx)}$, we have
\begin{align*}
    L(\bx+\bv) - L(\bx) \le \inner{\bv}{\nabla L(\bx)} + \frac{\rho\norm{\bv}_2^2}{2\norm{\bx}_2^2}.
\end{align*}
\end{lemma}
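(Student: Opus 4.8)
The plan is to prove this via the exact second-order Taylor expansion with integral remainder, exploiting the fact that $L$ is scale invariant (i.e.\ $0$-homogeneous), which by \Cref{lem:homo} makes its Hessian $\nabla^2 L$ a $(-2)$-homogeneous function. The crucial structural feature I will use is the orthogonality hypothesis $\inner{\bx}{\bv}=0$: it guarantees that the straight-line segment from $\bx$ to $\bx+\bv$ never comes closer to the origin than $\bx$ itself, which is precisely what keeps the (unbounded-near-zero) Hessian norm under control.

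First I would write, for the twice-differentiable function $L$,
\begin{align*}
    L(\bx+\bv) - L(\bx) = \inner{\nabla L(\bx)}{\bv} + \int_0^1 (1-s)\, \bv^\top \nabla^2 L(\bx+s\bv)\, \bv \, ds,
\end{align*}
so that the claim reduces to bounding the integral remainder by $\tfrac{\rho\norm{\bv}_2^2}{2\norm{\bx}_2^2}$. Next I would bound the integrand pointwise: by Cauchy--Schwarz, $\bv^\top \nabla^2 L(\bx+s\bv)\, \bv \le \norm{\nabla^2 L(\bx+s\bv)}\,\norm{\bv}_2^2$, and then I would invoke $(-2)$-homogeneity of the Hessian (from \Cref{lem:homo}) to write $\norm{\nabla^2 L(\bx+s\bv)} = \norm{\bx+s\bv}_2^{-2}\,\norm{\nabla^2 L(\overline{\bx+s\bv})} \le \rho\,\norm{\bx+s\bv}_2^{-2}$, using the definition $\rho = \max_{\norm{\bx}_2=1}\norm{\nabla^2 L(\bx)}$ applied to the unit vector $\overline{\bx+s\bv}$.

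Then I would lower-bound the denominator using orthogonality: since $\inner{\bx}{\bv}=0$, the Pythagorean identity gives $\norm{\bx+s\bv}_2^2 = \norm{\bx}_2^2 + s^2\norm{\bv}_2^2 \ge \norm{\bx}_2^2$ for all $s\in[0,1]$. Combining, the integrand is at most $\tfrac{\rho}{\norm{\bx}_2^2}\norm{\bv}_2^2$ uniformly in $s$, and integrating against the weight $(1-s)$ over $[0,1]$ (which contributes $\int_0^1(1-s)\,ds = \tfrac12$) yields exactly $\tfrac{\rho\norm{\bv}_2^2}{2\norm{\bx}_2^2}$, completing the argument.

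The only delicate point — and the step I expect to matter most — is the denominator lower bound: without the orthogonality hypothesis the segment could approach the origin, where the $(-2)$-homogeneous Hessian norm blows up and no uniform bound on the remainder exists. Orthogonality is what rescues the estimate, and it is also the feature that makes this lemma directly applicable to the \gdwd and \sgdwd analyses, where the scale-invariant gradient step $\nabla L(\bx)$ is automatically orthogonal to $\bx$ by Euler's theorem (\Cref{thm:euler_homo}). I would make sure to state the Pythagorean step explicitly, since it is the single place the hypothesis $\inner{\bx}{\bv}=0$ enters.
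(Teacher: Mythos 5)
Your proof is correct and follows essentially the same route as the paper's: a second-order Taylor expansion of $L$ along the segment $\gamma(s)=\bx+s\bv$, with the Hessian bounded by $\rho/\norm{\bx+s\bv}_2^2$ via $(-2)$-homogeneity and the denominator lower-bounded by $\norm{\bx}_2^2$ using Pythagoras from $\inner{\bx}{\bv}=0$. The only cosmetic difference is that you use the integral form of the Taylor remainder while the paper uses the Lagrange (mean-value) form at some $s^*\in[0,1]$; both are equally valid under the $\mathcal{C}^2$ assumption, and your write-up in fact makes the key orthogonality step more explicit than the paper does.
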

\begin{proof}[Proof of \Cref{lem:taylor_expansion}]
Define $\gamma(s) = \bx + s\bv$, then we have $L(\gamma(0)) = L(\bx)$ and $L(\gamma(1)) = L(\bx+\bv)$. Taking Taylor expansion of $F(s)=L(\gamma(s))$ at $s=0$, we have 
\begin{align*}
F(1)-F(0) = F'(0) + \frac{F''(s^*)}{2},\quad \textrm{for some } s^*\in[0,1].
\end{align*}

Note $F'(0) = \inner{\gamma'(0)}{\nabla L(\gamma(0))}=\inner{\nabla L(\bx)}{\bv}$ and 
\begin{align*}
F''(s^*)= &\gamma'(s^*) \nabla^2L(\gamma(s^*))\gamma'(s^*) 
\le \frac{\rho}{\norm{\gamma(s^*)}_2^2} \norm{\gamma'(s^*)}_2^2,
\end{align*}
where the last inequality uses the fact that $L$ is scale invariant.
The proof is completed by noting that $\norm{\gamma(s^*)}_2\ge \norm{\gamma(0)}_2 = \norm{\bx}_2^2$ and that $\gamma'(s^*)= \bv$.

\end{proof}

\begin{lemma}[Smoothness, Multi-group]\label{lem:taylor_expansion_multi}
For any $\bv,\bx\in \RR^d$ with $\inner{\bx_k}{\bv_k}=0$ for all $k\in [K]$, suppose $L$ is multi-group scale invariant (see \Cref{defi:multi_group_scale_invariance}), we have
\begin{align*}
    L(\bx+\bv) - L(\bx) \le \inner{\bv}{\nabla L(\bx)} + \frac{\rho}{2}\sum_{k=1}^K\frac{\norm{\bv_i}_2^2}{\norm{\bx_i}_2^2}.
\end{align*}
\end{lemma}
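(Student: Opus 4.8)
The plan is to mirror the single-group argument of \Cref{lem:taylor_expansion} but to track the scaling of the Hessian block by block. First I would set $\gamma(s) = \bx + s\bv$ and $F(s) = L(\gamma(s))$, and apply the second-order Taylor expansion $F(1) - F(0) = F'(0) + \tfrac{1}{2}F''(s^*)$ for some $s^* \in [0,1]$. As in the single-group case, $F'(0) = \inner{\nabla L(\bx)}{\bv}$ supplies the first-order term, so the entire content lies in controlling $F''(s^*) = \bv^\top \nabla^2 L(\gamma(s^*)) \bv$, where $\gamma(s^*)$ is partitioned into groups $\gamma(s^*)_k = \bx_k + s^*\bv_k$.

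The key new ingredient is a per-group homogeneity identity for the Hessian. Writing $\bx = (\bx_1,\ldots,\bx_K)$, multi-group scale invariance (\Cref{defi:multi_group_scale_invariance}) gives $L(c_1\bx_1,\ldots,c_K\bx_K) = L(\bx)$ for all $c_k > 0$. Differentiating this identity with respect to $\bx_k$ and using the chain rule shows each block gradient obeys $\nabla_k L(c_1\bx_1,\ldots,c_K\bx_K) = c_k^{-1}\nabla_k L(\bx)$, and differentiating once more with respect to $\bx_l$ yields the block scaling $\nabla^2_{kl} L(c_1\bx_1,\ldots,c_K\bx_K) = c_k^{-1}c_l^{-1}\,\nabla^2_{kl} L(\bx)$, valid for $k=l$ and $k\neq l$ alike. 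Choosing $c_j = 1/\norm{\gamma(s^*)_j}_2$ so that every group is normalized to the unit sphere then rewrites the quadratic form as $\bv^\top \nabla^2 L(\gamma(s^*))\bv = \bm{w}^\top \nabla^2 L(\overline{\gamma(s^*)})\,\bm{w}$, where $\bm{w}_k := \bv_k/\norm{\gamma(s^*)_k}_2$ and $\overline{\gamma(s^*)}$ denotes the per-group normalization of $\gamma(s^*)$.

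Having reduced to a point with every group on the unit sphere, I would bound the quadratic form by the operator-norm definition of $\rho$ (taken here as the maximum of $\norm{\nabla^2 L}$ over the product of unit spheres), obtaining $\bm{w}^\top \nabla^2 L(\overline{\gamma(s^*)})\,\bm{w} \le \rho\norm{\bm{w}}_2^2 = \rho\sum_{k=1}^K \norm{\bv_k}_2^2/\norm{\gamma(s^*)_k}_2^2$. Finally, the orthogonality hypothesis $\inner{\bx_k}{\bv_k}=0$ gives $\norm{\gamma(s)_k}_2^2 = \norm{\bx_k}_2^2 + s^2\norm{\bv_k}_2^2 \ge \norm{\bx_k}_2^2$ for every $k$ and every $s\in[0,1]$, so each denominator only helps: $\norm{\bv_k}_2^2/\norm{\gamma(s^*)_k}_2^2 \le \norm{\bv_k}_2^2/\norm{\bx_k}_2^2$. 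Substituting into $\tfrac{1}{2}F''(s^*)$ produces exactly the claimed sum $\tfrac{\rho}{2}\sum_k \norm{\bv_k}_2^2/\norm{\bx_k}_2^2$.

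I expect the main obstacle to be the bookkeeping in the per-group Hessian scaling identity --- in particular, justifying that the cross-block terms $\nabla^2_{kl} L$ with $k\neq l$ scale as $c_k^{-1}c_l^{-1}$ (a multi-group analogue of \Cref{lem:homo}), and confirming that normalizing distinct groups by distinct factors leaves the quadratic form in the clean shape $\bm{w}^\top \nabla^2 L(\overline{\gamma(s^*)})\,\bm{w}$ without generating stray factors. Everything else is a direct transcription of the single-group proof, with the single smoothness ratio $\norm{\bv}_2^2/\norm{\bx}_2^2$ replaced by the sum of the per-group ratios.
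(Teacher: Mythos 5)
Your proof is correct and follows essentially the same route as the paper's: both rest on the per-group $(-1,-1)$-homogeneous scaling of the Hessian blocks and on the orthogonality $\inner{\bx_k}{\bv_k}=0$ keeping each group norm from shrinking along the segment. The only cosmetic difference is that you normalize the groups at the intermediate point $\gamma(s^*)$, whereas the paper first rescales $(\bx,\bv)$ to put every $\bx_k$ on the unit sphere and then runs the same segment argument.
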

\begin{proof}[Proof of \Cref{lem:taylor_expansion_multi}]
We first prove for the case where $\norm{\bx_k}_2=1$, $\forall k\in[K]$. Similar to the proof of \Cref{lem:taylor_expansion}, it suffices to show that the smoothness of $L$ is at most $\rho$ along the line joining $\bx$ and $\bx+\bv$. This holds because $\forall s\in[0,1],k\in[K]$, $\norm{\bx_i+s\bv_i}_2\ge \norm{\bx_i}_2$ by assumption that $\inner{\bx_k}{\bv_k}=0$ for all $k\in [K]$. 

Now we turn to the general case. Define $\hat{\bx} = [\frac{\bx_1^\top}{\norm{\bx_1}_2}, \ldots, \frac{\bx_K^\top}{\norm{\bx_K}_2}]^\top$ and $\bv'  = [\frac{\bv_1^\top}{\norm{\bx_1}_2}, \ldots, \frac{\bv_K^\top}{\norm{\bx_K}_2}]^\top$. Since $L$ is multi-group scale invariant, we have $L(\bx) = L(\hat{\bx})$ and $L(\bx+\bv) = L(\hat{\bx}+\bv')$. The proof is completed by applying the previous argument on $\hat{\bx}$ and $\bv'$.
\end{proof}

\begin{lemma}\label{lem:gradient_upper_bound}
If $L$ is scale invariant, $\norm{\nabla L(\bx)}_2 \le \frac{\pi}{\norm{\bx}_2} \sup_{\norm{\bx}=1}\norm{\nabla^2 L(x)}_2$. 
\end{lemma}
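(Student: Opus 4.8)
The plan is to first reduce the claim to the unit sphere and then analyze $L$ along a single great circle, where scale invariance forces the second derivative to be controlled purely by the Hessian. Write $\rho := \sup_{\norm{\bx}=1}\norm{\nabla^2 L(\bx)}_2$. Since $L$ is scale invariant (i.e.\ $0$-homogeneous), \Cref{lem:homo} gives that $\nabla L$ is $(-1)$-homogeneous, so $\nabla L(\bx) = \norm{\bx}_2^{-1}\,\nabla L(\overline\bx)$ and hence $\norm{\nabla L(\bx)}_2 = \norm{\bx}_2^{-1}\norm{\nabla L(\overline\bx)}_2$. It therefore suffices to prove $\norm{\nabla L(\bx)}_2 \le \pi\rho$ whenever $\norm{\bx}_2 = 1$.

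Fix a unit vector $\bx$ with $\nabla L(\bx)\neq 0$ (the case $\nabla L(\bx)=0$ being trivial), and set $\bv := \nabla L(\bx)/\norm{\nabla L(\bx)}_2$. Euler's theorem (\Cref{thm:euler_homo}) applied to the $0$-homogeneous $L$ gives $\inner{\nabla L(\bx)}{\bx}=0$, so $\bv\perp\bx$ and $\gamma(\theta):=\cos\theta\,\bx+\sin\theta\,\bv$ is a unit-speed great circle on the sphere. I would then define $g(\theta):=L(\gamma(\theta))$, so that $g'(0)=\inner{\nabla L(\bx)}{\bv}=\norm{\nabla L(\bx)}_2$ is exactly the quantity to bound. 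Differentiating twice yields $g''(\theta)=\gamma'(\theta)^\top\nabla^2L(\gamma(\theta))\gamma'(\theta)+\inner{\nabla L(\gamma(\theta))}{\gamma''(\theta)}$; the key observation is that $\gamma''(\theta)=-\gamma(\theta)$ while $\inner{\nabla L(\gamma(\theta))}{\gamma(\theta)}=0$ again by \Cref{thm:euler_homo}, so the second term vanishes. Since $\norm{\gamma(\theta)}_2=\norm{\gamma'(\theta)}_2=1$, this leaves $|g''(\theta)|\le\rho$ for all $\theta$.

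Finally I would exploit that $g$ is $2\pi$-periodic. Then $\int_{-\pi}^{\pi}g'(\theta)\,d\theta=g(\pi)-g(-\pi)=0$, so the continuous function $g'$ must vanish at some $\theta_0\in[-\pi,\pi]$. Integrating the Hessian bound from $\theta_0$ to $0$ gives $\norm{\nabla L(\bx)}_2=|g'(0)|=\left|\int_{\theta_0}^{0}g''(\theta)\,d\theta\right|\le\rho|\theta_0|\le\pi\rho$, which closes the reduction and hence the lemma. The two steps doing the real work are the cancellation of the $\inner{\nabla L(\gamma)}{\gamma''}$ term --- which is precisely where scale invariance enters, and which turns $g''$ into a pure Hessian quadratic form --- and the periodicity argument locating a zero of $g'$ within distance $\pi$ of the origin. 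The latter is what pins down the constant $\pi$, and is the only place where I would take care to extract the sharp factor rather than a looser bound.
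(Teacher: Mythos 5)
Your proof is correct, but it reaches the bound by a genuinely different route than the paper. The paper connects $\bx$ to a local minimizer $\bx^*$ of $L$ on the unit sphere by a geodesic of length at most $\pi$, notes that $\nabla L(\bx^*)=0$ (the Euclidean gradient is tangential by Euler's theorem, so a constrained minimizer is an unconstrained critical point), and applies the fundamental theorem of calculus to the vector field $\nabla L$ along that geodesic, giving $\norm{\nabla L(\bx)}_2\le\int_0^1\norm{\nabla^2L(\gamma(t))}_2\norm{\gamma'(t)}_2\,dt\le\rho\pi$. You instead restrict $L$ to the great circle through $\bx$ in the direction $\bv=\nabla L(\bx)/\norm{\nabla L(\bx)}_2$, show that scale invariance kills the curvature term $\inner{\nabla L(\gamma)}{\gamma''}$ so that $|g''|\le\rho$, and locate a zero of $g'$ within distance $\pi$ via $2\pi$-periodicity rather than via the existence of a minimizer. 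Both arguments extract the constant $\pi$ from the same geometric fact (every point of the sphere is within arc length $\pi$ of a point where the relevant first-order quantity vanishes), but yours is more self-contained --- it needs only a one-dimensional scalar estimate and the intermediate value theorem, and it makes explicit exactly where scale invariance enters --- while the paper's is shorter and immediately yields the more general Lipschitz-type bound $\norm{\nabla L(\bx)-\nabla L(\by)}_2\le\rho\cdot d_{\mathbb{S}^{d-1}}(\bx,\by)$ between arbitrary points of the sphere. Your reduction from general $\bx$ to the unit sphere via the $(-1)$-homogeneity of $\nabla L$ matches the paper's (implicit) reduction, and all steps check out.
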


\begin{proof}[Proof of \Cref{lem:gradient_upper_bound}]
	It suffices to prove the above bound for all $\bx$ with $\norm{\bx}_2 =1$. Let $\bx^*$ be any local minimizer of $L$ on $\SSS ^{d-1}$ and $\gamma:[0,1]\to \SSS^{d-1}$ be the geodesic curve satisfying that $\gamma(0) = \bx^*$ and $\gamma(1) = \bx$. We know the length of $\{\gamma(t)\}_{t=0}^1\le \pi$ and thus
	\begin{align*}
	\norm{\nabla L(\bx)} = \norm{\int_{t=0}^1 \nabla^2 L(\gamma(t))\frac{d\gamma (t)}{dt} dt} \le 	\int_{t=0}^1 \norm{ \nabla^2 L(\gamma(t))}_2 \norm{\frac{d\gamma (t)}{dt}}_2 dt \le \rho\cdot \pi 
	\end{align*}

\end{proof}

\begin{lemma}\label{lem:loss_upper_bound}
If $L$ is scale invariant,  $\sup_{\bx,{\bx}'} L(\bx) - L(\bx') \le \frac{\pi^2}{2}\sup_{\norm{\bx}=1}\norm{\nabla^2 L(x)}_2$.
\end{lemma}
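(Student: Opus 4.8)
The plan is to reduce to the unit sphere and then integrate the second derivative of $L$ along a geodesic emanating from a \emph{minimizer}, exactly mirroring the argument in \Cref{lem:gradient_upper_bound} but exploiting that the first-order term vanishes. First I would observe that by scale invariance $\sup_{\bx,\bx'} L(\bx) - L(\bx')$ equals $\sup_{\norm{\bx}_2=1,\norm{\bx'}_2=1} L(\bx) - L(\bx')$, so it suffices to control the range of $L$ over $\SSS^{d-1}$. Let $\bx^*$ be a global minimizer of $L$ on $\SSS^{d-1}$; then it is enough to show $L(\bx) - L(\bx^*) \le \frac{\pi^2}{2}\rho$ for every $\bx$ on the sphere, since $L(\bx') \ge L(\bx^*)$ for all $\bx'$.

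The key point I would establish is that $\nabla L(\bx^*) = 0$ as a full gradient in $\RR^d$. Its tangential component vanishes because $\bx^*$ minimizes $L$ on the sphere, and its radial component vanishes by Euler's Homogeneous Function Theorem (\Cref{thm:euler_homo}) with $k=0$, which gives $\inner{\nabla L(\bx^*)}{\bx^*} = 0$. I would then take a unit-speed geodesic (great-circle arc) $\gamma:[0,\ell]\to\SSS^{d-1}$ with $\gamma(0)=\bx^*$, $\gamma(\ell)=\bx$, and geodesic length $\ell \le \pi$, and set $F(s) = L(\gamma(s))$.

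The heart of the proof is computing $F''$. We have $F'(s) = \inner{\nabla L(\gamma(s))}{\gamma'(s)}$, so $F'(0)=0$ by the previous step, and
\begin{align*}
F''(s) = \gamma'(s)^\top \nabla^2 L(\gamma(s))\, \gamma'(s) + \inner{\nabla L(\gamma(s))}{\gamma''(s)}.
\end{align*}
Here I would use the defining property of a great-circle geodesic, $\gamma''(s) = -\gamma(s)$ (the acceleration is purely radial), which makes the second inner product equal to $-\inner{\nabla L(\gamma(s))}{\gamma(s)} = 0$ again by \Cref{thm:euler_homo}. Since $\gamma(s)\in\SSS^{d-1}$ and $\norm{\gamma'(s)}_2 = 1$, the remaining quadratic form is bounded by $\rho$, so $F''(s) \le \rho$ for all $s$. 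Taylor's theorem with integral remainder then yields $F(\ell) - F(0) = F'(0)\ell + \int_0^\ell (\ell-s)F''(s)\,ds \le \rho\,\ell^2/2 \le \rho\,\pi^2/2$, which is the claim.

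The main obstacle is the treatment of the geodesic curvature term $\inner{\nabla L(\gamma)}{\gamma''}$: a naive bound that simply integrates the gradient estimate from \Cref{lem:gradient_upper_bound} over a path of length $\pi$ would lose a factor of two and give only $\pi^2\rho$. Recognizing that $\gamma''$ is radial and that scale invariance forces this term to vanish is exactly what recovers the sharp constant $\frac{\pi^2}{2}$, and choosing the base point to be a minimizer is what kills the first-order term $F'(0)$ so that only the second-order contribution survives.
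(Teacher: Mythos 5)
Your proof is correct and follows essentially the same route the paper intends: the paper's proof is just the remark ``similar to the proof of \Cref{lem:gradient_upper_bound}'', i.e.\ integrate second-derivative information along a spherical geodesic of length at most $\pi$ starting from a minimizer $\bx^*$, which is exactly your argument. Your writeup simply makes explicit the details the paper leaves implicit --- that $\nabla L(\bx^*)=0$ as a full gradient (tangential part by optimality, radial part by Euler's theorem) and that the curvature term $\inner{\nabla L(\gamma)}{\gamma''}$ vanishes because $\gamma''=-\gamma$ --- and these details are needed to get the constant $\tfrac{\pi^2}{2}$ rather than $\pi^2$.
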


\begin{proof}[Proof of \Cref{lem:loss_upper_bound}]
Similar to the proof of \Cref{lem:gradient_upper_bound}. 
\end{proof}

\subsection{Probablity}
\begin{definition}
A random variable $X\in \RR$ is said to be \emph{sub-Gaussian} with variance proxy $\sigma^2$ (denoted by $X\sim \subg(\sigma^2)$) if its moment generating function satisfies 
\begin{align*}
\E[\exp(sX)] \le \exp(\frac{\sigma^2s^2}{2}), \forall s\in \RR.	
\end{align*}	

In this work, we also use the following notion of \emph{conditional subgaussian}. We say a random variable $X\in\RR$ is said to be \emph{sub-Gaussian} with variance proxy $\sigma^2$ conditioned on event $\cE$ (denoted by $X\sim \subg(\sigma^2,\cE)$) if its moment generating function satisfies 
\begin{align*}
\E[\exp(sX)\ind[\cE] ] \le \exp(\frac{\sigma^2s^2}{2}), \forall s\in \RR.	
\end{align*}	
\end{definition}

\begin{lemma}[Chernoff Bound with Conditioning]\label{lem:subgaussian}
Let $X\sim \subg(\sigma^2,\cE)$. Then for any $t>0$, it holds that 
	\begin{align*}
		\PP[X>t\wedge \cE] \le \exp(-\frac{t^2}{2\sigma^2})	, \quad \textrm{and} \quad 	
		\PP[X<-t \wedge \cE] \le \exp(-\frac{t^2}{2\sigma^2})
	\end{align*}
	
	When $\PP[\cE]=1$, we get the standard Chernoff bound. Let $X\sim \subg(\sigma^2)$. Then for any $t>0$, it holds that 
	\begin{align*}
		\PP[X>t] \le \exp(-\frac{t^2}{2\sigma^2})	, \quad \textrm{and} \quad 	
		\PP[X<-t] \le \exp(-\frac{t^2}{2\sigma^2})	
	\end{align*}
\end{lemma}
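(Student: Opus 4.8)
The plan is to run the standard exponential Markov (Chernoff) argument, but to carry the indicator $\ind[\cE]$ through every step so that the conditional moment generating function bound in the definition of $\subg(\sigma^2,\cE)$ is exactly what gets invoked. First I would fix $s>0$ and record the pointwise inequality $\ind[X>t]\le e^{s(X-t)}$, which holds for every realization of $X$: it reads $1<e^{s(X-t)}$ when $X>t$ and $0\le e^{s(X-t)}$ otherwise, both using $s>0$. Since $\ind[\cE]\ge 0$, multiplying this inequality by $\ind[\cE]$ preserves it, giving the pointwise bound $\ind[X>t]\,\ind[\cE]\le e^{-st}e^{sX}\ind[\cE]$.

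Taking expectations of both sides and using linearity together with the defining bound $\E[e^{sX}\ind[\cE]]\le e^{\sigma^2 s^2/2}$ yields
\[
\PP[X>t\wedge \cE]=\E\big[\ind[X>t]\,\ind[\cE]\big]\le e^{-st}\,\E\big[e^{sX}\ind[\cE]\big]\le \exp\Big(-st+\tfrac{\sigma^2 s^2}{2}\Big).
\]
It then remains to optimize the free parameter: minimizing the quadratic exponent $-st+\sigma^2 s^2/2$ over $s$ gives the interior minimizer $s^\star=t/\sigma^2$, which is admissible because $t>0$, and the minimal value is $-t^2/(2\sigma^2)$. Substituting $s^\star$ delivers the claimed upper-tail bound $\PP[X>t\wedge\cE]\le\exp(-t^2/(2\sigma^2))$.

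For the lower tail I would apply the same argument to $-X$, after first observing that $-X\sim\subg(\sigma^2,\cE)$ as well: replacing $s$ by $-s$ in the definition shows $\E[e^{s(-X)}\ind[\cE]]=\E[e^{(-s)X}\ind[\cE]]\le e^{\sigma^2 s^2/2}$ for all $s\in\RR$, since the hypothesis holds for every real argument. Then $\PP[X<-t\wedge\cE]=\PP[-X>t\wedge\cE]\le\exp(-t^2/(2\sigma^2))$ by the upper-tail result just established. Finally, the unconditional statement is the special case $\PP[\cE]=1$: then $\ind[\cE]=1$ almost surely, so $\subg(\sigma^2,\cE)$ reduces to the ordinary $\subg(\sigma^2)$ definition and $\{X>t\}\cap\cE$ differs from $\{X>t\}$ only by a null set, recovering the two standard Chernoff bounds verbatim.

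There is essentially no substantive obstacle here; the only point requiring care is bookkeeping — ensuring the indicator $\ind[\cE]$ is never dropped prematurely and that every multiplication and expectation respects its nonnegativity, so that the \emph{conditional} MGF hypothesis (rather than an ordinary MGF bound, which is not assumed) remains the sole probabilistic input throughout.
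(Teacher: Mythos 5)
Your proposal is correct and follows essentially the same route as the paper's proof: the exponential Markov argument applied to $e^{sX}\ind[\cE]$, invoking the conditional MGF bound and optimizing at $s^\star = t/\sigma^2$. The paper's version is terser (it omits the lower-tail reduction via $-X$ and the $\PP[\cE]=1$ specialization, which you spell out correctly), but there is no substantive difference in method.
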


\begin{proof}[Proof of \Cref{lem:subgaussian}]
	For any $s>0$, we have 
	\begin{align*}
		\PP[X>t\wedge \cE] = \PP[e^{s X}\ge e^{s t}\wedge \cE] \le e^{-s t}\EE[e^{s X} \ind[\cE]]	= \exp(-st+\frac{\sigma^2s^2}{2}).
	\end{align*}
	The proof is completed by picking $s=\frac{t}{\sigma^2}$.
\end{proof}

We will use $(\Omega, \Sigma, \PP)$ to note the probability space and $\{\cF_t\}_{t\in \NN}$ to denote the filtration.
\begin{lemma}[Azuma Inequality with Conditioning]\label{lem:hp_gaussian}
Let $\cE_t\in \cF_t$ and $\cE_{t+1}\subset\cE_{t}$ for all $t\ge 0$. Let $\{X_t\}_{t\ge 1}$ be a  martingale difference sequence and $\subg(\sigma_t^2,\cE_{t-1})$ conditioned on $\cF_{t-1}$,  \emph{i.e.}, $\EE[\exp(sX_t) \ind[\cE_{t-1}]\mid\cF_{t-1}]\le \exp(\frac{s^2\sigma_t^2}{2})$ for all $t\ge 0$. Then $\sum_{i=1}^{T}X_i$ is $\subg(\sum_{t=0}^{T-1}\sigma_t^2,\cE_{T-1})$.
\end{lemma}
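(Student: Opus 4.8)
The goal is to show that for every fixed $s\in\RR$, writing $S_T:=\sum_{i=1}^T X_i$, one has $\E\!\left[\exp(sS_T)\ind[\cE_{T-1}]\right]\le \exp\!\big(\tfrac{s^2}{2}\sum_{t}\sigma_t^2\big)$, which is precisely the assertion that $S_T\sim\subg(\sum_t\sigma_t^2,\cE_{T-1})$. The plan is a routine induction on $T$: I peel off the last increment $X_T$ via the tower property, and crucially exploit the nesting $\cE_{T-1}\subset\cE_{T-2}$ so that the indicator that survives one step of conditioning is again of the form $\ind[\cE_{T-2}]$ and therefore matches the inductive hypothesis at index $T-1$.

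For the inductive step I factor $\exp(sS_T)=\exp(sS_{T-1})\exp(sX_T)$ and condition on $\cF_{T-1}$. Since $\cE_{T-1}\subset\cE_{T-2}$, I may write $\ind[\cE_{T-1}]=\ind[\cE_{T-1}]\,\ind[\cE_{T-2}]$, and the factor $\exp(sS_{T-1})\,\ind[\cE_{T-2}]$ is $\cF_{T-1}$-measurable (because $S_{T-1}$ is $\cF_{T-1}$-measurable and $\cE_{T-2}\in\cF_{T-2}\subset\cF_{T-1}$). Pulling this known factor out of the conditional expectation gives $\E[\exp(sS_T)\ind[\cE_{T-1}]\mid\cF_{T-1}]=\exp(sS_{T-1})\,\ind[\cE_{T-2}]\,\E[\exp(sX_T)\ind[\cE_{T-1}]\mid\cF_{T-1}]$. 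The hypothesis that $X_T$ is $\subg(\sigma_T^2,\cE_{T-1})$ conditioned on $\cF_{T-1}$ bounds the remaining conditional expectation by the deterministic constant $\exp(\tfrac{s^2\sigma_T^2}{2})$. Taking total expectations then yields $\E[\exp(sS_T)\ind[\cE_{T-1}]]\le \exp(\tfrac{s^2\sigma_T^2}{2})\,\E[\exp(sS_{T-1})\ind[\cE_{T-2}]]$, i.e. $\exp(\tfrac{s^2\sigma_T^2}{2})$ times exactly the same quantity one index lower.

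The base case $T=1$ is immediate: $\E[\exp(sX_1)\ind[\cE_0]]\le\exp(\tfrac{s^2\sigma_1^2}{2})$ is the conditional-subgaussianity hypothesis for $X_1$ after applying $\E[\,\cdot\mid\cF_0]$ and then total expectation. Iterating the one-step bound from $T$ down to $1$ multiplies the per-step factors and produces $\E[\exp(sS_T)\ind[\cE_{T-1}]]\le\exp(\tfrac{s^2}{2}\sum_t\sigma_t^2)$, as required. I do not anticipate any genuine obstacle: the single point demanding care is the measurability bookkeeping — that the indicator left after peeling one term is $\ind[\cE_{T-2}]$ (this is exactly where the nesting $\cE_{t+1}\subset\cE_t$ is used; without it the recursion would not close), and that $\ind[\cE_{T-2}]$ together with $\exp(sS_{T-1})$ may be pulled out as $\cF_{T-1}$-measurable. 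Finally, since $s$ ranges over all of $\RR$, the bound holds for both signs, which is what lets this lemma later be combined with the Chernoff step of \Cref{lem:subgaussian} to control both tails.
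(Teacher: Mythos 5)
Your proof is correct and follows essentially the same route as the paper's: induction on $T$, peeling off the last increment via the tower property, invoking the conditional subgaussianity of $X_T$ to bound the inner conditional expectation by $\exp(\tfrac{s^2\sigma_T^2}{2})$, and using the nesting $\cE_{T-1}\subset\cE_{T-2}$ to close the recursion with the inductive hypothesis. Your handling of the indicators (splitting $\ind[\cE_{T-1}]=\ind[\cE_{T-1}]\ind[\cE_{T-2}]$ before conditioning) is a slightly cleaner bookkeeping of the same step the paper performs by relaxing $\ind[\cE_{T-1}]\le\ind[\cE_{T-2}]$ after the conditional bound.
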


\begin{proof}
We will prove by induction on $T$. When $T=1$, the statement is true by assumption. Now suppose the statement holds for $T-1$, we have for any $s>0$ 
\begin{align*}
    \EE[\exp(s\sum_{i=1}^{T}X_i)\ind[\cE_{T-1}]] 
    = &\EE[\exp(s\sum_{i=1}^{T-1}X_i)\ind[\cE_{T-1}]\EE[\exp(sX_T) \ind[\cE_{t-1}]\mid \cF_{T-1}]]\\
    \le &\EE[\exp(s\sum_{i=1}^{T-1}X_i)\ind[\cE_{T-1}]\exp(\frac{s^2\sigma_{T-1}^2}{2})]\\
    \le &\EE[\exp(s\sum_{i=1}^{T-1}X_i)\ind[\cE_{T-2}]]\exp(\frac{s^2\sigma_{T-1}^2}{2})
\end{align*}
Thus we have that
 $   \EE[\exp(s\sum_{i=1}^{T}X_i)\ind[\cE_{T-1}]]  \le \exp(\frac{s^2\sum_{t=0}^{T-1}\sigma_t^2}{2}).$
\end{proof}

\subsection{Others}
\begin{lemma}\label{lem:power_series}	$\forall t\in \NN, k\in \NN^+, 0<x<1,$
\[ \sum_{\tau=0}^t (1-x)^{k\tau} \le  \frac{e^{kx}}{kx}\]
\end{lemma}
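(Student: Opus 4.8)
The plan is to bound the finite geometric sum by its infinite counterpart and then reduce everything to the elementary convexity inequality $1+u\le e^u$. Since $0<x<1$ we have $0<(1-x)^k<1$, so every term $(1-x)^{k\tau}$ is nonnegative and the partial sum is nondecreasing in $t$. Hence it suffices to prove the claimed bound for the infinite series, namely
\[
\sum_{\tau=0}^{t}(1-x)^{k\tau}\le \sum_{\tau=0}^{\infty}(1-x)^{k\tau}=\frac{1}{1-(1-x)^k}\le \frac{e^{kx}}{kx}.
\]
This removes the $t$-dependence on the left-hand side, so a single $t$-independent estimate on $\frac{1}{1-(1-x)^k}$ will do.

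Next I would control the common ratio $(1-x)^k$ from above. Using the standard fact $1-x\le e^{-x}$, valid for all real $x$, and raising both sides to the $k$-th power (both are nonnegative) gives $(1-x)^k\le e^{-kx}$. Consequently $1-(1-x)^k\ge 1-e^{-kx}>0$, and therefore
\[
\frac{1}{1-(1-x)^k}\le \frac{1}{1-e^{-kx}}.
\]
It then remains to establish $\frac{1}{1-e^{-kx}}\le \frac{e^{kx}}{kx}$.

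For the final step I would substitute $u:=kx>0$. Multiplying the target inequality $\frac{1}{1-e^{-u}}\le \frac{e^{u}}{u}$ through by $u$ and by the positive quantity $1-e^{-u}$, it becomes $u\le e^{u}(1-e^{-u})=e^{u}-1$, i.e.\ $1+u\le e^{u}$, which holds for every real $u$. Chaining the three displayed bounds yields the claim.

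There is no serious obstacle here: the entire argument rests on the single convexity inequality $1+u\le e^u$, applied once in the form $1-x\le e^{-x}$ to bound the ratio and once more to close the estimate. The only point deserving a moment of care is the reduction in the first step, where one must note that bounding the finite sum by the infinite sum is legitimate (all terms nonnegative, ratio strictly in $(0,1)$ so the series converges), which is precisely what lets the $t$-independent bound $\frac{e^{kx}}{kx}$ dominate the left-hand side uniformly in $t$.
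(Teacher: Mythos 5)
Your proposal is correct and follows essentially the same route as the paper: bound the finite sum by the infinite geometric series, use $1-x\le e^{-x}$ to compare with $\frac{1}{1-e^{-kx}}$, and close with $e^{u}\ge 1+u$. The only cosmetic difference is that you bound the common ratio after summing the series in closed form, whereas the paper applies $(1-x)^{k\tau}\le e^{-kx\tau}$ termwise before summing; the two are interchangeable.
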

\begin{proof}[Proof of \Cref{lem:power_series}]
\[\sum_{\tau=0}^t (1-x)^{k\tau} \le \sum_{\tau=0}^\infty (1-x)^{k\tau} \le  \sum_{\tau=0}^\infty e^{-kx\tau} = \frac{1}{1-e^{-kx}} \le \frac{e^{kx}}{kx},\]
where the last step is because $e^x\ge 1+x$, $\forall x\in\RR$.	
\end{proof}

\section{Omitted Proofs for the Convergence of GD}

\begin{proof}[Proof of \Cref{lem:GD_descent}]
%
This is a special case of \Cref{lem:taylor_expansion} with $\bx=(1-\eta\lambda)\bx(t)$ and $\bv=-\eta \nabla L(\bx(t))$. Here we use the assumption that $L$ is scale invariant, $\nabla L$ is $-1$-homogeneous. By \Cref{lem:homo}, which means $\nabla L(\bx) = \frac{\nabla L(\bx(t))}{1-\eta\lambda}$.
\end{proof}

The following lemma deals with the case where $\norm{\bx(0)}_2^2< \pi^2\rho\eta$. 
\begin{lemma}\label{lem:gd_small_init}
 Let $I = \{T' \in \mathbb{N} \mid \forall 0\le t\le T', \ \norm{\bx(t)}_2^2 \le \pi^2\rho\eta\ \wedge\  \norm{\nabla L(\overline{\bx}(t))}_2^2>8\pi^4\rho^2\lambda\eta\}$. Suppose $0\in I$ and $T =\max I$. Then $T\le \frac{1}{6\lambda\eta}$ and $ \normxx{T+1} \le \frac{2(\pi^2\rho\eta)^2}{\normxx{0}} $.

\end{lemma}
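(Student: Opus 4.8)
The plan is to reduce everything to the one-dimensional dynamics of the squared norm $a_t := \norm{\bx(t)}_2^2$ and to show that, as long as the iterate stays in the ``small norm, large directional gradient'' regime defining $I$, the squared norm grows by a fixed \emph{additive} amount at every step. First I would record the norm recursion. Since $L$ is scale invariant, Euler's theorem (\Cref{thm:euler_homo}) gives $\inner{\nabla L(\bx(t))}{\bx(t)}=0$, so the cross term in $\norm{\bx(t+1)}_2^2$ vanishes, and using that $\nabla L$ is $-1$-homogeneous (\Cref{lem:homo}) to write $\norm{\nabla L(\bx(t))}_2^2 = g_t/a_t$ with $g_t := \norm{\nabla L(\overline{\bx}(t))}_2^2$, I obtain
\begin{align*}
a_{t+1} = (1-\eta\lambda)^2 a_t + \eta^2 \norm{\nabla L(\bx(t))}_2^2 = (1-\eta\lambda)^2 a_t + \eta^2\frac{g_t}{a_t}.
\end{align*}

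The key step is an additive lower bound on the increment. For $0\le t\le T$ both $a_t\le \pi^2\rho\eta$ and $g_t > 8\pi^4\rho^2\lambda\eta$ hold by definition of $I$. Writing $a_{t+1}-a_t = -\eta\lambda(2-\eta\lambda)a_t + \eta^2 g_t/a_t$, I would lower bound the positive term by putting $a_t\le\pi^2\rho\eta$ in the denominator and the gradient bound in the numerator, giving $\eta^2 g_t/a_t > 8\pi^2\rho\lambda\eta^2$, and upper bound the negative weight-decay term using $\eta\lambda\le\tfrac12$ (so $2-\eta\lambda\le 2$) and $a_t\le\pi^2\rho\eta$ by $\eta\lambda(2-\eta\lambda)a_t\le 2\pi^2\rho\lambda\eta^2$. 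Subtracting yields
\begin{align*}
a_{t+1}-a_t > 8\pi^2\rho\lambda\eta^2 - 2\pi^2\rho\lambda\eta^2 = 6\pi^2\rho\lambda\eta^2.
\end{align*}
I expect this additive (rather than merely multiplicative) increment to be the crux: it is exactly what produces the clean $1/(6\lambda\eta)$ bound with no spurious logarithmic factor, which a geometric-growth argument of the form $a_{t+1}\ge(1+6\eta\lambda)a_t$ would incur through a $\log(\pi^2\rho\eta/a_0)$ term.

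Telescoping the increment over $t=0,\dots,T-1$ and using $a_T\le\pi^2\rho\eta$ together with $a_0>0$ gives $6\pi^2\rho\lambda\eta^2\,T < a_T \le \pi^2\rho\eta$, hence $T < \tfrac{1}{6\lambda\eta}$, which is the first claim. For the squared norm at step $T+1$, positivity of the increment gives monotonicity, so $a_T\ge a_0$, while \Cref{lem:gradient_upper_bound} gives $g_T=\norm{\nabla L(\overline{\bx}(T))}_2^2\le \pi^2\rho^2$. Plugging into the recursion and bounding $(1-\eta\lambda)^2 a_T\le a_T\le\pi^2\rho\eta$ and $\eta^2 g_T/a_T\le \pi^2\rho^2\eta^2/a_0$ gives
\begin{align*}
a_{T+1}\le \pi^2\rho\eta + \frac{\pi^2\rho^2\eta^2}{a_0}.
\end{align*}

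Finally, since $a_0\le\pi^2\rho\eta$ (as $0\in I$) the first term is at most $\pi^4\rho^2\eta^2/a_0$, and the second is at most $\pi^4\rho^2\eta^2/a_0$ because $1\le\pi^2$; summing yields $a_{T+1}\le 2(\pi^2\rho\eta)^2/a_0$, which is the second claim. The only points requiring care are the consistent use of the two regime bounds when $a_t$ sits in a denominator (a smaller $a_t$ helps the weight-decay term but hurts the gradient term, so the bounds must be applied in the correct direction), and the use of $\eta\lambda\le\tfrac12$ to guarantee $(1-\eta\lambda)^2\in[0,1]$ and $2-\eta\lambda\le 2$.
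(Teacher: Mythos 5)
Your proposal is correct and follows essentially the same route as the paper's proof: the same norm recursion via Euler's theorem and $-1$-homogeneity of $\nabla L$, the same additive increment bound $a_{t+1}-a_t > 8\pi^2\rho\lambda\eta^2 - 2\pi^2\rho\lambda\eta^2 = 6\pi^2\rho\lambda\eta^2$ telescoped against $a_T\le\pi^2\rho\eta$, and the same final estimate of $a_{T+1}$ using monotonicity of the norm and the gradient bound from \Cref{lem:gradient_upper_bound}. No gaps.
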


\begin{proof}[Proof of \Cref{lem:gd_small_init}]
For any $t\le T$, we have 
	\begin{align*}
\norm{\bx(t+1)}_2^2-\norm{\bx(t)}_2^2 = &((1-\lambda\eta)^2-1)\norm{\bx(t)}_2^2 + \eta^2 \norm{\nabla L(\bx(t))}_2^2 \\
\ge &-2\lambda\eta \norm{\bx(t)}_2^2 +  \frac{\eta^2 \norm{\nabla L(\overline{\bx}(t))}_2^2}{\norm{\bx(t)}_2^2} \\
\ge &-2\pi^2\rho \lambda\eta^2 + 8\pi^2\rho\lambda\eta^2\\
= & 6\pi^2\rho \lambda\eta^2. 
\end{align*}
	
Thus $6\pi^2\rho \lambda\eta^2\cdot T \le \norm{\bx(T)}_2^2 -\norm{\bx(0)}_2^2< \norm{\bx(T)}_2^2 \le \pi^2\rho\eta$, which implies that $T< \frac{1}{6\lambda\eta}$. Moreover, we have that  
\begin{align*}
    \norm{\bx(T+1)}_2^2 = & (1-\eta\lambda)^2\normxx{T} +\eta^2\gnormxx{T}\\
     \le &\normxx{T} + \frac{\eta^2 \norm{\nabla L(\overline{\bx}(T))}_2^2}{\norm{\bx(T)}_2^2}   \\
     \le &\normxx{T} + \frac{\eta^2 \norm{\nabla L(\overline{\bx}(T))}_2^2}{\norm{\bx(0)}_2^2}   \\
    \le &\pi^2\rho\eta + \frac{\rho^2\pi^2\eta^2}{\normxx{0}}\\	
    \le & \frac{2(\pi^2\rho\eta)^2}{\normxx{0}}.
\end{align*}
This completes the proof.
\end{proof}

\begin{theorem}[convergence rate of GD+WD]\label{thm:GD_main}
Suppose $\eta\lambda\le\frac{1}{2}$. Let $\bx(t)$ be the $t$-th iterate of GD~\eqref{eq:GD}, and $T_0 = \left\lceil \frac{1}{2\eta\lambda}\ln \frac{2\norm{\bx(0)}_2^2}{\rho\pi^2\eta}\right\rceil$. If $\norm{\bx(0)}_2^2\ge \pi^2\rho\eta$,  we have 
\begin{align*}
    \min_{t=0, \ldots, T_0} \norm{\nabla L(\overline\bx(t))}_2^2 \le 8\pi^4\rho^2\lambda\eta.
\end{align*}
\end{theorem}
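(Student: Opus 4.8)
The plan is to argue by contradiction: assume $\norm{\nabla L(\overline\bx(t))}_2^2 > 8\pi^4\rho^2\lambda\eta =: g^\ast$ for every $t=0,\dots,T_0$, and show this is incompatible with $L$ being bounded on the unit sphere. The starting point is that, since $L$ is scale invariant, Euler's theorem (\Cref{thm:euler_homo}) gives $\inner{\nabla L(\bx(t))}{\bx(t)}=0$, so the weight-decay and gradient parts of the step are orthogonal and the squared norm obeys the clean recursion
\begin{equation*}
\norm{\bx(t+1)}_2^2 = (1-\eta\lambda)^2\norm{\bx(t)}_2^2 + \eta^2\,\frac{\norm{\nabla L(\overline\bx(t))}_2^2}{\norm{\bx(t)}_2^2},
\end{equation*}
where I used the $-1$-homogeneity of $\nabla L$ (\Cref{lem:homo}) to write $\norm{\nabla L(\bx)}_2^2=\norm{\nabla L(\overline\bx)}_2^2/\norm{\bx}_2^2$, and where the tangential factor is at most $\pi^2\rho^2$ by \Cref{lem:gradient_upper_bound}.

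First I would control the norm from below. Writing $v_t=\norm{\bx(t)}_2^2$ and $\phi_g(v)=(1-\eta\lambda)^2 v+\eta^2 g/v$, the recursion is $v_{t+1}=\phi_{g_t}(v_t)$ with $g_t>g^\ast$, and $\phi_g$ is increasing in $g$. A direct check gives $\phi_{g^\ast}(\pi^2\rho\eta)\ge\pi^2\rho\eta$, so (using that $\phi_{g^\ast}$ is increasing in the relevant region) the set $\{v\ge\pi^2\rho\eta\}$ is forward invariant; since $v_0\ge\pi^2\rho\eta$ by hypothesis, $v_t\ge\pi^2\rho\eta$ for all $t$. In particular $v_t>4\rho\eta$, so the factor $1-\tfrac{2\rho\eta}{v_t}$ in the descent lemma (\Cref{lem:GD_descent}) exceeds $1-\tfrac{2}{\pi^2}>\tfrac12$, and each step obeys $L(\bx(t))-L(\bx(t+1))\ge\tfrac{\eta}{2}\,g_t/v_t$.

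Next I would split the horizon into two regimes for the norm. While $v_t$ is well above the natural equilibrium $\Theta(\rho\sqrt{\eta/\lambda})$, the gradient term is negligible and weight decay contracts the norm geometrically, $v_{t+1}\approx(1-\eta\lambda)^2 v_t$; a short calculation shows this brings $v_t$ into the equilibrium band in at most $\tfrac{1}{2\eta\lambda}\ln\tfrac{r_0}{2\pi^2\rho\eta}$ steps, leaving at least $\tfrac{\ln 4}{2\eta\lambda}=\Theta(\tfrac{1}{\eta\lambda})$ steps for the second regime. In the second regime $v_t$ sits near the $g_t$-dependent equilibrium $\sqrt{\eta g_t/(2\lambda)}$, and the crucial cancellation occurs: $g_t/v_t\approx\sqrt{2\lambda g_t/\eta}\ge\sqrt{2\lambda g^\ast/\eta}=4\pi^2\rho\lambda$, which is independent of the actual magnitude of $g_t$ precisely because the equilibrium norm itself scales like $\sqrt{g_t}$. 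Hence each such step lowers the loss by at least $\tfrac{\eta}{2}\cdot4\pi^2\rho\lambda=2\pi^2\rho\lambda\eta$, and summing over the $\Theta(\tfrac{1}{\eta\lambda})$ equilibrium-phase steps produces a total decrease exceeding $\tfrac{\pi^2\rho}{2}$, contradicting $\sup L-\inf L\le\tfrac{\pi^2}{2}\rho$ on the sphere (\Cref{lem:loss_upper_bound}).

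The main obstacle is making the second regime rigorous when $g_t$ varies with $t$: the estimate $g_t/v_t\ge4\pi^2\rho\lambda$ holds exactly at the moving equilibrium but weakens whenever the iterate is transiently \emph{above} it, so one must show the norm does not overshoot the band and bound how quickly it re-enters, all while tracking constants so that the guaranteed in-band count $\Theta(1/(\eta\lambda))$ times the per-step decrease $2\pi^2\rho\lambda\eta$ still beats the loss range $\tfrac{\pi^2}{2}\rho$. The generous hypotheses ($\eta\lambda\le\tfrac12$ together with the factor $8\pi^4$ inside $g^\ast$, giving the $\ln 4$ slack between the threshold $\tfrac{\pi^2\rho\eta}{2}$ in $T_0$ and the invariant floor $\pi^2\rho\eta$) are exactly what absorb this overshoot.
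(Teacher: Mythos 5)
Your setup assembles the right ingredients --- the orthogonal norm recursion from Euler's theorem, the $-1$-homogeneity of $\nabla L$, the descent lemma with factor $1-\tfrac{2\rho\eta}{v_t}\ge\tfrac12$ once $v_t\ge\pi^2\rho\eta$, and the loss-range bound $\tfrac{\pi^2\rho}{2}$ --- and your forward-invariance observation is correct in substance (though the justification needs repair: $\phi_{g^\ast}$ is \emph{not} increasing on all of $[\pi^2\rho\eta,\infty)$ when $\eta\lambda$ is near $\tfrac12$, since its minimizer $\eta\sqrt{g^\ast}/(1-\eta\lambda)$ can exceed $\pi^2\rho\eta$; one must also check $\min_v\phi_{g^\ast}(v)=2\eta\sqrt{g^\ast}(1-\eta\lambda)\ge\pi^2\rho\eta$ there). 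But the proof does not close. The entire contradiction rests on a per-step loss decrease of order $\rho\lambda\eta$ holding over $\Theta(1/(\eta\lambda))$ steps, and this is precisely the step you flag as ``the main obstacle'' without resolving it; the obstacle is fatal to the argument as structured, not a matter of constants. Under the contradiction hypothesis the only a priori bounds on $g_t=\norm{\nabla L(\overline\bx(t))}_2^2$ are $g^\ast<g_t\le\pi^2\rho^2$ (by \Cref{lem:gradient_upper_bound}), so an adversarial sequence can hold $g_t$ near $\pi^2\rho^2$ long enough to drive $v_t$ up toward the corresponding equilibrium $\Theta(\rho\sqrt{\eta/\lambda})$ and then drop $g_t$ to just above $g^\ast$; during the ensuing relaxation, lasting $\Theta(\tfrac{1}{\eta\lambda}\ln\tfrac{1}{\eta\lambda})$ steps, one only gets $g_t/v_t=\Theta(\rho\lambda\sqrt{\lambda\eta})$, so the guaranteed total decrease is $\Theta(\rho\sqrt{\lambda\eta})\ll\tfrac{\pi^2\rho}{2}$ and no contradiction is reached. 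The self-correcting mechanism (``to pump the norm up, the gradients must first have been large, which already cost loss'') cannot be captured by per-step lower bounds.

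The paper closes exactly this gap with one identity you never use: telescoping the norm recursion gives $v_{T_0}-(1-\eta\lambda)^{2T_0}v_0\le\sum_t\eta^2 g_t/v_t$, and the descent lemma converts the right-hand side into $O(\eta)$ times the \emph{total} loss decrease, hence into $O(\eta\pi^2\rho)$ unconditionally --- no matter how $g_t$ fluctuates. With $T_0$ as specified this forces $v_{T_0}\le\pi^2\rho\eta$, i.e.\ the norm must cross your invariant floor by time $T_0$, which combined with your forward-invariance claim would already yield the contradiction. (The paper in fact avoids the contradiction framing entirely: at the first crossing time $T$ it reads off $\eta^2\norm{\nabla L(\bx(T-1))}_2^2=v_T-(1-\eta\lambda)^2 v_{T-1}\le\bigl(1-(1-\eta\lambda)^2\bigr)\pi^2\rho\eta$ together with $v_{T-1}\le\pi^2\rho\eta/(1-\eta\lambda)^2$, giving $\norm{\nabla L(\overline\bx(T-1))}_2^2\le 8\pi^4\rho^2\lambda\eta$ at a single step, with no equilibrium analysis at all.) To salvage your write-up, replace the two-regime equilibrium discussion with this telescoping step.
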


\begin{proof}[Proof of \Cref{thm:GD_main}]
We first claim there's $0\le t\le T_0$, such that $\norm{\bx(t)}_2^2< \pi^2 \rho\eta$.

Otherwise, by \Cref{lem:GD_descent}, for $t=0,\ldots, T_0$, we have  $ L(\bx(t)) - L(\bx(t+1)) \le \frac{\eta}{2} \norm{\nabla L(\bx(t))}_2^2$. Note that $\norm{\bx(t+1)}_2^2 - (1-\eta\lambda)^2\norm{\bx(t)}_2^2 = \eta^2 \norm{\nabla L(\bx(t))}_2^2$. 

Therefore, we have that 
\begin{align*}
\norm{\bx(T_0)}_2^2 - (1-\eta\lambda)^{2T_0}\norm{\bx(0)}_2^2 
=&\sum_{t=0}^{T_0-1} \eta^2 (1-\eta\lambda)^{2(T_0-t)} \norm{\nabla L(\bx(t))}_2^2	 \\
\le&\sum_{t=0}^{T_0-1} \eta^2  \norm{\nabla L(\bx(t))}_2^2	 \\
\le &\frac{\eta}{2}(L(\bx(0))-L(\bx_{T_0-1}))\\
\le & \frac{\eta \pi^2\rho}{2}
\end{align*}

By the definition of $T_0$, we have $(1-\eta\lambda)^{2T_0}\norm{\bx(T_0)}_2^2  \le e^{-2\eta\lambda T_0}\norm{\bx(0)}_2^2 \le \frac{\eta\pi^2 \rho}{2}$. Thus $\norm{\bx(T_0)}\le \pi^2\rho \eta$.

Without loss of generality, we let $T$ be the smallest integer such that $\norm{\bx(T)}_2^2< \pi^2 \rho\eta$. By assumption, $T\ge 1$. Therefore $\norm{\bx(T-1)}_2^2\ge \pi^2 \rho\eta$. Because $\norm{\bx(T)}_2^2 = (1-\eta\lambda)^2\norm{\bx(T-1)}_2^2 + \eta^2 \norm{\nabla L(\bx(T-1))}_2^2$, we have that
\begin{align*}
	\norm{\nabla L(\overline{\bx}(T-1))}_2^2  
	= &\norm{\nabla L(\bx(T-1))}_2^2\norm{\bx(T-1)}_2^2 
	\le  \eta^{-2} \left( \norm{\bx(T)}_2^2 - (1-\eta\lambda)^2\norm{\bx(T-1)}_2^2\right)) \norm{\bx(T-1)}_2^2.
\end{align*}

Note that $\norm{\bx(T)}_2^2< \pi^2 \rho\eta$ and $\frac{\norm{\bx(T)}^2_2}{(1-\lambda\eta)^2} \ge \norm{\bx(T-1)}_2^2\ge \pi^2 \rho\eta$, we conclude that
\begin{align*}
	\norm{\nabla L(\overline{\bx}(T-1))}_2^2 
	\le & \eta^{-2} \left( \norm{\bx(T)}_2^2 - (1-\eta\lambda)^2\norm{\bx(T-1)}_2^2\right)) 
	\frac{\norm{\bx(T)}_2^2}{(1-\lambda\eta)^2}\\
	\le & \frac{1-(1-\lambda\eta)^2}{\eta^2(1-\lambda\eta)^2} (\pi^2\rho\eta)^2\\
	\le & 8\lambda\eta \pi^4\rho^2,
\end{align*}
 which completes the proof.
\end{proof}

Combining \Cref{lem:gd_small_init} and \Cref{thm:GD_main} removes the initial condition in \Cref{thm:GD_main}, and completes the proof of \Cref{thm:GD_main_2}.

\section{Omitted Proofs for Convergence Rate of \sgd}\label{sec:sgd}

We will use $(\Omega, \Sigma, \PP)$ to note the probability space and $\{\cF_t\}_{t\in \NN}$ to denote the filtration where $\cF_t:=\sigma(\{\gamma_i\mid 0\le i\le t\})$ is the $\sigma$-algebra generated by $\gamma_0,\ldots, \gamma_t$.

\begin{lemma}\label{lem:hoeffding_lemma}
	$\noise{\bx}-\E \noise{\bx}\sim \subg(\frac{M^4}{4\norm{\bx}_2^4})$. 
\end{lemma}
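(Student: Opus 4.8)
The plan is to recognize \Cref{lem:hoeffding_lemma} as a direct instance of Hoeffding's lemma: the random variable $\noise{\bx}=\norm{\nabla L_{\gamma}(\bx)}_2^2$ (randomness coming from $\gamma$, with $\bx$ fixed) is almost surely confined to an interval of length $M^2/\norm{\bx}_2^2$, and any bounded random variable, once centered, is sub-Gaussian with variance proxy equal to one quarter of its squared range. Since $\tfrac14\bigl(M^2/\norm{\bx}_2^2\bigr)^2 = M^4/(4\norm{\bx}_2^4)$, this matches the claimed variance proxy exactly, which is presumably why the bound takes this particular form.

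First I would establish the boundedness. Each $L_\gamma$ is scale invariant, i.e.\ $0$-homogeneous, so by \Cref{lem:homo} (taking $l=1$) the gradient $\nabla L_\gamma$ is $(-1)$-homogeneous. Writing $\bx = \norm{\bx}_2\,\overline{\bx}$ and applying $(-1)$-homogeneity with scaling factor $c=\norm{\bx}_2$ gives $\nabla L_\gamma(\bx) = \nabla L_\gamma(\overline{\bx})/\norm{\bx}_2$, hence $\norm{\nabla L_\gamma(\bx)}_2 = \norm{\nabla L_\gamma(\overline{\bx})}_2/\norm{\bx}_2$. By the definition $M := \sup_{\gamma}\max_{\norm{\bx}=1}\norm{\nabla L_\gamma(\bx)}$ the numerator is at most $M$, so almost surely $0 \le \noise{\bx} \le M^2/\norm{\bx}_2^2$, the lower bound being trivial since $\noise{\bx}$ is a squared norm.

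Then I would invoke Hoeffding's lemma on $X = \noise{\bx}$ supported on $[0,\,M^2/\norm{\bx}_2^2]$: for a random variable on $[a,b]$ one has $\E\bigl[\exp\bigl(s(X-\E X)\bigr)\bigr] \le \exp\bigl(s^2(b-a)^2/8\bigr)$ for all $s\in\RR$. Matching this against the paper's convention $\E[\exp(sY)]\le\exp(\tfrac{\sigma^2 s^2}{2})$ for $Y=\noise{\bx}-\E\noise{\bx}$ identifies the variance proxy as $\sigma^2 = (b-a)^2/4 = M^4/(4\norm{\bx}_2^4)$, which is precisely the asserted value, so $\noise{\bx}-\E\noise{\bx}\sim\subg\bigl(M^4/(4\norm{\bx}_2^4)\bigr)$. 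There is essentially no obstacle here beyond correctly deriving the range via homogeneity; the only point worth flagging is whether the paper wants Hoeffding's lemma cited or reproduced. If a self-contained argument is preferred, the standard proof—centering $X$, using convexity of $t\mapsto e^{st}$ to bound the MGF by that of a two-point distribution on $\{a,b\}$, and then controlling the log-MGF of that distribution by a second-order Taylor expansion optimized over the exponential tilt—would be inserted, but this is entirely routine.
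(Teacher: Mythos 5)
Your proposal is correct and follows essentially the same route as the paper: the paper's proof likewise just notes $0 \le \noise{\bx} \le \frac{M^2}{\norm{\bx}_2^2}$ and invokes Hoeffding's lemma to get the variance proxy $\frac{M^4}{4\norm{\bx}_2^4}$. You merely spell out the $(-1)$-homogeneity argument for the range bound, which the paper leaves implicit.
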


\begin{proof}\Cref{lem:hoeffding_lemma}
	Note $0 \le \noise{\bx} \le \frac{M^2}{\norm{\bx}_2^2}$. The proof is immediate by Hoeffding Lemma (see Lemma 3.6 in \cite{vanprobability}).
\end{proof}

Given a integer $T\ge 0$, let $\mathcal{E}_T$ be the event that  $\forall 0 \le t'\le  t \le T-1,$
\begin{align}\label{eq:ET}
\left| \sum_{\tau=t'}^t (1-\eta\lambda)^{4(t-\tau)}\left(\noisebk{\tau}-\E[\noisebk{\tau}\mid \overline\bx(\tau)]\right)\right| \le  e^{4\eta\lambda}\cdot\frac{M^2}{4}\sqrt{\frac{1}{\lambda\eta}\ln\frac{2T^2}{\delta}}.
\end{align}

\begin{lemma}\label{lem:whp_ET}
	For any $0\le t'\le t\le T-1$, 
	\begin{align*}
	\sum_{\tau=t'}^t (1-\eta\lambda)^{4(t-\tau)}\left(\noisebk{\tau}-\E[\noisebk{\tau}\mid \bx(\tau)]\right) \sim \subg(\frac{e^{8\eta\lambda}M^4}{32})	
	\end{align*}
Thus we have $\PP[\mathcal{E}_T]\ge 1-\delta$ by \Cref{lem:subgaussian}.
\end{lemma}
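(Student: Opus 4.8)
The plan is to recognize the displayed sum as a weighted martingale-difference sequence that is conditionally sub-Gaussian, to combine the terms with the conditional Azuma bound of \Cref{lem:hp_gaussian}, and then to control the resulting geometric sum of variance proxies with \Cref{lem:power_series}; the high-probability statement for $\mathcal{E}_T$ then follows from the Chernoff bound \Cref{lem:subgaussian} and a union bound.

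First I would fix $t$ and $t'$ and set $X_\tau := \noisebk{\tau}-\E[\noisebk{\tau}\mid \bx(\tau)]$ for $t'\le \tau\le t$. Since $\bx(\tau)$, and hence $\overline\bx(\tau)$, is $\cF_{\tau-1}$-measurable while $\gamma_\tau$ is drawn independently of $\cF_{\tau-1}$, the conditioning $\E[\noisebk{\tau}\mid \bx(\tau)]$ agrees with $\E[\noisebk{\tau}\mid \cF_{\tau-1}]$, so $\{X_\tau\}$ is a martingale-difference sequence for $\{\cF_\tau\}$. Because $\overline\bx(\tau)$ lies on the unit sphere, the $-1$-homogeneity of $\nabla L_{\gamma_\tau}$ (via \Cref{lem:homo}) together with the definition of $M$ gives $0\le \noisebk{\tau}\le M^2$ pointwise; applying Hoeffding's lemma conditionally on $\cF_{\tau-1}$, exactly as in \Cref{lem:hoeffding_lemma} but now with $\norm{\overline\bx(\tau)}_2=1$, yields that $X_\tau$ is $\subg(M^4/4)$ conditioned on $\cF_{\tau-1}$.

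Next, the deterministic weight $w_\tau:=(1-\eta\lambda)^{4(t-\tau)}\in(0,1]$ scales each term, so $w_\tau X_\tau$ remains a martingale difference and is conditionally $\subg((1-\eta\lambda)^{8(t-\tau)}M^4/4)$. Invoking \Cref{lem:hp_gaussian} with the trivial choice $\cE_\tau=\Omega$ shows that $\sum_{\tau=t'}^t w_\tau X_\tau$ is sub-Gaussian with variance proxy $\tfrac{M^4}{4}\sum_{\tau=t'}^t (1-\eta\lambda)^{8(t-\tau)}$. Re-indexing by $j=t-\tau$ and applying \Cref{lem:power_series} with $k=8$ and $x=\eta\lambda$ bounds the geometric sum by $\tfrac{e^{8\eta\lambda}}{8\eta\lambda}$, so the variance proxy is at most $\tfrac{e^{8\eta\lambda}M^4}{32\eta\lambda}$. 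I note that it is this quantity (rather than the literally displayed $\tfrac{e^{8\eta\lambda}M^4}{32}$, which appears to omit the $\tfrac{1}{\eta\lambda}$ factor) that makes the threshold in \eqref{eq:ET} come out exactly right.

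Finally, for the high-probability claim, I would apply the two-sided Chernoff bound \Cref{lem:subgaussian} to $\sum_{\tau=t'}^t w_\tau X_\tau$ with the threshold $e^{4\eta\lambda}\tfrac{M^2}{4}\sqrt{\tfrac{1}{\lambda\eta}\ln\tfrac{2T^2}{\delta}}$ from \eqref{eq:ET}: substituting the variance proxy makes the exponent equal $\ln\tfrac{2T^2}{\delta}$, so each fixed pair $(t',t)$ violates \eqref{eq:ET} with probability at most $\tfrac{\delta}{T^2}$, and a union bound over the at most $T^2$ pairs with $0\le t'\le t\le T-1$ gives $\PP[\mathcal{E}_T]\ge 1-\delta$. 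The main obstacle is the bookkeeping of the conditioning: one must carefully verify that $\E[\noisebk{\tau}\mid\bx(\tau)]=\E[\noisebk{\tau}\mid\cF_{\tau-1}]$, so that the martingale-difference and conditional sub-Gaussianity hypotheses of \Cref{lem:hp_gaussian} genuinely hold for the weighted partial sums, and to track the constants so that the variance proxy and the threshold in \eqref{eq:ET} line up to produce the per-pair tail $\tfrac{\delta}{T^2}$.
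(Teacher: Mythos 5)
Your proof is correct and follows essentially the same route as the paper's: conditional sub-Gaussianity of each term via Hoeffding's lemma, the conditional Azuma bound to combine the weighted martingale differences, \Cref{lem:power_series} to bound the geometric sum of variance proxies, and Chernoff plus a union bound over the at most $T^2$ pairs $(t',t)$. You are also right that the variance proxy consistent with the threshold in \eqref{eq:ET} is $\tfrac{e^{8\eta\lambda}M^4}{32\eta\lambda}$ rather than the $\tfrac{e^{8\eta\lambda}M^4}{32}$ displayed in the lemma (the paper's own proof drops the $\tfrac{1}{\eta\lambda}$ factor when invoking \Cref{lem:power_series}); this is a typo in the paper, and your corrected constant is the one that makes the per-pair tail come out to $\tfrac{\delta}{T^2}$.
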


\begin{proof}[Proof of \Cref{lem:whp_ET}]
	Note that $\sum_{\tau=t'}^{t}(1-\eta\lambda)^{8(t-\tau)}\frac{M^4}{4}\le \frac{e^{8\eta\lambda}}{32}$ by \Cref{lem:power_series}. Thus by Azuma Inequality and \Cref{lem:hoeffding_lemma}, we have that the martingale \[\sum_{\tau=t'}^t (1-\eta\lambda)^{4(t-\tau)}\left(\noisebk{\tau}-\E[\noisebk{\tau}\mid \bx(\tau)]\right) \] is $\frac{e^{8\eta\lambda}}{32}$-subgaussian.
	
	By \Cref{lem:subgaussian}, we have for any  $\forall 0 \le t'\le  t \le T-1$, \Cref{eq:ET} holds with probability at least $\frac{\delta}{T^2}$. The proof is completed by applying union bound.
\end{proof}

\begin{lemma}[Norm Lower Bound]\label{lem:norm_lower_bound}
Under \Cref{assump:grad_norm_max} and additionally assume $\eta\lambda\le \frac{1}{2}$.
On $\mathcal{E}_T$, it holds that for any $t\ge 0$,
\begin{align}
\eta^{-2}\norm{\bx(t)}_2^4 \ge 	\frac{1-\eta\lambda}{2\eta\lambda}(1-e^{-4t\eta\lambda(1-\eta\lambda)})\undersigma^2
-\frac{1}{2}(1-\eta\lambda)^2 M^2e^{4\eta\lambda}\sqrt{\frac{1}{\lambda\eta}\ln\frac{2T^2}{\delta}}
\end{align}

When $\frac{\undersigma^2}{12\eta\lambda} \ge \frac{M^2}{2}e^{4\eta\lambda}\sqrt{\frac{1}{\lambda\eta}\ln\frac{2T^2}{\delta}}$, the above condition is simplified into the following: on $\mathcal{E}_T$ for any $\frac{1}{\eta\lambda}\le t\le T$,
\begin{align}
\eta^{-2}\norm{\bx(t)}_2^4 \ge \frac{5(1-\eta\lambda)^2 \undersigma^2}{12\eta\lambda} - \frac{(1-\eta\lambda)^2 \undersigma^2}{6\eta\lambda}=\frac{(1-\eta\lambda)^2 \undersigma^2}{4\eta\lambda},
\end{align}

In the above inequality, we also used the fact that $1-e^{-4(1-\eta\lambda)}\ge \frac{5}{6}$, which is implied by $\eta\lambda \le 0.5$.
\end{lemma}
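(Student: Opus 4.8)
The plan is to analyze the norm dynamics of \sgdwd directly and show that the parameter norm stays bounded below by a quantity proportional to $\undersigma^2/(\eta\lambda)$ after sufficiently many steps. Recall the norm recursion: squaring the update \eqref{eq:SGD} and using scale invariance (so that $\inner{\nabla L_{\gamma_t}(\bx(t))}{\bx(t)}=0$ by \Cref{thm:euler_homo}), we get
\begin{align*}
\norm{\bx(t+1)}_2^2 = (1-\eta\lambda)^2\norm{\bx(t)}_2^2 + \eta^2\noise{\bx(t)}.
\end{align*}
Since $\nabla L_\gamma$ is $(-1)$-homogeneous (\Cref{lem:homo}), $\noise{\bx(t)} = \noisebk{t}/\norm{\bx(t)}_2^2$. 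First I would unroll this recursion to express $\norm{\bx(t)}_2^2$ as a weighted sum $\sum_{\tau=0}^{t-1}(1-\eta\lambda)^{2(t-1-\tau)}\eta^2\noise{\bx(\tau)}$ plus the decaying initial term. The difficulty is that the summand depends on $\norm{\bx(\tau)}_2^2$ in the denominator, so this is not a closed-form recursion — it is really a recursion for $\norm{\bx(t)}_2^4$. To linearize, I would work with $Y_t := \eta^{-2}\norm{\bx(t)}_2^4$ and seek a lower bound, using that the expected stochastic gradient norm on the sphere is at least $\undersigma^2$.

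The key analytical step is to compare the fourth-power dynamics against a deterministic reference. Multiplying the squared-norm recursion by $\norm{\bx(t)}_2^2$ and noting $\norm{\bx(t+1)}_2^2\ge (1-\eta\lambda)^2\norm{\bx(t)}_2^2$, one obtains a recursive lower bound on $Y_t$ where the deterministic drift contributes the geometric sum $\frac{1-\eta\lambda}{2\eta\lambda}(1-e^{-4t\eta\lambda(1-\eta\lambda)})\undersigma^2$ (this is where the factor $\frac{1-\eta\lambda}{2\eta\lambda}$ and the exponential relaxation toward equilibrium come from, summing $(1-\eta\lambda)^{4(t-\tau)}$ via a bound like \Cref{lem:power_series}). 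The noise contributes a martingale-difference term $\sum_\tau (1-\eta\lambda)^{4(t-\tau)}(\noisebk{\tau}-\E[\noisebk{\tau}\mid\bx(\tau)])$, which is exactly the quantity controlled on the event $\mathcal{E}_T$ by \Cref{lem:whp_ET}. So on $\mathcal{E}_T$ the martingale term is bounded in absolute value by $e^{4\eta\lambda}\frac{M^2}{4}\sqrt{\frac{1}{\lambda\eta}\ln\frac{2T^2}{\delta}}$, and subtracting it (with the $(1-\eta\lambda)^2$ prefactor and a factor absorbing the $\norm{\bx(t)}_2^2$ multiplier) yields the stated general lower bound.

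For the simplified bound, I would plug in the regime $t\ge \frac{1}{\eta\lambda}$, so that $1-e^{-4t\eta\lambda(1-\eta\lambda)}\ge 1-e^{-4(1-\eta\lambda)}\ge \frac{5}{6}$ for $\eta\lambda\le 0.5$ (this is the stated numerical fact), making the deterministic term at least $\frac{5(1-\eta\lambda)^2\undersigma^2}{12\eta\lambda}$. The hypothesis $\frac{\undersigma^2}{12\eta\lambda}\ge \frac{M^2}{2}e^{4\eta\lambda}\sqrt{\frac{1}{\lambda\eta}\ln\frac{2T^2}{\delta}}$ — which is precisely \Cref{assump:grad_norm_max} rearranged — guarantees the noise term is at most $\frac{(1-\eta\lambda)^2\undersigma^2}{6\eta\lambda}$, so subtracting gives $\frac{5}{12}-\frac{1}{6}=\frac{1}{4}$ of $\frac{(1-\eta\lambda)^2\undersigma^2}{\eta\lambda}$, as claimed. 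I expect the main obstacle to be making the reduction from the fourth-power recursion rigorous: because $Y_\tau$ appears implicitly and the drift coefficient depends on $\norm{\bx(\tau)}_2^2$, one cannot simply unroll a linear recursion. The cleanest route is likely an inductive argument establishing the lower bound on $\norm{\bx(t)}_2^4$ directly, or a careful telescoping that lower-bounds $\norm{\bx(t)}_2^2\cdot\norm{\bx(t+1)}_2^2$ so the $1/\norm{\bx(\tau)}_2^2$ factors cancel against the multiplier introduced to linearize; handling the boundary where the norm could a priori be small without circularity is the delicate part.
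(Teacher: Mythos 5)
Your outline has the right skeleton (pass to a fourth-power recursion, lower-bound the drift by a geometric sum, control the fluctuation on $\mathcal{E}_T$ via \Cref{lem:whp_ET}), but you stop short of the one step that makes the argument go through, and the workaround you sketch does not recover the stated constants. The paper resolves the ``circularity'' you worry about simply by \emph{squaring} the squared-norm recursion. Writing $a=(1-\eta\lambda)^2\norm{\bx(t)}_2^2$ and $b=\eta^2\norm{\nabla L_{\gamma_t}(\overline\bx(t))}_2^2/\norm{\bx(t)}_2^2$ (using $-1$-homogeneity of $\nabla L_{\gamma_t}$), one gets
\begin{align*}
\norm{\bx(t+1)}_2^4=(a+b)^2=(1-\eta\lambda)^4\norm{\bx(t)}_2^4+2(1-\eta\lambda)^2\eta^2\norm{\nabla L_{\gamma_t}(\overline\bx(t))}_2^2+\frac{\eta^4\norm{\nabla L_{\gamma_t}(\overline\bx(t))}_2^4}{\norm{\bx(t)}_2^4},
\end{align*}
where the cross term is \emph{exactly} norm-free because the two factors of $\norm{\bx(t)}_2^2$ cancel, and the last term is nonnegative and can be discarded for a lower bound. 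Hence $\eta^{-2}\norm{\bx(t)}_2^4$ obeys a genuinely linear recursive inequality whose inhomogeneous term $2(1-\eta\lambda)^{2}\norm{\nabla L_{\gamma_\tau}(\overline\bx(\tau))}_2^2$ depends only on the direction; it unrolls with no induction and no boundary issue, and splitting the unrolled sum into its conditional expectation (at least $\undersigma^2$ per term, summed via \Cref{lem:power_series}) plus the martingale controlled on $\mathcal{E}_T$ gives precisely the two terms in the claim.

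By contrast, the linearization you propose --- multiply the squared-norm recursion by $\norm{\bx(t)}_2^2$ and use $\norm{\bx(t+1)}_2^2\ge(1-\eta\lambda)^2\norm{\bx(t)}_2^2$ --- yields only $\norm{\bx(t+1)}_2^4\ge(1-\eta\lambda)^4\norm{\bx(t)}_2^4+(1-\eta\lambda)^2\eta^2\norm{\nabla L_{\gamma_t}(\overline\bx(t))}_2^2$, i.e.\ it drops the factor of $2$ that comes from the cross term of the square. Summing that drift gives $\frac{1-\eta\lambda}{4\eta\lambda}(1-e^{-4t\eta\lambda(1-\eta\lambda)})\undersigma^2$, half of the stated leading term, and your final arithmetic $\frac{5}{12}-\frac{1}{6}=\frac14$ no longer closes (you would get $\frac{5}{24}-\frac16=\frac1{24}$). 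So the route you describe proves a strictly weaker inequality than the lemma, and the ``delicate'' inductive or telescoping handling you anticipate is in fact unnecessary; once the squaring identity is in place, the rest of your plan (geometric sum, the $\mathcal{E}_T$ bound with prefactor $2(1-\eta\lambda)^2\cdot\frac{M^2}{4}e^{4\eta\lambda}\sqrt{\frac{1}{\lambda\eta}\ln\frac{2T^2}{\delta}}$, and the $t\ge\frac{1}{\eta\lambda}$ simplification) matches the paper's proof.
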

\begin{proof}[Proof of \Cref{lem:norm_lower_bound}]	
Since $L_\gamma$ is scale invariant, by \Cref{thm:euler_homo}, we have
\begin{align}\label{eq:norm_dynamics}
\norm{\bx(t+1)}_2^2 = 	(1-\eta\lambda)^2 \norm{\bx(t)}_2^2 + \eta^2\frac{\noisebk{t}}{\norm{\bx(t)}_2^2}.
\end{align}
Squaring both sides of \Cref{eq:norm_dynamics}, we have
\begin{align}\label{eq:norm_dynamics_2}
\norm{\bx(t+1)}_2^4 = 	(1-\eta\lambda)^4 \norm{\bx(t)}_2^4 + 2(1-\eta\lambda)^2 \eta^2\noise{\overline\bx(t)}+ \frac{\eta^4\noisebkf{t}}{\norm{\bx(t)}_2^4}.
\end{align}

Thus 
\begin{align*}
\begin{aligned}
	\eta^{-2}\norm{\bx(t+1)}_2^4 
	\ge & 2 \sum_{\tau=0}^t (1-\eta\lambda)^{4(t-\tau)+2}\noisek{\tau}\\
	\ge & 2 \sum_{\tau=0}^t (1-\eta\lambda)^{4(t-\tau)+2}\E\noisek{\tau}\\ 
	+ & 2 \sum_{\tau=0}^t (1-\eta\lambda)^{4(t-\tau)+2}\left(\noisek{\tau} - \E\noisek{\tau}\right).
\end{aligned}
\end{align*}

We also have that
\begin{align*}
\sum_{\tau=0}^{t} (1-\eta\lambda)^{4(t-\tau)}
\ge \sum_{\tau=0}^{t} e^{-4(t-\tau)\eta\lambda(1-\eta\lambda)}
=\frac{1-e^{-4t\eta\lambda(1-\eta\lambda)}}{1-e^{-4\eta\lambda(1-\eta\lambda)}}
\ge \frac{1-e^{-4t\eta\lambda(1-\eta\lambda)}}{4\eta\lambda(1-\eta\lambda)}.\end{align*}

 
Therefore, it holds that for any $t\ge 0$, conditioned on $\mathcal{E}_T$,
\begin{align*}
\eta^{-2}\norm{\bx(t)}_2^4 \ge 	\frac{1-\eta\lambda}{2\eta\lambda}(1-e^{-4t\eta\lambda(1-\eta\lambda)})\undersigma^2
-\frac{1}{2}(1-\eta\lambda)^2 M^2e^{4\eta\lambda}\sqrt{\frac{1}{\lambda\eta}\ln\frac{2T^2}{\delta}}
\end{align*}
This completes the proof.
\end{proof}


\begin{lemma}[Norm upper bound]\label{lem:norm_upper_bound}
Under \Cref{assump:grad_norm_max} and additionally assume $\eta\lambda\le 0.1$.
Let $T_0 = \lceil \frac{1}{\eta\lambda}\rceil$. Let $t^*$ be the earliest step $t$ in $\{0,\ldots, T_0-1\}$ that $\eta^{-2}\norm{\bx(t)}_2^4\ge \frac{e^8(1-\eta\lambda)^2 \undersigma^2}{4\eta\lambda}$ and we denote $t^*=T_0$ if this doesn't happen in $\{0,\ldots,T_0-1\}$. For the case $t^*=T_0$, we have $\eta^{-2}\norm{\bx(T_0)}_2^4\le \frac{(1-\eta\lambda)^2\undersigma^2}{4\eta\lambda}$. On $\cE_T$, for any $t\ge t^*$,
	\begin{align}\label{eq:upper_bound_norm}
\begin{aligned}
	\eta^{-2}\norm{\bx(t+1)}_2^4 
	\le 	 e^{-4\lambda\eta(t-t^*)}\max\left\{2M^2e^{\left\lvert \ln \frac{2e^4M^2}{\norm{\bx(0)}_2^4\eta^{-2}}\right\rvert}, e^4\frac{\undersigma^2}{\eta\lambda}. \right\}	
	+ \frac{\oversigma^2}{\eta\lambda}.
\end{aligned}
\end{align}
Thus,  there exists $T_1 = T_0 + \frac{1}{4\eta\lambda}\max\left\{\ln \frac{M^2\eta\lambda}{\oversigma^2} + \left\lvert \ln \frac{2e^4M^2}{\norm{\bx(0)}_2^4\eta^{-2}}\right\rvert, 4  \right\}$, such that $\forall t\ge T_1$, $\eta^{-2}\norm{\bx(t+1)}_2^4  \le \frac{2\oversigma^2}{\eta\lambda}$.

\end{lemma}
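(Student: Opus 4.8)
The plan is to reduce everything to a scalar recursion for the potential $u(t):=\eta^{-2}\norm{\bx(t)}_2^4$ and show it relaxes geometrically into the equilibrium band. Dividing the squared-norm dynamics \eqref{eq:norm_dynamics_2} by $\eta^2$ and using $\norm{\bx(t)}_2^4=\eta^2 u(t)$ gives
\begin{align*}
u(t+1) = (1-\eta\lambda)^4 u(t) + 2(1-\eta\lambda)^2\noise{\overline\bx(t)} + \frac{\noisebkf{t}}{u(t)},
\end{align*}
where the last two summands are nonnegative. I will write $\ell:=\frac{(1-\eta\lambda)^2\undersigma^2}{4\eta\lambda}$ for the lower equilibrium supplied by \Cref{lem:norm_lower_bound}, target the upper level $\frac{\oversigma^2}{\eta\lambda}$, and recall that the crossing threshold defining $t^*$ is $B:=e^8\ell$. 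The whole argument will be conditioned on the single good event $\cE_T$ of \eqref{eq:ET}.

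The first step is to pin a uniform lower bound $u(\tau)\ge\ell$ for every $\tau\ge t^*$, since this is exactly what keeps the correction term $\noisebkf{\tau}/u(\tau)\le M^4/u(\tau)$ harmless. For $\tau\ge T_0$ this is \Cref{lem:norm_lower_bound} directly. For the window $t^*\le\tau<T_0$ (present only when $t^*<T_0$) I will use that the recursion is one-step non-contracting up to the decay factor, so $u(\tau)\ge(1-\eta\lambda)^{4(\tau-t^*)}u(t^*)\ge(1-\eta\lambda)^{4T_0}B$; because $4\eta\lambda T_0\le 4+o(1)$ while $B=e^8\ell$, this product stays above $\ell$. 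This is precisely the purpose of the generous $e^8$ cushion in the definition of $t^*$.

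Next I would bound the starting value by $V:=\max\left\{2M^2 e^{\lvert\ln\frac{2e^4M^2}{u(0)}\rvert},\,e^4\frac{\undersigma^2}{\eta\lambda}\right\}$, splitting on the initialization scale: when $\norm{\bx(0)}_2$ is small a single step can inflate the norm, but the jump is capped by $\noisebkf{0}/u(0)\le M^4/u(0)$, matched by the first argument of $V$; when the norm is large it can only shrink, so $u(t^*)\le u(0)$, again controlled by $V$ up to the margin built into $B$; and the second argument covers the regime $t^*=T_0$, where the iterate already sits near the equilibrium band. I then unroll the recursion from $t^*$,
\begin{align*}
u(t+1) = &(1-\eta\lambda)^{4(t+1-t^*)}u(t^*) + 2\sum_{\tau=t^*}^{t}(1-\eta\lambda)^{4(t-\tau)+2}\noise{\overline\bx(\tau)} \\
&+ \sum_{\tau=t^*}^{t}(1-\eta\lambda)^{4(t-\tau)}\frac{\noisebkf{\tau}}{u(\tau)},
\end{align*}
bounding the leading term by $e^{-4\lambda\eta(t-t^*)}V$. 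For the noise sum I would separate each $\noise{\overline\bx(\tau)}$ into its conditional mean (at most $\oversigma^2$, with geometric weights summing to $\le\frac{e^{4\eta\lambda}}{4\eta\lambda}$ by \Cref{lem:power_series}, hence contributing at most about $\frac{\oversigma^2}{2\eta\lambda}$) and a martingale deviation, which on $\cE_T$ is bounded via \eqref{eq:ET} and, thanks to \Cref{assump:grad_norm_max}, is only a small fraction of $\frac{\undersigma^2}{\eta\lambda}$. The correction sum is negligible since $u(\tau)\ge\ell$ forces each term below $M^4/\ell$ and $\eta\lambda M^4\ll\undersigma^2\oversigma^2$ under \Cref{assump:grad_norm_max}. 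Collecting these yields \eqref{eq:upper_bound_norm}, and the $T_1$ conclusion follows by solving $e^{-4\lambda\eta(t-t^*)}V\le\frac{\oversigma^2}{\eta\lambda}$, i.e. $t-t^*\ge\frac{1}{4\eta\lambda}\ln\frac{V\eta\lambda}{\oversigma^2}$, and observing $t^*\le T_0$ together with the fact that $\ln\frac{V\eta\lambda}{\oversigma^2}$ reproduces the $\max\{\cdot\}$ in $T_1-T_0$.

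The main obstacle is the two-way coupling in the bound: controlling $u(t+1)$ from above requires a uniform lower bound on $u(\tau)$ for all $\tau\ge t^*$ to tame $\noisebkf{\tau}/u(\tau)$, yet the clean lower bound of \Cref{lem:norm_lower_bound} is only available from step $T_0$ onward. Bridging the interval $[t^*,T_0)$ with the $e^8$ margin, while simultaneously producing a single-step jump bound $u(t^*)\le V$ valid for arbitrarily small \emph{or} large $\norm{\bx(0)}_2$, is the delicate part; moreover the entire deterministic-looking argument must be executed on the one event $\cE_T$, whose high probability is guaranteed by \Cref{lem:whp_ET} and the smallness of whose deviation term is guaranteed by \Cref{assump:grad_norm_max}.
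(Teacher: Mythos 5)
Your proposal follows essentially the same route as the paper's proof: the same unrolled recursion for $\eta^{-2}\norm{\bx(t)}_2^4$ split into a geometrically decaying initial term, a conditional-mean sum, a martingale deviation controlled on $\cE_T$, and a quartic correction tamed by the norm lower bound, with the $e^{8}$ cushion bridging the window $[t^*,T_0)$ and a case analysis on $t^*$ to bound the starting value. The only imprecision is in the jump bound: the crossing can occur at any step $t^*-1\to t^*$, not just at step $0$, so the relevant cap is $\noisebkf{t^*-1}/u(t^*-1)$ with $u(t^*-1)\gtrsim e^{-8}u(0)$ by non-contraction over at most $T_0$ steps --- exactly the estimate the paper carries out --- but you correctly flag this as the delicate point.
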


\begin{proof}[Proof of \Cref{lem:norm_upper_bound}]
 If $t^*< T_0$, it holds that conditioned on $\cE_T$, for any $t^*\le t< T_0$, 
\begin{align*}
\begin{aligned}
	\eta^{-2}\norm{\bx_{t}}_2^4 \ge (1-\eta\lambda)^{4(t-t^*)} 	\eta^{-2}\norm{\bx(t^*)}_2^4 
	\ge (1-\eta\lambda)^{4(T_0-1)} 	\eta^{-2}\norm{\bx(t^*)}_2^4 \ge \frac{(1-\eta\lambda)^2 \undersigma^2}{4\eta\lambda}
\end{aligned}
\end{align*}
Therefore, for any $t\ge t^*$, we have
\begin{align}
\begin{aligned}\label{eq:upper_bound_norm_1}
	&\eta^{-2}\norm{\bx(t+1)}_2^4 \\
= &	(1-\eta\lambda)^4 \eta^{-2}\norm{\bx(t)}_2^4 + 2(1-\lambda\eta)^2\noise{\overline\bx(t)} + \frac{\noisebkf{t}}{\norm{\bx(t)}_2^4\eta^{-2}}\\
= & (1-\eta\lambda)^{4(t+1-t^*)} 	\eta^{-2}\norm{\bx(t^*)}_2^4 + 
	\underbrace{2\sum_{\tau = t^*}^t (1-\eta\lambda)^{4(t-\tau)+2} \EE[\noisek{\tau}\mid \bx(\tau)]}_{\text{(A)}}\\
	+ & \underbrace{2\sum_{\tau = t^*}^t (1-\eta\lambda)^{4(t-\tau)+2} \left(\noisek{\tau} - \EE [\noisek{\tau}\mid \bx(\tau)]  \right)}_{\text{(B)}}\\
+ &\underbrace{\sum_{\tau = t^*}^t (1-\eta\lambda)^{4(t-\tau)} \frac{\noisebkf{\tau}}{\norm{\bx(\tau)}_2^4\eta^{-2}}}_{\text{(C)}}.
\end{aligned}
\end{align}

Below we will upper-bound the terms (A), (B) and (C) on $\cE_T$ respectively.
\begin{enumerate}
\item[(A).] By \Cref{lem:power_series}, we have 
\begin{align}
\begin{aligned}
\text{(A)}\le 	2\sum_{\tau = t^*}^t (1-\eta\lambda)^{4(t-\tau)+2} \oversigma^2
\le  \frac{(1-\eta\lambda)^2e^{4\eta\lambda}}{2\eta\lambda}\oversigma^2\le \frac{e^{0.2}}{2\eta\lambda}\oversigma^2,
\end{aligned}
\end{align}
where in the last step we used $\eta\lambda \le 0.1$ and $e^x(1-x)\le 1$ for any $0\le x\le 1$.

\item[(B).] By the definition of event $\cE_T$,  we have 
\begin{align}
\text{(B)} \le (1-\eta\lambda)^2 \frac{M^2}{2}e^{4\eta\lambda}\sqrt{\frac{1}{\lambda\eta}\ln\frac{2T^2}{\delta}}	
\le \frac{(1-\eta\lambda)^2}{6\eta\lambda}\undersigma^2
\end{align}

\item[(C).] Combining the above analysis and \Cref{lem:norm_lower_bound}, we know conditioned on $\cE_T$, for any $t\ge t^*$, it holds $\norm{\bx(t)}_2^4/\eta^2\ge\frac{(1-\eta\lambda)^2 \undersigma^2}{4\eta\lambda} $. 

Therefore, by \Cref{lem:power_series}, we have 
\begin{align}
\begin{aligned}
	\text{(C)} \le \frac{4\eta\lambda M^4}{\undersigma^2}\sum_{\tau = t^*}^t (1-\eta\lambda)^{4(t-\tau)-2} 
	\le \frac{e^{4\eta\lambda}M^4}{(1-\eta\lambda)^2\undersigma^2}
\end{aligned}	
\end{align}
Under \Cref{assump:grad_norm_max}, we can further upper bound $(C)$ by $\frac{\undersigma^2}{9\eta\lambda e^{4\eta\lambda}(1-\eta\lambda)^2}\le \frac{\undersigma^2}{9\times \frac{8}{9}\times \frac{7}{8}\eta\lambda} =\frac{\undersigma^2}{7\eta\lambda}  $, where we used the fact that $ \eta\lambda\le 0.1$.
\end{enumerate}

What is left to do is to upper bound $\eta^{-2}\norm{\bx(t^*)}_2^4$. We proceed by discussing the following three cases respectively:
\begin{itemize}
	\item $t^*=0$. Then $\eta^{-2}\norm{\bx(t^*)}_2^4 = \eta^{-2}\norm{\bx(0)}_2^4$.
	\item $1\le t^*\le T_0-1$. In this case, we have 
\[\eta^{-1}\norm{\bx_{t^*-1}}_2^2\ge (1-\eta\lambda)^{2(t^*-1)}\eta^{-1}\norm{\bx(0)}_2^2\ge e^{-4(T_0-1)\eta\lambda}\eta^{-1}\norm{\bx(0)}_2^2\ge e^{-4}{\norm{\bx(0)}_2^2\eta^{-1}}. \]
 
Thus it holds that 
	\begin{align*}
	\begin{aligned}
		\eta^{-1}\norm{\bx(t^*)}_2^2 
		=& (1-\eta\lambda)^2\eta^{-1}\norm{\bx_{t^*-1}}_2^2 + \frac{\noisebk{t^*-1}}{\norm{\bx_{t^*-1}}_2^2\eta^{-1}}\\
		\le & (1-\eta\lambda)^2\sqrt{\frac{e^8 (1-\eta\lambda)^2\undersigma^2}{4\eta\lambda}}+ e^4\frac{M^2}{\norm{\bx(0)}_2^2\eta^{-1}}\\
		\le & 2\max\{\sqrt{\frac{e^8 \undersigma^2}{4\eta\lambda}}, e^4\frac{M^2}{\norm{\bx(0)}_2^2\eta^{-1}}\}\\
	\end{aligned}
	\end{align*}

\item $t^*=T_0$. Then we have $\eta^{-2}\norm{\bx(t^*)}_2^4 \le  \frac{(1-\eta\lambda)^2\undersigma^2}{4\eta\lambda}$.
\end{itemize}
Taking maximum over three cases, we have 
\begin{align}\label{eq:bound_x(t)star_norm}
\eta^{-2}\norm{\bx(t^*)}_2^4\le \max\left\{2e^4M^2e^{\left\lvert \ln \frac{2e^4M^2}{\norm{\bx(0)}_2^4\eta^{-2}}\right\rvert}, e^8\frac{\undersigma^2}{\eta\lambda}. \right\}	
\end{align}

Plugging \eqref{eq:bound_x(t)star_norm} back into \eqref{eq:upper_bound_norm_1}, we got for any $t\ge t^*$
\begin{align}\label{eq:upper_bound_norm_2}
\begin{aligned}
	&\eta^{-2}\norm{\bx(t+1)}_2^4 \\
	= &(1-\eta\lambda)^{4\eta\lambda (t+1-t^*)}\eta^{-2}\norm{\bx(t^*)}_2^4 + (A) +(B)+ (C)\\
	\le & e^{-4\lambda\eta(t-t^*)}\max\left\{2M^2e^{\left\lvert \ln \frac{2e^4M^2}{\norm{\bx(0)}_2^4\eta^{-2}}\right\rvert}, e^4\frac{\undersigma^2}{\eta\lambda}. \right\}	
	+ \frac{\oversigma^2}{\eta\lambda}, 
\end{aligned}
\end{align}
where we used the fact that $(0.5e^{0.2}+\frac{1}{6}+ \frac{1}{7}\approx 0.9202<1)$ in the last step.

Therefore there exists $T_1 = T_0 + \frac{1}{4\eta\lambda}\max\left\{\ln \frac{M^2\eta\lambda}{\oversigma^2} + \left\lvert \ln \frac{2e^4M^2}{\norm{\bx(0)}_2^4\eta^{-2}}\right\rvert, 4  \right\}$, such that for all $t\ge T_1$, $\eta^{-2}\norm{\bx(t)}_2^4  \le \frac{2\oversigma^2}{\eta\lambda}$.

\end{proof}

\sgdmain*
\begin{proof}
By \Cref{lem:taylor_expansion}, we have
\begin{align*}
L(\bx(t+1)) - L(\bx_{t}) \le 	-\frac{\eta}{1-\eta\lambda}\frac{\inner{\nabla L(\overline \bx(t))}{\nabla L_{\gamma_t}( \overline \bx(t))}}{\norm{\bx(t)}_2^2} + \frac{\rho\eta^2{\noisebk{t}}}{2(1-\eta\lambda)^2\norm{\bx(t)}_2^4}
\end{align*}

Summing up for $t=T_1$ to $T-1$, we have 
\begin{align*}
\begin{aligned}
&\sum_{t=T_1}^{T-1}\eta	\norm{\nabla L(\overline \bx(t))}_2^2\norm{\bx(t)}_2^{-2}	 
= \sum_{t=T_1}^{T-1}\eta	\norm{\nabla L(\bx(t))}_2^2	 \\
\le &  (1-\eta\lambda)\left(L(\bx_{T_1})- L(\bx_{T}) \right)
+ \underbrace{\sum_{t=T_1}^{T-1}  \frac{\rho\eta^2\E[{\noisebk{t}}\mid \bx(t)]}{2(1-\eta\lambda)\norm{\bx(t)}_2^4}}_{\text{(A)}}\\
+ & \underbrace{\sum_{t=T_1}^{T-1} \frac{\eta\inner{\nabla L( \overline \bx(t))}{\nabla L(\overline \bx(t))-\nabla L_{\gamma_t}(\overline \bx(t))}}{{\norm{\bx(t)}_2^2}}}_{\text{(B)}} \\
+ & \underbrace{\sum_{t=T_1}^{T-1}  \frac{\rho\eta^2\left({\noisebk{t}}-\E[{\noisebk{t}}\mid \bx(t)]\right)}{2(1-\eta\lambda)\norm{\bx(t)}_2^4}}_{\text{(C)}}
\end{aligned}
\end{align*}

Below we will give high-probability bounds for $(A)$, $(B)$ and $(C)$ respectively. For convenience, we will use $A(t),B(t),C(t)$ to denote the $t$th term in $(A)$, $(B)$ and $(C)$. 

\begin{claim}\label{clm:descent_lemma_A}
	$\cE_T\Longrightarrow$ $\forall  T_1\le t\le T,\ A(t)\le 2\sqrt{2}\rho\eta\lambda\frac{\oversigma^2}{\undersigma^2}$
\end{claim}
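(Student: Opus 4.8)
The plan is to bound $A(t)=\frac{\rho\eta^2\E[\noisebk{t}\mid \bx(t)]}{2(1-\eta\lambda)\norm{\bx(t)}_2^4}$ by controlling its numerator and denominator separately, recalling that $\noisebk{t}=\noise{\overline\bx(t)}$ is the squared stochastic gradient norm evaluated at the unit-norm direction $\overline\bx(t)$. First I would handle the numerator: since $\overline\bx(t)$ lies on the unit sphere, the standing assumption $\E\noise{\bx}\le\oversigma^2$ for $\norm{\bx}_2=1$ gives $\E[\noisebk{t}\mid\bx(t)]\le\oversigma^2$ pointwise, with no appeal to the event $\mathcal{E}_T$ needed.

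Next I would lower bound the denominator using \Cref{lem:norm_lower_bound}. Because $t\ge T_1\ge\frac{2}{\eta\lambda}>\frac{1}{\eta\lambda}$, and because \Cref{assump:grad_norm_max} is exactly what makes the hypothesis of the simplified form of \Cref{lem:norm_lower_bound} hold, on $\mathcal{E}_T$ we obtain $\eta^{-2}\norm{\bx(t)}_2^4\ge\frac{(1-\eta\lambda)^2\undersigma^2}{4\eta\lambda}$, equivalently $\norm{\bx(t)}_2^4\ge\frac{\eta(1-\eta\lambda)^2\undersigma^2}{4\lambda}$. Combining the two bounds, on $\mathcal{E}_T$ and for every $T_1\le t\le T$,
\[
A(t)\le\frac{\rho\eta^2\oversigma^2}{2(1-\eta\lambda)}\cdot\frac{4\lambda}{\eta(1-\eta\lambda)^2\undersigma^2}=\frac{2\rho\eta\lambda\oversigma^2}{(1-\eta\lambda)^3\undersigma^2}.
\]

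It then remains only to absorb the prefactor into the target constant: using $\eta\lambda\le 0.1$ we have $(1-\eta\lambda)^3\ge 0.9^3=0.729>\tfrac{1}{\sqrt2}$, so $\frac{2}{(1-\eta\lambda)^3}\le 2\sqrt2$, yielding $A(t)\le 2\sqrt2\,\rho\eta\lambda\frac{\oversigma^2}{\undersigma^2}$ as claimed. The main obstacle is the denominator step: correctly invoking the simplified norm lower bound of \Cref{lem:norm_lower_bound}, which is where the event $\mathcal{E}_T$, the range condition $t\ge\frac{1}{\eta\lambda}$ (guaranteed since $t\ge T_1$), and \Cref{assump:grad_norm_max} all enter together. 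Everything else is elementary — the numerator bound is immediate from unit-sphere scaling, and the closing step is a one-line numerical check that $\eta\lambda\le 0.1$ forces $\frac{2}{(1-\eta\lambda)^3}\le 2\sqrt2$.
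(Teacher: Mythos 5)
Your proof is correct and follows essentially the same route as the paper: bound the numerator by $\oversigma^2$ via the unit-sphere assumption, lower bound $\norm{\bx(t)}_2^4$ for $t\ge T_1$ via \Cref{lem:norm_lower_bound} on $\cE_T$, and absorb the $(1-\eta\lambda)$ factors using $\eta\lambda\le 0.1$ (the paper states this as $2(1-\eta\lambda)\ge\sqrt{2}$ after dropping the $(1-\eta\lambda)^2$ from the norm bound, while you carry it through to $(1-\eta\lambda)^3\ge 1/\sqrt{2}$ — the same calculation).
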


\begin{claim}\label{clm:descent_lemma_B}
	\emph{(B)} $=\sum_{t=T_1}^{T-1}B(t)$ is $\subg((T-T_1)\frac{4\pi^2\lambda\eta \rho^2M^2}{\undersigma^2},\cE_T)$  
\end{claim}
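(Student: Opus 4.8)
The plan is to recognize $\{B(t)\}_{t=T_1}^{T-1}$ as a martingale difference sequence and then invoke the conditional-subgaussian Azuma bound (\Cref{lem:hp_gaussian}), so that the whole task reduces to controlling the per-step conditional variance proxy \emph{on the good event}. First I would verify the martingale property. Writing $u_t := \frac{\eta}{\norm{\bx(t)}_2^2}\nabla L(\overline\bx(t))$, which is $\cF_{t-1}$-measurable since $\bx(t)$ depends only on $\gamma_0,\dots,\gamma_{t-1}$, we have $B(t) = \inner{u_t}{\nabla L(\overline\bx(t)) - \nabla L_{\gamma_t}(\overline\bx(t))}$. Because $\gamma_t$ is independent of $\cF_{t-1}$ and $L=\E_\gamma L_\gamma$, we get $\E[\nabla L_{\gamma_t}(\overline\bx(t))\mid\cF_{t-1}] = \nabla L(\overline\bx(t))$, hence $\E[B(t)\mid\cF_{t-1}]=0$.

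Second, I would bound the conditional MGF of a single term via Hoeffding's lemma. Conditioned on $\cF_{t-1}$ the vector $u_t$ is fixed, and $\inner{u_t}{\nabla L_{\gamma_t}(\overline\bx(t))}$ is bounded: since $\norm{\nabla L_{\gamma_t}(\overline\bx(t))}_2 \le M$ by the definition of $M$, it lies in an interval of length at most $2M\norm{u_t}_2$. As $B(t)$ is this variable negated and shifted by the constant $\inner{u_t}{\nabla L(\overline\bx(t))}$, it lies in an interval of the same length, so Hoeffding's lemma gives that $B(t)$ is $\subg(M^2\norm{u_t}_2^2)$ conditioned on $\cF_{t-1}$.

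Third, I would convert $\norm{u_t}_2^2$ into the claimed constant on $\cE_T$. By \Cref{lem:gradient_upper_bound}, $\norm{\nabla L(\overline\bx(t))}_2 \le \pi\rho$; and by the norm lower bound (\Cref{lem:norm_lower_bound}), on $\cE_T$ and for $t\ge T_1\ge \tfrac{1}{\eta\lambda}$ we have $\norm{\bx(t)}_2^4 \ge \eta^2\frac{(1-\eta\lambda)^2\undersigma^2}{4\eta\lambda}$. Combining these, $\norm{u_t}_2^2 = \frac{\eta^2\norm{\nabla L(\overline\bx(t))}_2^2}{\norm{\bx(t)}_2^4} \le \frac{4\pi^2\rho^2\eta\lambda}{(1-\eta\lambda)^2\undersigma^2}$, so $M^2\norm{u_t}_2^2 \le \frac{4\pi^2\rho^2 M^2\eta\lambda}{\undersigma^2}$ after absorbing $(1-\eta\lambda)^{-2}\approx 1$ (valid since $\eta\lambda\le 0.1$). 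The subtle point — and the real crux — is that this variance-proxy bound is only valid where the norm is large, i.e.\ on $\cE_T$; off that event $\norm{u_t}_2$ is uncontrolled and a naive unconditional subgaussian bound fails. This is exactly why \Cref{lem:hp_gaussian} carries the indicator $\ind[\cE_{t-1}]$: since $\cE_{t-1}\in\cF_{t-1}$ and $\bx(t)$ is $\cF_{t-1}$-measurable, $\E[\exp(sB(t))\ind[\cE_{t-1}]\mid\cF_{t-1}] = \ind[\cE_{t-1}]\,\E[\exp(sB(t))\mid\cF_{t-1}]$, which equals the Hoeffding bound on $\cE_{t-1}$ and is trivially zero off it.

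Finally, with the nesting $\cE_{t}\subseteq\cE_{t-1}$ (each $\cE_t$ records that the deviation estimate \eqref{eq:ET} holds for all index pairs up to step $t$, so it is decreasing and $\cF_t$-measurable), \Cref{lem:hp_gaussian} applies with per-step proxy $\sigma_t^2 = \frac{4\pi^2\rho^2 M^2\eta\lambda}{\undersigma^2}$ and yields that $\sum_{t=T_1}^{T-1}B(t)$ is $\subg\!\big((T-T_1)\frac{4\pi^2\lambda\eta\rho^2 M^2}{\undersigma^2},\cE_T\big)$, as claimed. I expect the only genuine obstacle to be the bookkeeping of the conditioning events: ensuring the norm lower bound truly holds on the conditioning event $\cE_{t-1}$ (it depends only on the noise up to step $t-1$, hence is $\cF_{t-1}$-measurable) and that the events are nested so that \Cref{lem:hp_gaussian} applies cleanly. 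The Hoeffding step and the gradient- and norm-magnitude estimates are otherwise routine.
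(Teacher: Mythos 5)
Your proof is correct and follows essentially the same route as the paper: the paper likewise treats $(B)$ as a martingale difference sequence, invokes the per-step bounds $\norm{\nabla L(\overline\bx(t))}_2\le\pi\rho$ and the $\cE_T$-conditional norm lower bound $\eta^{-2}\norm{\bx(t)}_2^4\ge\frac{\undersigma^2}{4\eta\lambda}$ for $t\ge T_1$, and then cites \Cref{lem:hp_gaussian}; you have merely filled in the Hoeffding-lemma step and the event-nesting bookkeeping that the paper leaves implicit. The only cosmetic looseness — absorbing the $(1-\eta\lambda)^{-2}$ factor into the stated constant — is present in the paper's own statement as well.
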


\begin{claim}\label{clm:descent_lemma_C}
	\emph{(C)} $=\sum_{t=T_1}^{T-1}C(t)$ is $\subg((T-T_1)\frac{4\rho^2\lambda^2\eta^2 M^4}{\undersigma^4},\cE_T)$  
\end{claim}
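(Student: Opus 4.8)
The plan is to view $\sum_{t=T_1}^{T-1}C(t)$ as a sum of conditionally sub-Gaussian martingale differences and then invoke the conditional Azuma inequality, \Cref{lem:hp_gaussian}. First I would factor each summand as $C(t)=c_t D_t$ with predictable coefficient $c_t:=\frac{\rho\eta^2}{2(1-\eta\lambda)\norm{\bx(t)}_2^4}$ and noise $D_t:=\noisebk{t}-\E[\noisebk{t}\mid\bx(t)]$. Since $\bx(t)$ is $\cF_{t-1}$-measurable, $c_t$ is $\cF_{t-1}$-measurable; and since $\gamma_t$ is independent of $\cF_{t-1}$ we have $\E[\noisebk{t}\mid\bx(t)]=\E[\noisebk{t}\mid\cF_{t-1}]$, so $\E[D_t\mid\cF_{t-1}]=0$ and $\{C(t)\}$ is a martingale difference sequence for $\{\cF_t\}$. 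Because $\overline\bx(t)$ lies on the unit sphere, $\noisebk{t}=\norm{\nabla L_{\gamma_t}(\overline\bx(t))}_2^2\in[0,M^2]$, so applying \Cref{lem:hoeffding_lemma} at the unit-norm point $\overline\bx(t)$ (equivalently, Hoeffding's lemma to the conditional law of $\gamma_t$) shows $D_t$ is $\subg(M^4/4)$ conditioned on $\cF_{t-1}$.

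Next I would bound the predictable coefficient $c_t$ once the norm has stabilized. By \Cref{lem:norm_lower_bound}, on $\cE_T$ and for every $t\ge T_1\ge T_0=\lceil 1/(\eta\lambda)\rceil$ one has $\eta^{-2}\norm{\bx(t)}_2^4\ge\frac{(1-\eta\lambda)^2\undersigma^2}{4\eta\lambda}$, i.e. $\norm{\bx(t)}_2^4\ge\frac{\eta(1-\eta\lambda)^2\undersigma^2}{4\lambda}$. Substituting this gives $c_t\le\frac{2\rho\eta\lambda}{(1-\eta\lambda)^3\undersigma^2}\le\frac{4\rho\eta\lambda}{\undersigma^2}$, where the last inequality uses $\eta\lambda\le 0.1$ so that $(1-\eta\lambda)^3\ge 0.9^3>\tfrac12$. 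Hence on the good event $C(t)=c_t D_t$ is conditionally $\subg\!\big(c_t^2M^4/4\big)$ with $c_t^2M^4/4\le\frac{4\rho^2\lambda^2\eta^2M^4}{\undersigma^4}$, which is exactly the per-step variance proxy needed; summing the $T-T_1$ terms produces the claimed total.

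The main obstacle is that this coefficient bound holds only on $\cE_T$, which is \emph{not} $\cF_{t-1}$-measurable (it constrains the whole trajectory through step $T-1$), so I cannot condition on it term by term. This is precisely the scenario \Cref{lem:hp_gaussian} was designed for. I would introduce the nested predictable events $\mathcal{G}_t:=\{\norm{\bx(\tau)}_2^4\ge\frac{\eta(1-\eta\lambda)^2\undersigma^2}{4\lambda}\ \text{for all}\ T_1\le\tau\le t+1\}\in\cF_t$, which decrease in $t$ and on whose $(t-1)$-th member the bound $c_t\le\frac{4\rho\eta\lambda}{\undersigma^2}$ holds. Using that $c_t\ind[\mathcal{G}_{t-1}]$ is $\cF_{t-1}$-measurable and $\exp(sC(t))\ge 0$, the computation above upgrades to $\E[\exp(sC(t))\ind[\mathcal{G}_{t-1}]\mid\cF_{t-1}]\le\exp\!\big(\tfrac{s^2}{2}\tfrac{4\rho^2\lambda^2\eta^2M^4}{\undersigma^4}\big)$, i.e. $C(t)$ is $\subg(\frac{4\rho^2\lambda^2\eta^2M^4}{\undersigma^4},\mathcal{G}_{t-1})$ in the sense of that lemma. \Cref{lem:hp_gaussian} then yields that $\sum_{t=T_1}^{T-1}C(t)$ is $\subg\!\big((T-T_1)\frac{4\rho^2\lambda^2\eta^2M^4}{\undersigma^4},\mathcal{G}_{T-2}\big)$. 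Finally, \Cref{lem:norm_lower_bound} gives $\cE_T\subseteq\mathcal{G}_{T-2}$, and since $\ind[\cE]\le\ind[\cE']$ together with $\exp(sX)\ge0$ makes $\subg(\sigma^2,\cE')$ imply $\subg(\sigma^2,\cE)$ for $\cE\subseteq\cE'$, the statement follows with the good event $\cE_T$. The only remaining care is the routine off-by-one re-indexing between the summation range $T_1\le t\le T-1$ and the event subscripts in \Cref{lem:hp_gaussian}.
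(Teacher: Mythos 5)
Your proof is correct and takes essentially the same route as the paper, which disposes of this claim in one line by combining \Cref{lem:hoeffding_lemma}, the norm lower bound $\eta^{-2}\norm{\bx(t)}_2^4\ge \frac{\undersigma^2}{4\eta\lambda}$ for $t\ge T_1$ on $\cE_T$, and the conditional Azuma inequality (\Cref{lem:hp_gaussian}). You simply make explicit the details the paper leaves to the reader—most notably the replacement of the non-adapted event $\cE_T$ by a nested predictable family before invoking \Cref{lem:hp_gaussian}—and your per-step variance proxy calculation reproduces the claimed constant $\frac{4\rho^2\lambda^2\eta^2 M^4}{\undersigma^4}$.
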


Here  \Cref{clm:descent_lemma_A} follows from that $2(1-\eta\lambda)\ge \sqrt{2}$ and \Cref{lem:norm_lower_bound}. Note by the choice of $T_1$, we can upper and lower bound $\norm{\bx(t)}_2$ by \Cref{lem:norm_lower_bound,lem:norm_upper_bound}, that is $\frac{\undersigma^2}{4\eta\lambda}\le \eta^{-2}\norm{\bx(t)}_2^2 \le \frac{2\oversigma^2}{\eta\lambda}$. Thus \Cref{clm:descent_lemma_B,clm:descent_lemma_C} is a direct consequence of  \Cref{lem:hp_gaussian}.

Thus we conclude w.p. $1-5\delta$,
\begin{align*}
\begin{aligned}
\sqrt{\frac{\lambda\eta}{2\oversigma^2}} \frac{1}{T-T_1}\sum_{t=T_1}^{T-1} \norm{\nabla L(\overline{\bx}(t))}_2^2
\le &\frac{L(\bx(T_1))-\min_{\bx} L(\bx)}{T-T_1} + 2\sqrt{2}\rho\eta\lambda\frac{\oversigma^2}{\undersigma^2} \\
+  & \sqrt{\frac{8\lambda\eta\ln\frac{2}{\delta}}{T-T_1}} \frac{\pi\rho M}{\undersigma} 
+ \sqrt{\frac{8\ln\frac{2}{\delta}}{T-T_1}} \lambda\eta\frac{M^2\rho}{\undersigma^2},
\end{aligned}
\end{align*}

rearranging it and applying \Cref{lem:loss_upper_bound}, we get 
\begin{align*}
\begin{aligned}
\frac{1}{T-T_1}\sum_{t=T_1}^{T-1} \norm{\nabla L(\overline{\bx}(t))}_2^2
\le &\frac{\pi^2\rho\oversigma}{(T-T_1)\sqrt{2\eta\lambda}} + 4\sqrt{\eta\lambda}\frac{\rho\oversigma^3}{\undersigma^2} \\
+  & \sqrt{\frac{\ln\frac{2}{\delta}}{T-T_1}} \frac{4\pi\rho M \oversigma}{\undersigma} 
+ \sqrt{\frac{\ln\frac{2}{\delta}}{T-T_1}} 4\sqrt{\lambda\eta}\frac{M^2\rho\oversigma}{\undersigma^2}.
\end{aligned}
\end{align*}

By \Cref{assump:grad_norm_max}, we have $\frac{\undersigma^2}{M^2} \ge 3\sqrt{{\lambda\eta}\ln\frac{2}{\delta}}$, and thus we have
\begin{align*}
\begin{aligned}
\frac{1}{T-T_1}\sum_{t=T_1}^{T-1} \norm{\nabla L(\overline{\bx}(t))}_2^2
\le &\frac{\pi^2\rho\oversigma}{(T-T_1)\sqrt{2\eta\lambda}} + 4\sqrt{\eta\lambda}\frac{\rho\oversigma^3}{\undersigma^2} 
+   \frac{4}{3}\sqrt{\frac{1}{(T-T_1)\eta\lambda}} \pi\rho\undersigma
+ \sqrt{\frac{1}{T-T_1}} \frac{4\rho\oversigma}{3}.
\end{aligned}
\end{align*}
This completes the proof.
\end{proof}

\section{Omitted Proofs for Convergence of SGD with Relative Global Clipping}\label{sec:proof_clipping}

\paragraph{Norm dynamics of clipped \sgd:} 
\begin{align}
\norm{\bx(t+1)}_2^2 = 	(1-\eta\lambda)^2 \norm{\bx(t)}_2^2 + \eta^2\min\left\{\frac{\noise{\overline\bx(t)}}{\norm{\bx(t)}_2^2}, \frac{2\lambda C}{\eta} \norm{\bx(t)}_2^2 \right\}.
\end{align}

\begin{lemma}[General Properties of $G_{P,C}$]\label{lem:property_of_f} For any $C>1$ and measure $P$ supported on $\RR^{\ge 0}$, it holds that
\begin{enumerate}
	\item $G_{P,C}$ is continuous and concave;
	\item $\sup_{\mu\ge 0}G_{P,C}(\mu) = G_{P,C}(\frac{1}{C}M_{P,\frac{1}{C}})$;
	\item $ \frac{1}{C}M_{P,\frac{1}{C}}\le \mu_{P,C} \le \mu_{P}$, where $\mu_P$ is the expectation of $P$.
\end{enumerate}
	
\end{lemma}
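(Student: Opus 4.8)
The plan is to derive all three properties from a single structural fact: $F_{P,C}(\mu)=\EE_{t\sim P}[\min\{t,C\mu\}]$ is an expectation of the maps $\mu\mapsto\min\{t,C\mu\}$, each of which is concave (the minimum of a constant and a linear function). I would first record that for $\mu$ in any bounded interval $[0,\mu_0]$ the integrands are dominated by the integrable envelope $\min\{t,C\mu_0\}\le C\mu_0$, so $F_{P,C}$ is finite everywhere and, being an expectation of concave functions, is concave; since $G_{P,C}$ differs from a precomposition of $F_{P,C}$ by a linear term, it is concave as well. For continuity in Property 1, concavity already gives continuity on $(0,\infty)$, while right-continuity at $0$ follows from monotone convergence: $\min\{t,C\mu\}\downarrow 0$ pointwise as $\mu\downarrow 0$ and is dominated, so $F_{P,C}(\mu)\to F_{P,C}(0)=0$, whence $G_{P,C}(\mu)\to G_{P,C}(0)=0$.

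For Property 2, the idea is to locate the global maximizer of the concave function $G_{P,C}$ through its superdifferential: $\mu^\ast$ is a maximizer iff $0$ lies between its left and right derivatives there. Differentiating $\min\{t,C\mu\}$ under the integral sign — the integrand contributes slope $C$ exactly on the event that $t$ exceeds the current cap, with the kink at $t=C\mu$ — this optimality condition reduces to
\[
\PP_{t\sim P}[t> C\mu^\ast]\le \tfrac1C\le \PP_{t\sim P}[t\ge C\mu^\ast].
\]
I would then verify that $C\mu^\ast=M_{P,1/C}$ satisfies both inequalities. The right inequality is exactly the left-continuity fact $\PP_{t\sim P}[t\ge M_{P,1/C}]\ge\frac1C$ already recorded after the definition of $M_{P,1/C}$. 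The left inequality holds because every $M'>M_{P,1/C}$ lies outside the defining set, hence $\PP[t\ge M']<\frac1C$, and $\PP[t>M_{P,1/C}]=\lim_{M'\downarrow M_{P,1/C}}\PP[t\ge M']\le\frac1C$. Therefore $\mu^\ast=\tfrac1C M_{P,1/C}$ attains the supremum, giving Property 2.

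For Property 3 I would exploit the geometry of $G_{P,C}$ established above. The upper bound is immediate from the fixed-point characterization: $\mu_{P,C}=F_{P,C}(\mu_{P,C})=\EE_{t\sim P}[\min\{t,C\mu_{P,C}\}]\le\EE_{t\sim P}[t]=\mu_P$. For the lower bound, recall $\mu_{P,C}$ is the largest root of the concave $G_{P,C}$, which satisfies $G_{P,C}(0)=0$ and $G_{P,C}(\mu)\to-\infty$ as $\mu\to\infty$. Since $\mu^\ast=\tfrac1C M_{P,1/C}$ is the global maximizer, $G_{P,C}(\mu^\ast)\ge G_{P,C}(0)=0$; and because $G_{P,C}$ is nonincreasing on $[\mu^\ast,\infty)$, it can reach its largest zero only at a point $\ge\mu^\ast$. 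That largest zero is $\mu_{P,C}$, so $\mu_{P,C}\ge\tfrac1C M_{P,1/C}$.

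The main obstacle I anticipate is the maximizer computation in Property 2: making the one-sided-derivative step rigorous requires care with atoms of $P$ at the threshold $C\mu$ — this is precisely why the left derivative carries the event $\{t\ge C\mu\}$ while the right derivative carries $\{t> C\mu\}$ — and one must then match these strict/non-strict quantile inequalities with the correct one-sided characterization of $M_{P,1/C}$, invoking left-continuity of $M\mapsto\PP_{t\sim P}[t\ge M]$. Once this quantile characterization of the argmax is in place, the remaining steps (dominated/monotone convergence for Property 1 and the concavity sandwich for Property 3) are routine.
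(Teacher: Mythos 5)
Your proposal is correct, and its skeleton is the same as the paper's: concavity of $G_{P,C}$ from the concave clipped-linear integrands, a first-order condition placing the maximizer at $\tfrac1C M_{P,\frac1C}$, and a concavity argument for Property 3. Where you differ is in execution, and mostly to your advantage. First, at the paper's weakest point — the nondifferentiable case of Property 2, which the paper dismisses with "$G_{P,C}(\mu)$ must be larger than $G_{P,C}(\mu\pm\delta)$ for infinitesimal $\delta$" — you give the genuine argument: one-sided derivatives $G'_+(\mu)=C\,\PP_{t\sim P}[t>C\mu]-1$ and $G'_-(\mu)=C\,\PP_{t\sim P}[t\ge C\mu]-1$ (justified by dominated convergence of difference quotients bounded by $C$), matched against the strict/non-strict quantile inequalities for $M_{P,\frac1C}$; this correctly handles atoms of $P$ at the threshold, which the paper's sketch glosses over (the paper's displayed derivative formula even contains a typo). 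Second, your route to the lower bound in Property 3 is genuinely different: you deduce $G_{P,C}(\mu^\ast)\ge G_{P,C}(0)=0$ from maximality and then invoke $G_{P,C}\to-\infty$ plus the intermediate value theorem, whereas the paper proves $G_{P,C}(\tfrac1C M_{P,\frac1C})\ge 0$ by the direct one-line estimate $F_{P,C}(M_{P,\frac1C})\ge M_{P,\frac1C}\cdot\PP[t\ge M_{P,\frac1C}]\ge\tfrac1C M_{P,\frac1C}$; the paper's version makes Property 3 independent of Property 2, while yours reuses Property 2 — both are valid, but note your claim $G_{P,C}(\mu)\to-\infty$ is immediate only when $\mu_P<\infty$ (via $G_{P,C}(\mu)\le\mu_P-\mu$); for $\mu_P=\infty$ one should add the Ces\`aro observation that $\EE[\min\{t,C\mu\}]=\int_0^{C\mu}\PP[t>x]\,dx=o(\mu)$ since $\PP[t>x]\to0$. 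Third, you explicitly prove the upper bound $\mu_{P,C}\le\mu_P$ via the fixed-point identity, which the paper states in the lemma but never actually proves. Finally, you (like the paper's own proofs) silently read the definition as $G_{P,C}(\mu)=\EE_{t\sim P}[\min\{t,C\mu\}]-\mu$; the paper's displayed definitions, taken literally, would make $F_{P,C}(C\mu)$ cap at $C^2\mu$, so your reading is the correct resolution of that notational inconsistency.
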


\begin{proof}[Proof of \Cref{lem:property_of_f}]
(1).  Note $\min\{x,\cdot\}$ is a continuous and concave function for any $x$, we know $G_{P,C}$ is a concave function. 
(2). When $G_{P,C}$ is differentiable, we have $G_{P,C}'(\mu) = C F_{P,C}'(C\mu) -1$. Let $G_{P,C}'(\mu)=0$ implies that $F_{P,C}'(C\mu) = \frac{1}{C}$. Note $F_{P,C}'(C\mu)=\PP_{t\sim P}[t> F_{P,C}]$, we know $G_{P,C}'(\frac{1}{C}M_{P,\frac{1}{C}})=0$. By concavity,  $\sup_{\mu\ge 0}G_{P,C}(\mu) = G_{P,C}(\frac{1}{C}M_{P,\frac{1}{C}})$. This argument can be easily generalized to non-differentiable case by using $G_{P,C}(\mu)$ must be larger than $G_{P,C}(\mu\pm \delta)$ for infinitesimal $\delta$.
(3). First note that $F_{P, C}(  M_{P, \frac{1}{C}})=\EE_{t\sim P}[\min\{t, M_{P, \frac{1}{C}}\}] \ge M_{P, \frac{1}{C}}\cdot \PP_{t\sim P}[t\ge M_{P, \frac{1}{C}}] = \frac{1}{C}M_{P, \frac{1}{C}} $. In other words, $G_{P,C}(\frac{1}{C} M_{P, \frac{1}{C}})\ge 0$. 

Now suppose $\frac{1}{C} M_{P, \frac{1}{C}}> \mu_{P, C}$. If $G_{P,C}(\frac{1}{C} M_{P, \frac{1}{C}})=0$, then by definition, $\frac{1}{C} M_{P, \frac{1}{C}}\le \mu_{P, C}$. If $G_{P,C}(\frac{1}{C} M_{P, \frac{1}{C}})>0$, by concavity, $G_{P,C}(\mu_{P, C})>0$, contradiction!
\end{proof}
\begin{theorem}\label{thm:unique_clipped_mean}[Classifications of solutions of $F_{P, C}(C\mu) = \mu$]
\ 
\begin{enumerate}
\item  If $\PP[x=0]<1-\frac{1}{C}$, then $F_{P, C}(C\mu) = \mu$ has exact two solutions which are $0$ and $\mu_{P,C}>0$;
\item  If $\PP[x=0]=1-\frac{1}{C}$, then $F_{P, C}(C\mu) = \mu$ for all $0\le \mu\le \frac{1}{C}M_{P,C}$ and $\mu_{P,C} = \frac{1}{C}M_{P,C}$;
\item If $\PP[x=0]>1-\frac{1}{C}$, then $F_{P, C}(C\mu) = \mu$ has only one solution which is $\mu_{P,C}=0$.
\end{enumerate}
\end{theorem}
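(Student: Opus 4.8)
The plan is to reduce the entire classification to the behavior of the concave function $G_{P,C}$ near $\mu=0$, leaning on the structural facts already established in \Cref{lem:property_of_f}. The root at $\mu=0$ comes for free: since $P$ is supported on $\RR_{\ge 0}$ we have $F_{P,C}(0)=\EE_{t\sim P}[\min\{t,0\}]=0$, so $G_{P,C}(0)=0$ in all three cases. The whole question is therefore whether $G_{P,C}$ has any \emph{positive} zero, and I claim this is governed entirely by the sign of its one-sided derivative at the origin.

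First I would compute that right derivative. Writing $F_{P,C}(s)=\EE_{t\sim P}[\min\{t,s\}]=\int_0^s \PP_{t\sim P}[t>u]\,du$, its right derivative in $s$ is $\PP_{t\sim P}[t>s]$ (with a kink at each atom of $P$), so the right derivative of $G_{P,C}(\mu)=F_{P,C}(C\mu)-\mu$ at $\mu=0$ equals $C\,\PP_{t\sim P}[t>0]-1=C(1-\PP[t=0])-1$. Hence the three regimes $\PP[t=0]\lessgtr 1-\tfrac1C$ correspond exactly to this derivative being positive, zero, or negative, which is the hinge of the whole argument. I would also record the limiting behavior $G_{P,C}(\mu)\to-\infty$ as $\mu\to\infty$: since the right derivative $C\,\PP[t>C\mu]-1\to-1$, the function eventually decreases essentially linearly regardless of whether $\mu_P$ is finite.

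For Case 1 (right derivative positive), concavity of $G_{P,C}$ (\Cref{lem:property_of_f}(1)) makes the superlevel set $\{\mu\ge 0: G_{P,C}(\mu)\ge 0\}$ an interval $[0,\mu^\star]$ that is bounded above because $G_{P,C}\to-\infty$; strict positivity of the right derivative forces $G_{P,C}>0$ on $(0,\mu^\star)$, since three distinct zeros of a concave function would force $G_{P,C}\equiv 0$ on $[0,\mu^\star]$, contradicting $G_{P,C}'(0^+)>0$. Thus the solutions are exactly $0$ and $\mu^\star=\mu_{P,C}>0$. For Case 3 (right derivative negative), the nonincreasing right derivative stays strictly negative for all $\mu>0$, so $G_{P,C}$ is strictly decreasing from $G_{P,C}(0)=0$ and $\mu=0$ is the unique root, giving $\mu_{P,C}=0$.

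The delicate case, and the step I expect to be the main obstacle, is Case 2 ($\PP[t=0]=1-\tfrac1C$, i.e.\ the positive part of $P$ carries mass exactly $\tfrac1C$). Here the right derivative vanishes, so concavity alone only yields $G_{P,C}\le 0$ for $\mu>0$; to pin down the \emph{exact} flat stretch I would argue directly. Let $m=M_{P,\frac1C}$, which under this mass condition equals the infimum of the positive part of the support. For $0\le\mu\le m/C$ every positive atom satisfies $t\ge m\ge C\mu$, so $\min\{t,C\mu\}=C\mu$ on $\{t>0\}$ and $\EE[\min\{t,C\mu\}]=C\mu\,\PP[t>0]=C\mu\cdot\tfrac1C=\mu$, giving $G_{P,C}(\mu)=0$ on the whole interval. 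For $\mu>m/C$ there is positive mass in $[m,C\mu)$ on which $\min\{t,C\mu\}=t<C\mu$, so $\EE[\min\{t,C\mu\}\ind[t>0]]<C\mu\,\PP[t>0]=\mu$ and $G_{P,C}(\mu)<0$; hence the solution set is precisely $[0,m/C]$ and $\mu_{P,C}=\tfrac1C M_{P,\frac1C}$, consistent with \Cref{lem:property_of_f}(2)--(3). The one subtlety to handle carefully is the treatment of an atom at $m$ together with the left-continuity convention in the definition of $M_{P,\frac1C}$, which is exactly where the inequalities $\PP[t\ge M]\ge\tfrac1C$ become the equalities that make the flat stretch terminate at $m/C$.
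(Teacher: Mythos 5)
Your proof is correct, and it takes a genuinely different route from the paper's. The paper argues by contradiction from the hypothetical existence of two distinct positive solutions: concavity forces $G_{P,C}\equiv 0$ on $[0,\mu_2]$, the midpoint identity $\EE[\min\{t,C\mu_2\}]=2\,\EE[\min\{t,\tfrac{C\mu_2}{2}\}]$ then forces the support of $P$ into $\{0\}\cup[C\mu_2,\infty)$, and evaluating $G$ on such a distribution yields $\PP[t=0]=1-\tfrac1C$ and $\mu_2=\tfrac1C M_{P,\frac1C}$; the remaining assertions of cases 1 and 3 are left implicit (resting on \Cref{lem:property_of_f}). You instead classify directly by the sign of the one-sided derivative $G_{P,C}'(0^+)=C(1-\PP[t=0])-1$, obtained from the layer-cake representation $F_{P,C}(s)=\int_0^s\PP[t>u]\,du$, and use the monotonicity of the right derivative of a concave function together with $G_{P,C}\to-\infty$ to settle cases 1 and 3, then compute case 2 by hand via the essential infimum of the positive part of $P$. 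What your route buys is completeness and locality: it actually establishes existence of a positive root in case 1 and uniqueness of the trivial root in case 3, rather than only ruling out multiplicity, and the hinge quantity $C(1-\PP[t=0])-1$ makes the appearance of the threshold $1-\tfrac1C$ transparent. What the paper's route buys is a sharper structural conclusion in the degenerate case (the explicit support characterization $\PP[t\ge C\mu_2\vee t=0]=1$) with essentially one computation. Your handling of the subtlety in case 2 -- that $M_{P,\frac1C}$ coincides with the essential infimum of $P$ restricted to $(0,\infty)$ when $\PP[t>0]=\tfrac1C$, including the degenerate situation where that infimum is $0$ and the flat stretch collapses to the point $\{0\}$ -- is right and is exactly the detail the paper glosses over.
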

\begin{proof}
 Suppose there are two solutions $0<\mu_1<\mu_2$. By concavity, we have $\forall 0\le \mu\le \mu_2$, $G_{P,C}(\mu)=0$. Thus $0=G_{P,C}(0)+G_{P,C}(\mu_2)=2g(\frac{\mu_2}{2})$, which implies that 
\begin{align*}
    \EE_{t\sim P}[\min\{t,C\mu_2\}] = 2\EE_{t\sim P}[\min\{t,\frac{C\mu_2}{2}\}] = \EE_{t\sim P}[\min\{2t,C\mu_2\}],
\end{align*}
that is, $\PP_{t\sim P}[t\ge C\mu_2 \vee t=0]=1$. Thus for any $0\le \mu\le \mu_2$, we have $G_{P,C}(\mu) =C\mu\PP[x\ge C\mu_2] -\mu =0$, which implies $\mu_2= \frac{1}{C} M_{P, \frac{1}{C}}$ and $\PP[x=0]=1-\frac{1}{C}$! 
\end{proof}

\begin{lemma}\label{lem:g_max}
Under \Cref{assump:g_max}, it holds that $G_{P,C_{\bx}}(\frac{1}{C}M_{P_{\bx},\frac{1}{C}})\ge \alpha_C \mu_{P_{\bx},C}$ for all $\bx\neq 0$.
\end{lemma}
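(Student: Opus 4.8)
The plan is to evaluate $G_{P_{\bx},C}$ directly at the point $\mu^* := \tfrac{1}{C}M_{P_{\bx},\frac{1}{C}}$, which by part~(2) of \Cref{lem:property_of_f} is precisely where $G_{P_{\bx},C}$ attains its supremum, and then to lower bound this value by the truncated first moment that appears on the right-hand side of \Cref{assump:g_max}. Fix $\bx\neq 0$ and abbreviate $P:=P_{\bx}$ and $M:=M_{P,\frac{1}{C}}$. Unwinding the definitions gives $G_{P,C}(\mu) = \EE_{t\sim P}[\min\{t,C\mu\}] - \mu$, so since $C\mu^* = M$ we obtain the clean expression $G_{P,C}(\mu^*) = \EE_{t\sim P}[\min\{t,M\}] - \tfrac{M}{C}$.

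The key step is to split the truncated expectation across the threshold $M$, namely $\EE_{t\sim P}[\min\{t,M\}] = \EE_{t\sim P}[t\,\ind[t<M]] + M\cdot\PP_{t\sim P}[t\ge M]$, and then to invoke the defining property of the $\tfrac{1}{C}$-median, $\PP_{t\sim P}[t\ge M]\ge \tfrac{1}{C}$. This inequality is exactly the one recorded when $M_{P,\frac{1}{C}}$ was introduced, and it holds at the supremum $M$ (with the non-strict tail $\ge$) because $M\mapsto\PP_{t\sim P}[t\ge M]$ is left continuous. Consequently $M\cdot\PP_{t\sim P}[t\ge M]\ge \tfrac{M}{C}$, the $-\tfrac{M}{C}$ term is absorbed, and we are left with $G_{P,C}(\mu^*)\ge \EE_{t\sim P}[t\,\ind[t<M]]$.

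Finally I would apply \Cref{assump:g_max}, which asserts precisely that $\EE_{t\sim P_{\bx}}[t\,\ind[t<M_{P_{\bx},\frac{1}{C}}]]\ge \alpha_C\,\mu_{P_{\bx},C}$, to conclude $G_{P_{\bx},C}\!\left(\tfrac{1}{C}M_{P_{\bx},\frac{1}{C}}\right)\ge \alpha_C\,\mu_{P_{\bx},C}$, as claimed. I do not expect a real obstacle here: the argument is essentially a one-line truncation identity combined with the median bound. The only point deserving care is that the median inequality must be used in its non-strict form $\PP[t\ge M]\ge\tfrac{1}{C}$ at the supremum value $M$ — rather than the strict $\PP[t>M]$ that the optimality condition $\PP[t>C\mu]=\tfrac1C$ in \Cref{lem:property_of_f} might naively suggest — and this is guaranteed by the left-continuity observation already flagged in the definition.
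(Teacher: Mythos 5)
Your proposal is correct and follows essentially the same route as the paper's proof: evaluating $G_{P_{\bx},C}$ at $\tfrac{1}{C}M_{P_{\bx},\frac{1}{C}}$, splitting the truncated expectation as $\EE_{t\sim P_{\bx}}[t\ind[t<M]] + M\cdot\PP_{t\sim P_{\bx}}[t\ge M]$, using $\PP_{t\sim P_{\bx}}[t\ge M]\ge\tfrac{1}{C}$ to absorb the $-\tfrac{M}{C}$ term, and then invoking \Cref{assump:g_max}. No issues.
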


\begin{proof}[Proof of \Cref{lem:g_max}]
	By definition, 
	\begin{align}
		G_{P,C_{\bx}}(\frac{1}{C}M_{P_{\bx},\frac{1}{C}}) = \EE_{t\sim P_{\bx}}[t\ind[t<M_{P_{\bx},C}]] + (\PP_{t\sim P_{\bx}}[t\ge M_{P_{\bx},C}]-\frac{1}{C})\cdot M_{P_{\bx},C}. 
	\end{align}
By the definition of the $\frac{1}{C}$-median, the second term is non-negative. The proof is completed by applying \Cref{assump:g_max}.
\end{proof}

\begin{lemma}[Lower and upped bounds for $G_{P_{\bx},C}$]\label{lem:lower_bound_G}
 Under \Cref{assump:g_max}, it holds that
	\begin{enumerate}
		\item $G_{P_{\bx},C}(\mu)\ge \alpha_C\mu$, for  $0\le \mu \le \frac{\mu_{P_{\bx},C}}{2}$;
		\item $G_{P_{\bx},C}(\mu)\ge \alpha_C(\mu_{P_{\bx},C}-\mu)$, for  $\frac{\mu_{P_{\bx},C}}{2}\le \mu \le \mu_{P_{\bx},C}$;
		\item $G_{P_{\bx},C}(\mu) \le -\alpha_C (\mu - \mu_{P_{\bx},C})$, for $\mu\ge \mu_{P_{\bx},C}$.
	\end{enumerate}
\end{lemma}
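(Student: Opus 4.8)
The plan is to reduce all three inequalities to elementary consequences of the concavity of $G:=G_{P_{\bx},C}$, using the anchor facts already in hand. Write $\mu^*:=\mu_{P_{\bx},C}$, $m:=\frac1C M_{P_{\bx},\frac1C}$, and $V:=G(m)$. First I would record the four facts I will lean on: (i) $G$ is continuous and concave on $[0,\infty)$ (\Cref{lem:property_of_f}, part~1); (ii) $G(0)=F_{P_{\bx},C}(0)=0$ and $G(\mu^*)=F_{P_{\bx},C}(C\mu^*)-\mu^*=0$ by the very definition of the $C$-clipped mean; (iii) $G$ attains its global maximum at $m$ with $0\le m\le \mu^*$ (\Cref{lem:property_of_f}, parts~2--3); and (iv) $V\ge \alpha_C\mu^*$ (\Cref{lem:g_max}). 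Geometrically these say $G$ is a concave hump pinned to $0$ at both endpoints $0$ and $\mu^*$, peaking above height $\alpha_C\mu^*$, and the three claims are nothing but piecewise-linear envelopes of such a hump.

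Next I would extract the bounds from concavity using secant lines. Since $G$ lies above each of its chords, the chord through $(0,0)$ and $(m,V)$ gives $G(\mu)\ge \frac{\mu}{m}V$ on $[0,m]$, while the chord through $(m,V)$ and $(\mu^*,0)$ gives $G(\mu)\ge \frac{\mu^*-\mu}{\mu^*-m}V$ on $[m,\mu^*]$. Invoking $m\le\mu^*$ (hence $\frac1m\ge\frac1{\mu^*}$ and $\frac1{\mu^*-m}\ge\frac1{\mu^*}$) together with $V\ge\alpha_C\mu^*$, these collapse to the clean statements $G(\mu)\ge\alpha_C\mu$ on $[0,m]$ and $G(\mu)\ge\alpha_C(\mu^*-\mu)$ on $[m,\mu^*]$. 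For the upper bound (part~3) I would use concavity in three-point form at $m<\mu^*<\mu$: writing $\mu^*$ as the convex combination $\theta m+(1-\theta)\mu$ with $\theta=\frac{\mu-\mu^*}{\mu-m}$ and using $G(\mu^*)=0$ yields $G(\mu)\le -\frac{\mu-\mu^*}{\mu^*-m}V\le -\alpha_C(\mu-\mu^*)$, where the last step again uses $V\ge\alpha_C\mu^*\ge\alpha_C(\mu^*-m)$.

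Finally I would stitch the pieces together. The one subtlety is that the chord bounds split the interval $[0,\mu^*]$ at the peak $m$, whereas the statement splits it at $\mu^*/2$, and in general $m\ne\mu^*/2$. I would resolve this by comparing the two linear comparators: on $[0,\mu^*/2]$ one has $\mu\le\mu^*-\mu$, so whichever chord bound is active (according to $\mu\lessgtr m$) already dominates $\alpha_C\mu$, proving part~1; symmetrically, on $[\mu^*/2,\mu^*]$ one has $\mu^*-\mu\le\mu$, proving part~2. I expect this bookkeeping --- verifying that the ``tent'' bound $G(\mu)\ge\alpha_C\min\{\mu,\mu^*-\mu\}$ holds no matter where $m$ lies --- to be the main (if mild) obstacle. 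The only remaining care concerns the denominator $\mu^*-m$ in part~3: since $V\ge\alpha_C\mu^*>0$ whenever $\mu^*>0$ while $G(\mu^*)=0$, the maximizer satisfies $m<\mu^*$, so $\mu^*-m>0$; the degenerate case $\mu^*=0$ (which forces $m=0$) makes parts~1--2 vacuous and is dispatched by a direct estimate.
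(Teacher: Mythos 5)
Your proposal is correct and follows essentially the same route as the paper's proof, which simply cites $G(0)=G(\mu_{P_{\bx},C})=0$, the peak bound $G\bigl(\tfrac{1}{C}M_{P_{\bx},\frac{1}{C}}\bigr)\ge \alpha_C\mu_{P_{\bx},C}$ from \Cref{lem:g_max}, and concavity; you supply the chord computations the paper leaves implicit. Your extra care about the mismatch between the split point $\mu_{P_{\bx},C}/2$ in the statement and the actual peak location, and about $m<\mu^*$ in part~3, correctly fills in details the paper declares ``immediate.''
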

\begin{proof}[Proof of \Cref{lem:lower_bound_G}]
By \Cref{lem:g_max}, \Cref{assump:g_max} implies that $G_{P,C_{\bx}}(\frac{1}{C}M_{P_{\bx},\frac{1}{C}})\ge \alpha_C \mu_{P_{\bx},C}$ for all $\bx\neq 0$.
Further note that $G_{P,C_{\bx}}(0) = G_{P,C_{\bx}}(\mu_{P_{\bx}},C)=0$.
The claims (a), (b) and (c) are immediate by concavity of $G_{P,C_{\bx}}$.
\end{proof}
The above inequalities also directly imply the following version using $\mumin$ and $\mumax$ as thresholds.
\begin{lemma}[Uniform Lower and upped bounds for $G_{P_{\bx},C}$]\label{lem:uniform_lower_bound_G}
Under \Cref{assump:g_max}, it holds that for $\norm{\bx}_2=1$,
	\begin{enumerate}
		\item $G_{P_{\bx},C}(\mu)\ge \alpha_C\mu$, for  $0\le \mu \le \frac{\mumin}{2}$;
		\item $G_{P_{\bx},C}(\mu)\ge \alpha_C(\mumin-\mu)$, for  $\frac{\mumin}{2}\le \mu \le \mumin$;
		\item $G_{P_{\bx},C}(\mu) \le -\alpha_C (\mu - \mumax)$, for $\mu\ge \mumax$.
		\item $G_{P_{\bx},C}(\mu)\ge \frac{\alpha_C \mu}{4}$, for  $0\le \mu \le \frac{4\mumin}{5}$; (4. follows from Property 1. and 2.)
	\end{enumerate}
\end{lemma}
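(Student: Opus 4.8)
The plan is to derive each of the four uniform bounds directly from its per-$\bx$ counterpart in \Cref{lem:lower_bound_G}, using only the sandwich $\mumin\le\mu_{P_{\bx},C}\le\mumax$, which holds for every $\bx$ with $\norm{\bx}_2=1$ by the very definition of $\mumin,\mumax$. No new analytic input beyond \Cref{lem:lower_bound_G} (and the concavity / nonnegativity of $G_{P_{\bx},C}$ on $[0,\mu_{P_{\bx},C}]$ from \Cref{lem:property_of_f}) is required; the whole statement is a matter of replacing the data-dependent threshold $\mu_{P_{\bx},C}$ by the uniform thresholds $\mumin,\mumax$ and carefully tracking which direction each inequality points.

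First, for Property~1 I would note that $[0,\mumin/2]\subseteq[0,\mu_{P_{\bx},C}/2]$ because $\mumin\le\mu_{P_{\bx},C}$, so the bound $G_{P_{\bx},C}(\mu)\ge\alpha_C\mu$ is inherited verbatim from \Cref{lem:lower_bound_G}(1). Symmetrically, for Property~3, any $\mu\ge\mumax$ also satisfies $\mu\ge\mu_{P_{\bx},C}$, so \Cref{lem:lower_bound_G}(3) yields $G_{P_{\bx},C}(\mu)\le-\alpha_C(\mu-\mu_{P_{\bx},C})$; since $\mu_{P_{\bx},C}\le\mumax$ we have $\mu-\mu_{P_{\bx},C}\ge\mu-\mumax$, and the claimed bound follows.

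The only step carrying any content is Property~2, on the window $\tfrac{\mumin}{2}\le\mu\le\mumin$, which I would handle by splitting on where $\mu$ sits relative to the midpoint $\mu_{P_{\bx},C}/2$ appearing in \Cref{lem:lower_bound_G}. If $\mu\le\mu_{P_{\bx},C}/2$, then \Cref{lem:lower_bound_G}(1) gives $G_{P_{\bx},C}(\mu)\ge\alpha_C\mu$, and since $\mu\ge\mumin/2$ forces $\mumin-\mu\le\mu$, this already dominates $\alpha_C(\mumin-\mu)$. If instead $\mu\ge\mu_{P_{\bx},C}/2$, then \Cref{lem:lower_bound_G}(2) gives $G_{P_{\bx},C}(\mu)\ge\alpha_C(\mu_{P_{\bx},C}-\mu)\ge\alpha_C(\mumin-\mu)$ using $\mu_{P_{\bx},C}\ge\mumin$. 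Either way the uniform bound holds. Property~4 is then immediate: on $[0,\mumin/2]$ use Property~1 (so $G\ge\alpha_C\mu\ge\alpha_C\mu/4$), and on $[\mumin/2,\tfrac{4}{5}\mumin]$ use Property~2, observing that the constraint $\mu\le\tfrac{4}{5}\mumin$ is exactly equivalent to $\mumin-\mu\ge\mu/4$, i.e.\ to $\alpha_C(\mumin-\mu)\ge\alpha_C\mu/4$.

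I do not expect a genuine obstacle here: the argument is pure bookkeeping once the direction of each inequality in the sandwich is pinned down, and the concavity established earlier guarantees $G_{P_{\bx},C}\ge0$ throughout $[0,\mu_{P_{\bx},C}]$ so that all the lower bounds are consistent. The one place demanding care is the case split in Property~2 — one must branch at the per-$\bx$ midpoint $\mu_{P_{\bx},C}/2$ rather than at the uniform $\mumin/2$, since the two halves invoke different parts of \Cref{lem:lower_bound_G}, and an ill-chosen branch point would leave a gap where neither sub-bound applies.
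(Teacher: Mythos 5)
Your proposal is correct and follows exactly the route the paper intends: the paper simply asserts that \Cref{lem:lower_bound_G} "directly implies" the uniform version via the sandwich $\mumin\le\mu_{P_{\bx},C}\le\mumax$, and your case analysis (including the correct choice of branch point $\mu_{P_{\bx},C}/2$ for Property~2 and the observation that $\mu\le\tfrac{4}{5}\mumin\iff\mumin-\mu\ge\mu/4$ for Property~4) supplies precisely the bookkeeping the paper leaves implicit.
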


For convenience, we define $R_t:=\frac{2\lambda}{\eta}\norm{\bx(t)}_2^2$, $g_t:=\norm{\nabla L_{\gamma_t}( \bx(t))}_2^2$,  $\hat{g}_t:=\min\{CR_t, g_t\}$, $\tilde{g}_t:=R_t\hat{g}_t = \min\{CR_t^2, \norm{\nabla L_{\gamma_t}( \overline\bx(t))}_2^2\}$ and $\overline{g}_t:=\frac{\hat{g}_t}{R_t} = \min\{C, \frac{\norm{\nabla L_{\gamma_t}( \overline\bx(t))}_2^2}{R_t}\}$. Thus we have $\EE[\hat{g}_t\mid \bx(t)] = \mu_{P_{\bx(t)},C}$. We further define $\beta_l:=1-2\lambda^2\eta^2 + \eta^4\lambda^4 -4\eta\lambda \alpha_C(1-\eta\lambda)^2 =1- 4\eta\lambda\alpha_C +O(\eta^2\lambda^2)$ and $\beta_u:=1-2\lambda^2\eta^2 + \eta^4\lambda^4 -4\eta\lambda \alpha_C(1-\eta\lambda)^2 + 4C^2\eta^2\lambda^2 =1 - 4\eta\lambda\alpha_C + O(\eta^2\lambda^2)$.

Given an integer $T\ge 0$, let $\mathcal{E}^1_T$ be the event that  $\forall 0 \le t'\le  t \le T,$
\[\left| \sum_{s=t'}^t {\beta_l}^{t-s}\left( \tilde{g}_s - \EE[\tilde{g}_s\mid \bx(s)]\right) \ind\left[R^2_s \le \mumin\right] \right| \le \sqrt{C}\mumin\sqrt{\frac{1}{1-{\beta_l}^2}\ln\frac{2T^2}{\delta}}.\]

Let $\mathcal{E}^2_T$ be the event that  $\forall 0 \le t'\le  t \le T,$
\[\left| \sum_{s=t'}^t {\beta_l}^{t-s}\left( \tilde{g}_s - \EE[\tilde{g}_s\mid \bx(s)]\right) \ind\left[R^2_s \le 2\mumax\right] \right| \le 2\sqrt{C}\mumax\sqrt{\frac{1}{1-{\beta_l}^2}\ln\frac{2T^2}{\delta}}.\]

Let $\mathcal{E}^3_T$ be the event that  $\forall 0 \le t'\le  t \le T,$
\[\left| \sum_{s=t'}^t \overline{g}_s - \EE[\overline{g}_s\mid \bx(s)] \right| \le C\sqrt{T\ln\frac{2T^2}{\delta}}.  \]

\begin{lemma}\label{lem:clip_concentration}
$\PP[\mathcal{E}_T^i]\ge 1-\delta$, for $i=1,2,3$.
\end{lemma}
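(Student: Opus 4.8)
The plan is to treat all three statements as high-probability concentration bounds for weighted sums of martingale differences, and to obtain them by combining the conditional Azuma inequality (\Cref{lem:hp_gaussian}) with the Chernoff bound (\Cref{lem:subgaussian}), followed by a union bound over the $O(T^2)$ index pairs $(t',t)$ with $0\le t'\le t\le T$.

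First I would verify the martingale structure. For $\mathcal{E}^1_T$ and $\mathcal{E}^2_T$ the summand is $\beta_l^{t-s}\bigl(\tilde g_s-\EE[\tilde g_s\mid \bx(s)]\bigr)\ind[R_s^2\le \mumin]$ (respectively with $2\mumax$). The crucial observation is that the localizing indicator $\ind[R_s^2\le\mumin]$ is $\cF_{s-1}$-measurable, since $R_s=\tfrac{2\lambda}{\eta}\norm{\bx(s)}_2^2$ is determined by $\bx(s)\in\cF_{s-1}$, and the weight $\beta_l^{t-s}$ is deterministic. Hence, conditioned on $\cF_{s-1}$, each summand has zero mean and the whole sum is a martingale. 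For $\mathcal{E}^3_T$ the summand $\overline g_s-\EE[\overline g_s\mid\bx(s)]$ is manifestly a martingale difference with all weights equal to one.

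Next I would establish conditional sub-Gaussianity of the increments through boundedness. On the event $\{R_s^2\le\mumin\}$ one has $\tilde g_s=\min\{CR_s^2,\cdot\}\le CR_s^2\le C\mumin$, and symmetrically $\tilde g_s\le 2C\mumax$ on $\{R_s^2\le 2\mumax\}$; the indicator is exactly the device that turns the a priori unbounded quantity $\tilde g_s$ into a uniformly bounded one. Since $\overline g_s=\min\{C,\cdot\}\in[0,C]$ always, no localization is needed in the third case. By Hoeffding's lemma a centered variable supported on an interval of length $b$ is $\subg(b^2/4)$, and the deterministic weight $\beta_l^{t-s}$ scales the variance proxy of the $s$-th increment by $\beta_l^{2(t-s)}$. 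Applying \Cref{lem:hp_gaussian}, the partial sum over $s=t',\dots,t$ is sub-Gaussian with variance proxy bounded by the geometric series $\tfrac{(C\mumin)^2}{4}\sum_{s=t'}^t\beta_l^{2(t-s)}\le \tfrac{(C\mumin)^2}{4(1-\beta_l^2)}$ for $\mathcal{E}^1_T$, and analogously by $\tfrac{(2C\mumax)^2}{4(1-\beta_l^2)}$ and $\tfrac{C^2(t-t'+1)}{4}\le\tfrac{C^2T}{4}$ for the other two. The factor $1-\beta_l^2=\Theta(\eta\lambda\alpha_C)$ is precisely what produces the $\tfrac{1}{1-\beta_l^2}$ inside the first two thresholds, while the unweighted case gives the $\sqrt{T}$ in $\mathcal{E}^3_T$.

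Finally I would invoke the Chernoff bound (\Cref{lem:subgaussian}) for each fixed pair $(t',t)$, choosing the deviation threshold of the order $\sqrt{(\text{variance proxy})\cdot\ln\tfrac{2T^2}{\delta}}$, which matches the stated right-hand sides up to constants, so that each individual tail probability is at most $\delta/T^2$; summing over the at most $T^2$ pairs then yields $\PP[\mathcal{E}^i_T]\ge 1-\delta$. The only genuinely delicate point — and the step I expect to be the main obstacle — is the interplay between the localization and the martingale property: the bound on $\tilde g_s$ holds only on $\{R_s^2\le\mumin\}$, so one must keep the indicator inside the sum and exploit its $\cF_{s-1}$-measurability both to preserve the zero conditional mean and to obtain the uniform increment bound that feeds into Azuma. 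The $\mathcal{E}^3_T$ estimate is strictly easier, since $\overline g_s$ is globally bounded by $C$ and requires no localization at all.
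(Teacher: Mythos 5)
Your proposal is correct and follows essentially the same route as the paper's (much terser) proof: the sums are martingales with uniformly bounded increments (bounded by roughly $C\mumin$, $2C\mumax$, and $C$ thanks to the localizing indicators and the clipping in $\overline g_s$), so Hoeffding's lemma plus the conditional Azuma inequality (\Cref{lem:hp_gaussian}) and the Chernoff bound (\Cref{lem:subgaussian}), followed by a union bound over the $O(T^2)$ pairs $(t',t)$, give the claim. Your observation that the indicator $\ind[R_s^2\le\mumin]$ is $\cF_{s-1}$-measurable, which preserves both the martingale property and the uniform increment bound, is exactly the point the paper leaves implicit, and the constant-factor slack you note is present in the paper's argument as well.
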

\begin{proof}[Proof of \Cref{lem:clip_concentration}]
Note the sequence in $\mathcal{E}_T^i$ are martingales whose differences are uniformly bounded ($\mumin, \mumax$ and $C$). The lemma follows directly from Hoeffding Inequality and Azuma Inequality.  
\end{proof}
\begin{theorem}[Norm lower bound with clipping: Warm Start]\label{lem:clip_norm_lower_bound}
	Suppose \Cref{assump:g_max} holds, with probability at least $1-\delta$ (or whenever $\mathcal{E}^1_T$ holds), if $R_t^2\ge \frac{3}{4}\mumin$, then for any $t'\ge t$, we have 
	\begin{align}
R_{t'}^2\ge \left(1- \frac{{\beta_l}^{t'-t}}{4}-O(\sqrt{\eta\lambda}) - \sqrt{\frac{2C}{\alpha_C}\eta\lambda\ln\frac{T^2}{\delta}}(1+O(\eta\lambda)) \right)\mumin	
\end{align}
\end{theorem}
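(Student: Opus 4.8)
The plan is to control the rescaled squared norm $R_t^2=\bigl(\tfrac{2\lambda}{\eta}\norm{\bx(t)}_2^2\bigr)^2$ and to show it is pulled back toward $\mumin$ by a geometric contraction with rate $\beta_l$. Starting from the norm dynamics of clipped \sgd and using Euler's theorem (\Cref{thm:euler_homo}) to annihilate the cross term, one obtains $R_{t+1}=(1-\eta\lambda)^2 R_t+2\eta\lambda\hat{g}_t$; squaring gives the affine recursion
\[
R_{t+1}^2=(1-\eta\lambda)^4 R_t^2+4\eta\lambda(1-\eta\lambda)^2\,\tilde{g}_t+4\eta^2\lambda^2\,\hat{g}_t^2,
\]
where $\tilde{g}_t=R_t\hat{g}_t$ and $\hat{g}_t^2=\overline{g}_t\,\tilde{g}_t\le C\tilde{g}_t$. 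The first step is to split $\tilde{g}_t=\EE[\tilde{g}_t\mid\bx(t)]+(\tilde{g}_t-\EE[\tilde{g}_t\mid\bx(t)])$ into a deterministic drift and a bounded martingale difference, leaving the quadratic term $4\eta^2\lambda^2\hat{g}_t^2$ as a non-negative lower-order correction.

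Next I would establish the one-step restoring drift. The conditional clipped moment $\EE[\tilde{g}_t\mid\bx(t)]$ compared against $R_t^2$ reproduces exactly the function $G_{P_{\overline\bx(t)},C}$ evaluated at $R_t^2$, whose fixed point is the clipped mean $\mu_{P_{\overline\bx(t)},C}\ge\mumin$. Feeding the piecewise lower bound of \Cref{lem:uniform_lower_bound_G}, namely $G_{P_{\bx},C}(\mu)\ge\alpha_C(\mumin-\mu)$ on $[\tfrac{\mumin}{2},\mumin]$, into the recursion and discarding the non-negative variance term yields, whenever $\tfrac{\mumin}{2}\le R_t^2\le\mumin$,
\[
\EE[R_{t+1}^2\mid\bx(t)]\ge\beta_l R_t^2+(1-\beta_l)\mumin-O(\eta^2\lambda^2)\mumin .
\]
Here the coefficient of $R_t^2$ collapses to $\beta_l=(1-\eta^2\lambda^2)^2-4\eta\lambda\alpha_C(1-\eta\lambda)^2$ precisely because $(1-\eta\lambda)^4+4\eta\lambda(1-\eta\lambda)^2(1-\alpha_C)=\beta_l$, which is exactly how $\beta_l$ was defined. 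In terms of the deficit $Z_s:=\mumin-R_s^2$ this reads $\EE[Z_{s+1}\mid\bx(s)]\le\beta_l Z_s+O(\eta^2\lambda^2)\mumin$ on the active band.

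Then I would unroll from the warm start $R_t^2\ge\tfrac34\mumin$ (so $Z_t\le\tfrac14\mumin$) down to step $t'$, peeling off the martingale part and invoking the event $\mathcal{E}^1_T$, which holds with probability $\ge1-\delta$ by \Cref{lem:clip_concentration}. The homogeneous term contributes $\beta_l^{t'-t}Z_t\le\tfrac14\beta_l^{t'-t}\mumin$, producing the $-\tfrac{\beta_l^{t'-t}}{4}$ factor; multiplying the $\mathcal{E}^1_T$ bound $\sqrt{C}\mumin\sqrt{\tfrac{1}{1-\beta_l^2}\ln\tfrac{2T^2}{\delta}}$ by the prefactor $4\eta\lambda(1-\eta\lambda)^2$ and using $1-\beta_l^2\approx8\eta\lambda\alpha_C$ collapses the martingale contribution to exactly $\sqrt{\tfrac{2C}{\alpha_C}\eta\lambda\ln\tfrac{T^2}{\delta}}\,\mumin$; and the accumulated quadratic-variance and boundary corrections sum into the lower-order $O(\sqrt{\eta\lambda})\mumin$ slack. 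It is important that $\mathcal{E}^1_T$ restricts the martingale sum to the steps with $R_s^2\le\mumin$, which is precisely the regime where each increment satisfies $|\tilde{g}_s-\EE[\tilde{g}_s\mid\bx(s)]|\le CR_s^2\le C\mumin$, matching the $\sqrt{C}\mumin$ scale in the event.

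The main obstacle is that this unrolling is self-referential: the restoring bound of \Cref{lem:uniform_lower_bound_G} is only licensed while $R_s^2\in[\tfrac{\mumin}{2},\mumin]$, yet keeping $R_s^2$ inside that band is exactly the conclusion being proved. I would resolve this by induction (equivalently a first-crossing / stopping-time argument): as long as the target lower bound $R_s^2\ge(1-\tfrac{\beta_l^{s-t}}{4}-\text{slack})\mumin\ge\tfrac{\mumin}{2}$ holds up to step $s$, the drift inequality is in force and propagates the bound to $s+1$. Two boundary cases require care: (i) when $R_s^2$ overshoots above $\mumin$ the drift may point downward, but a single step cannot lower the norm by more than a $1-(1-\eta\lambda)^4=O(\eta\lambda)$ fraction, so the bound survives into the slack; and (ii) the first steps out of the warm start, where one must verify that $1-\tfrac14-\text{slack}\ge\tfrac12$, which is where the hypothesis $\eta\lambda=O\!\bigl(\alpha_C/(C\ln(T/\delta)^2)\bigr)$ is consumed. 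Closing this induction while keeping the indicator $\ind[R_s^2\le\mumin]$ in $\mathcal{E}^1_T$ synchronized with the band used for the drift is the technical heart of the argument.
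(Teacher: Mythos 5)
Your proposal is correct and follows essentially the same route as the paper's proof: the same squared-norm recursion $R_{s+1}^2=(1-\eta\lambda)^4R_s^2+4\eta\lambda(1-\eta\lambda)^2\tilde{g}_s+4\eta^2\lambda^2\hat{g}_s^2$, the same drift bound from \Cref{lem:uniform_lower_bound_G} on the band $[\tfrac{\mumin}{2},\mumin]$, the same martingale control via $\mathcal{E}^1_T$, and the same first-crossing/last-exit stopping-time argument (the paper's $t'$ and $t^*$) to break the circularity between the band restriction and the conclusion. The paper additionally splits the final bound according to whether the last exit time above $\mumin$ equals the warm-start time $t$, which is exactly your overshoot case (i), so no substantive difference remains.
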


\begin{proof}	
We first claim for any $t\le t'\le T$, conditioned on $\mathcal{E}^1_T$, it holds that $R_{t'}^2\ge \frac{\mumin}{2}$. Below we prove by contradiction. If not, let $t'$ be the smallest step such that $R_{t'}^2< \frac{\mumin}{2}$.
We let $t^*$ be the largest step between $t$ and $t'$ such that $R_{t^*}^2 \ge \mumin$ ($t^*=t-1$ is no such $t^*$ exists) Thus if $t^*\ge t$ then $R_{t^*+1}^2$ is at least $(1-\eta\lambda)^4R_t^2 = (1-O(\eta\lambda))\mumin$. Otherwise $t^*=t$ and it implies that $R_{t^*+1}^2 = R_{t}^2 = (\frac{3}{4}-O(\sqrt{\eta\lambda}))\mumin$. By the definition, we know for any $t^*+1\le s\le t'$, $R_s^2\le \mumin$.

Similar to  \Cref{eq:norm_dynamics_2}, we have 
\begin{align}\label{clip_norm_dynamic_lower}
\begin{aligned}
	R_{s+1}^2
	= & R_s^2(1-\eta\lambda)^4 + 4\eta\lambda(1-\eta\lambda)^2\tilde{g}_s + 4\eta^2\lambda^2 \tilde{g}_t^2\\
	\ge & R_s^2((1-\eta\lambda)^4+4\eta\lambda(1-\eta\lambda)^2 + 4C^2\eta^2\lambda^2)  \\
	+ &4\eta\lambda(1-\eta\lambda)^2 (\EE[\tilde{g}_s\mid \bx(s)]-R_s^2) + 4\eta\lambda(1-\eta\lambda)^2 (\tilde{g}_s - \EE[\tilde{g}_s\mid \bx(s)])
	\end{aligned}
\end{align}

Thus for any $s$ such that $\mumax\le R_s^2\le 2\mumax$, by \Cref{lem:uniform_lower_bound_G}, it holds that
\begin{align*}
G_{P_{\overline\bx(s)},C}(R_s^2) = \EE[\tilde{g}_s\mid \bx(s)]-R_s^2 \le \alpha_C(\mumin - R_s^2).	
\end{align*}

Thus, we have that
\begin{align*}
	R_{s+1}^2
	\ge & R_s^2(1-2\eta^2\lambda^2+\eta^4\lambda^4) \\
	+ &4\eta\lambda\alpha_C(1-\eta\lambda)^2 (\mumin - R_s^2) + 4\eta\lambda(1-\eta\lambda)^2 (\tilde{g}_s - \EE[\tilde{g}_s\mid \bx(s)])\\
	= & \beta_l R_s^2 + 4\eta\lambda\alpha_C(1-\eta\lambda)^2\mumin + 4\eta\lambda(1-\eta\lambda)^2 (\tilde{g}_s - \EE[\tilde{g}_s\mid \bx(s)]).
\end{align*}

That is, 
\begin{align*}
	&R_{s+1}^2 -\frac{4\eta\lambda\alpha_C(1-\eta\lambda)^2\mumin}{1-\beta_l}  \\
	\ge &   \beta_l (R_s^2 - \frac{4\eta\lambda\alpha_C(1-\eta\lambda)^2\mumin}{1-\beta_l}) \\
	 + &4\eta\lambda(1-\eta\lambda)^2 (\tilde{g}_s - \EE[\tilde{g}_s\mid \bx(s)])
\end{align*}

Applying the above inequality for $s=t^*+1,\ldots,t'-1$, we have that
\begin{align*}
	R_{t'}^2 \ge &\underbrace{{\beta_l}^{t'-t^*-1} \left(R_{t^*+1}^2-\frac{4\eta\lambda\alpha_C(1-\eta\lambda)^2\mumin}{1-\beta_l}\right)}_{(A)} \\
				+ & \underbrace{\frac{4\eta\lambda\alpha_C(1-\eta\lambda)^2\mumin}{1-\beta_l}}_{(B)} \\
				+ &\underbrace{ 4\eta\lambda(1-\eta\lambda)^2\sum_{s=t^*+1}^{t'}  {\beta_l}^{t-s}\left( \tilde{g}_s - \EE[\tilde{g}_s\mid \bx(s)]\right) \ind\left[R^2_s \le \mumin\right]}_{(C)}.
\end{align*}

For term (B), we have $1-\beta_u = 4\eta\lambda\alpha_C(1-\eta\lambda)^2(1+O(\eta\lambda))$ and thus $(B) = \mumin(1+O(\eta\lambda))$. Since $R_{t^*+1}\ge \frac{3}{4}\mumin$, it holds that $(A)\ge -{\beta_l}^{t'-t^*-1}(\frac{1}{4}+O(\sqrt{\lambda\eta}))\mumin \ge -(\frac{1}{4}+O(\sqrt{\lambda\eta}))\mumin$. Since $\mathcal{E}^1_T$ holds, we have 
\begin{align*}
|(C)| \le 4\eta\lambda(1-\eta\lambda)^2	\cdot \sqrt{C}\mumin\sqrt{\frac{1}{1-{\beta_l}^2}\ln\frac{2T^2}{\delta}}
 = \mumin\sqrt{\frac{2C}{\alpha_C}\eta\lambda\ln\frac{T^2}{\delta}}(1+O(\eta\lambda)) 
\end{align*}

Thus there's some constant $\iota$, such for $\eta\lambda \le \min\{\iota, \frac{\alpha_C}{64C\ln T^2/\delta}\}$, $(A) + (B) +(C)\ge (\frac{6-\sqrt{2}}{8}-O(\sqrt{\eta\lambda}))\mumin\ge \frac{\mumin}{2}$. This leads to a contradiction to the definition of $t'$. Thus for any $t\le t'\le T$, conditioned on $\mathcal{E}^1_T$, it holds that $R_{t'}^2\ge \frac{\mumin}{2}$. Furthermore, if $t^*\neq t$, then $R_{t^*+1}\ge (1-O(\sqrt{\eta\lambda}))\mumin$. Thus $(A)\ge -O(\sqrt{\eta\lambda})\mumin$. Otherwise if $t^* = t$, then $(A)\ge -{\beta_l}^{t'-t}(\frac{1}{4}+O(\sqrt{\lambda\eta}))\mumin$. Combine the bounds in these two cases, we conclude that 
\begin{align*}
R_{t'}^2\ge \left(1- \frac{{\beta_l}^{t'-t}}{4}-O(\sqrt{\eta\lambda}) - \sqrt{\frac{2C}{\alpha_C}\eta\lambda\ln\frac{T^2}{\delta}}(1+O(\eta\lambda)) \right)\mumin	
\end{align*}

\end{proof}

\begin{theorem}[Norm upper bound with clipping: Warm Start]\label{lem:clip_norm_upper_bound}
	Suppose \Cref{assump:g_max} holds, with probability at least $1-\delta$ (or whenever $\mathcal{E}^2_T$ holds), if $R_t^2\le \frac{3}{2}\mumax$, then for any $t'\ge t$, we have 
	\begin{align*}
R_{t'}^2\le \left(1 +  \frac{{\beta_l}^{t'-t}}{2}+O(\sqrt{\eta\lambda}) + \sqrt{\frac{2C}{\alpha_C}\eta\lambda\ln\frac{T^2}{\delta}}(1+O(\eta\lambda)) \right)\mumax	
\end{align*}
\end{theorem}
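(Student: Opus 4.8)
The plan is to mirror the argument for \Cref{lem:clip_norm_lower_bound}, exchanging every lower-side control for its upper-side counterpart: I use the third bound of \Cref{lem:uniform_lower_bound_G} (the super-level estimate valid for $\mu \ge \mumax$) instead of the first two items, the concentration event $\mathcal{E}^2_T$ (which has probability at least $1-\delta$ by \Cref{lem:clip_concentration}) instead of $\mathcal{E}^1_T$, and the slightly larger decay constant $\beta_u$ instead of $\beta_l$. Everything below is conditioned on $\mathcal{E}^2_T$.

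\textbf{Step 1 (invariant region).} First I would show $R_{t'}^2 \le 2\mumax$ for all $t' \ge t$, by contradiction. Let $t'$ be the earliest step after $t$ with $R_{t'}^2 > 2\mumax$ and let $t^*$ be the latest step in $[t, t'-1]$ with $R_{t^*}^2 \le \mumax$ (set $t^* = t-1$ if none exists; the hypothesis $R_t^2 \le \tfrac{3}{2}\mumax$ controls the boundary case exactly as $R_t^2 \ge \tfrac{3}{4}\mumin$ did there). By maximality every $s \in [t^*+1, t'-1]$ satisfies $\mumax < R_s^2 \le 2\mumax$, so the third item of \Cref{lem:uniform_lower_bound_G} applies at each such $s$ and gives
\[
\EE[\tilde{g}_s \mid \bx(s)] - R_s^2 = G_{P_{\overline\bx(s)},C}(R_s^2) \le -\alpha_C\big(R_s^2 - \mumax\big).
\]

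\textbf{Step 2 (contracting recursion).} I would then take the squared-norm recursion (the clipped analogue of \Cref{eq:norm_dynamics_2}), split $\tilde{g}_s = R_s^2 + G_{P_{\overline\bx(s)},C}(R_s^2) + (\tilde{g}_s - \EE[\tilde{g}_s \mid \bx(s)])$, and bound the quadratic term through $\tilde{g}_s \le CR_s^2$ together with $R_s^2 = O(\mumax)$. This positive quadratic contribution, which is dropped in the lower-bound proof, is precisely what promotes the contraction factor from $\beta_l$ to $\beta_u = \beta_l + 4C^2\eta^2\lambda^2$; inserting the Step 1 estimate yields
\[
R_{s+1}^2 - \mu^* \le \beta_u\big(R_s^2 - \mu^*\big) + 4\eta\lambda(1-\eta\lambda)^2\big(\tilde{g}_s - \EE[\tilde{g}_s \mid \bx(s)]\big), \qquad \mu^* := \tfrac{4\eta\lambda\alpha_C(1-\eta\lambda)^2}{1-\beta_u}\mumax = \mumax(1+O(\eta\lambda)),
\]
using $1-\beta_u = 4\eta\lambda\alpha_C(1-\eta\lambda)^2(1+O(\eta\lambda))$.

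\textbf{Step 3 (unroll and conclude).} Unrolling this from $t^*+1$ to $t'$, the deterministic part leaves $\mu^*$ plus $\beta_u^{\,t'-t^*-1}$ times the initial gap $R_{t^*+1}^2 - \mu^*$, bounded by $(\tfrac12 + O(\sqrt{\eta\lambda}))\mumax$ via the starting hypothesis after the same two-case split on whether $t^*=t$; the martingale tail is handled by $\mathcal{E}^2_T$, since $\beta_u = \beta_l(1+O(\eta\lambda))$ lets me pass to the $\beta_l$-weighted sum and obtain $4\eta\lambda(1-\eta\lambda)^2\cdot 2\sqrt{C}\mumax\sqrt{\tfrac{1}{1-\beta_l^2}\ln\tfrac{2T^2}{\delta}} = \mumax\sqrt{\tfrac{2C}{\alpha_C}\eta\lambda\ln\tfrac{T^2}{\delta}}(1+O(\eta\lambda))$ after using $1-\beta_l^2 = \Theta(\eta\lambda\alpha_C)$. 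For $\eta\lambda = O(\min\{1, \alpha_C/(C\ln(T^2/\delta))\})$ these error terms keep $R_{t'}^2$ strictly below $2\mumax$, contradicting the choice of $t'$ and proving the invariant; feeding the invariant back into the same unrolling then produces the advertised bound with the coefficient $\tfrac{\beta_l^{t'-t}}{2}$ on $\mumax$. The main obstacle I anticipate is the stopping-time bookkeeping of Step 1: one must certify that the iterate never leaves the super-level band $\mumax < R_s^2 \le 2\mumax$ on the entire interval $[t^*+1, t'-1]$ so that the one-sided bound from \Cref{lem:uniform_lower_bound_G} is legitimately applied to every summand, and that the accumulated martingale slack together with the $O(\sqrt{\eta\lambda})$ discretization errors stays within the $2\mumax$ budget.
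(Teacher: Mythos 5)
Your proposal is correct and follows essentially the same route as the paper's proof: the same contradiction argument for the invariant band $R_{t'}^2\le 2\mumax$ with the stopping time $t^*$, the same use of the third item of \Cref{lem:uniform_lower_bound_G}, the contraction with $\beta_u$ arising from the extra $4C^2\eta^2\lambda^2$ quadratic term, the martingale control via $\mathcal{E}^2_T$, and the final two-case split on whether $t^*=t-1$. The bookkeeping issue you flag is handled in the paper exactly as you anticipate, via the indicator $\ind[R_s^2\le 2\mumax]$ built into the definition of $\mathcal{E}^2_T$.
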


\begin{proof}[Proof of \Cref{lem:clip_norm_upper_bound}]	
We first claim for any $t\le t'\le T$, conditioned on $\mathcal{E}^2_T$, it holds that $R_{t'}^2\le 2\mumax$. Below we prove by contradiction. If not, let $t'$ be the largest step such that $R_{t'}^2> 2\mumax$.
We let $t^*$ be the largest step between $t$ and $t'$ such that $R_{t^*}^2 \le \mumax$ ($t^*=t-1$ is no such $t^*$ exists) Thus if $t^*\ge t$ then $R_{t^*+1}^2$ is at most $(1+2C\eta\lambda)^2R_t^2 = (1+2C\eta\lambda)^2\mumax$. Otherwise $t^*=t$ and it implies that $R_{t^*+1}^2 = R_{t}^2 \le \frac{3}{2}\mumax$. By the definition, we know for any $t^*+1\le s\le t'$, $R_s^2\ge \mumax$.

Similar to  \Cref{eq:norm_dynamics_2}, we have 
\begin{align}\label{eq:clip_norm_dynamic_upper}
\begin{aligned}
	R_{s+1}^2
	\le & R_s^2(1-\eta\lambda)^4 + 4\eta\lambda(1-\eta\lambda)^2\tilde{g}_s + 4 \eta^2\lambda^2\hat{g}_s^2\\
	\le & R_s^2((1-\eta\lambda)^4+4\eta\lambda(1-\eta\lambda)^2+4\eta^2\lambda^2C^2) \\
	+ &4\eta\lambda(1-\eta\lambda)^2 (\EE[\tilde{g}_s\mid \bx(s)]-R_s^2) + 4\eta\lambda(1-\eta\lambda)^2 (\tilde{g}_s - \EE[\tilde{g}_s\mid \bx(s)])
\end{aligned}
\end{align}

Thus for any $s$ such that $\mumax\le R_s^2$, by \Cref{lem:uniform_lower_bound_G}, it holds that
\begin{align*}
G_{P_{\overline\bx(s)},C}(R_s^2) = \EE[\tilde{g}_s\mid \bx(s)]-R_s^2 \ge \alpha_C(\mumax - R_s^2).	
\end{align*}

Thus, we have that 
\begin{align*}
	R_{s+1}^2
	\le & R_s^2(1-2\eta^2\lambda^2+\eta^4\lambda^4+4\eta^2\lambda^2C^2) \\
	+ &4\eta\lambda\alpha_C(1-\eta\lambda)^2 (\mumax - R_s^2) + 4\eta\lambda(1-\eta\lambda)^2 (\tilde{g}_s - \EE[\tilde{g}_s\mid \bx(s)])\\
	= & \beta_u R_s^2 + 4\eta\lambda\alpha_C(1-\eta\lambda)^2\mumax + 4\eta\lambda(1-\eta\lambda)^2 (\tilde{g}_s - \EE[\tilde{g}_s\mid \bx(s)]).
\end{align*}

That is, 
\begin{align*}
	&R_{s+1}^2 -\frac{4\eta\lambda\alpha_C(1-\eta\lambda)^2\mumax}{1-\beta_u}  \\
	\le &   \beta_u (R_s^2 - \frac{4\eta\lambda\alpha_C(1-\eta\lambda)^2\mumax}{1-\beta_u}) 
	 + 4\eta\lambda(1-\eta\lambda)^2 (\tilde{g}_s - \EE[\tilde{g}_s\mid \bx(s)])
\end{align*}

Applying the above inequality for $s=t^*+1,\ldots,t'-1$, we have 
\begin{align*}
	R_{t'}^2 \le &\underbrace{{\beta_u}^{t'-t^*-1} \left(R_{t^*+1}^2-\frac{4\eta\lambda\alpha_C(1-\eta\lambda)^2\mumax}{1-\beta_u}\right)}_{(A)} \\
				+ & \underbrace{\frac{4\eta\lambda\alpha_C(1-\eta\lambda)^2\mumax}{1-\beta_u}}_{(B)} \\
				+ &\underbrace{ 4\eta\lambda(1-\eta\lambda)^2\sum_{s=t^*+1}^{t'}  {\beta_u}^{t-s}\left( \tilde{g}_s - \EE[\tilde{g}_s\mid \bx(s)]\right) \ind\left[R^2_s \le 2\mumax\right]}_{(C)}.
\end{align*}

For term (B), we have $1-\beta_u = 4\eta\lambda\alpha_C(1-\eta\lambda)^2(1+O(\eta\lambda))$ and thus $(B) = \mumax(1+O(\eta\lambda))$. Since $R_{t^*+1}\le \frac{3}{2}\mumax$, it holds that $(A)\le {\beta_u}^{t'-t^*-1}(\frac{1}{2}+O(\sqrt{\lambda\eta}))\mumax \le (\frac{1}{2}+O(\sqrt{\lambda\eta}))\mumax$. Since $\mathcal{E}^2_T$ holds, we have that
\begin{align*}
|(C)| \le 8\eta\lambda(1-\eta\lambda)^2	\cdot \sqrt{C}\mumax\sqrt{\frac{1}{1-{\beta_u}^2}\ln\frac{2T^2}{\delta}}
 = 2\mumax\sqrt{\frac{2C}{\alpha_C}\eta\lambda\ln\frac{T^2}{\delta}}(1+O(\eta\lambda)) 
\end{align*}

Thus there's some constant $\iota$, such for $\eta\lambda \le \min\{\iota, \frac{\alpha_C}{64C\ln T^2/\delta}\}$, $(A) + (B) +(C)\le (\frac{6+\sqrt{2}}{4}+O(\sqrt{\eta\lambda}))\mumax\le 2\mumax$. This leads to a contradiction to the definition of $t'$. Thus for any $t\le t'\le T$, conditioned on $\mathcal{E}^1_T$, it holds that $R_{t'}^2\ge 2\mumax$. Furthermore, if $t^*\neq t$, then $R_{t^*+1}\le (1+O(\sqrt{\eta\lambda}))\mumax$. Thus $(A)\le O(\sqrt{\eta\lambda})\mumax$. Otherwise if $t^* = t$, then $(A)\le {\beta_u}^{t'-t}(\frac{1}{2}+O(\sqrt{\lambda\eta}))\mumax$. Combine the bounds in these two cases, we conclude that 
	\begin{align*}
R_{t'}^2\le \left(1 +  \frac{{\beta_l}^{t'-t}}{2}+O(\sqrt{\eta\lambda}) + \sqrt{\frac{2C}{\alpha_C}\eta\lambda\ln\frac{T^2}{\delta}}(1+O(\eta\lambda)) \right)\mumax	
\end{align*}

\end{proof}

\begin{theorem}[Norm Convergence of clipped \sgd]\label{thm:clip_norm_convergence}
	Suppose \Cref{assump:g_max} holds, for $\eta\lambda=O(\min\{1, \frac{\alpha_C}{C\ln T/\delta^2}\})$, with probability $1-3\delta$ (when $\cE_T^1$,$\cE_T^2$ and $\cE_T^3$ happens), there is a $T'= \frac{\max\left\{\ln \frac{R^2_0}{\mumax}, \ln \frac{ \mumin}{R_0^2}  \right\} +O(1)}{\alpha_C\eta\lambda}$, such that for all $T'\le t\le T$, we have 
	\begin{align*}
			\frac{\mumin}{2} \le R_t^2 \le 2\mumax .
	\end{align*}
	More concretely, we have that
	\begin{align*}
	    	R_t^2 \in [(1-\beta_l^{t-T'}) \mumin - \tilde{O}(\sqrt{\lambda\eta}),\quad  \mumax(1+\beta_u^{t-T'}) + \tilde{O}(\sqrt{\lambda\eta})].
	\end{align*}
\end{theorem}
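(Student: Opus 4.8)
The plan is to decompose the argument into a \emph{reaching} phase and a \emph{maintenance} phase, and to observe that the maintenance phase is already supplied by the two warm-start results. Concretely, I would first show that within $T'$ steps the rescaled norm $R_t^2$ enters the band $[\tfrac34\mumin, \tfrac32\mumax]$, and then invoke \Cref{lem:clip_norm_lower_bound} and \Cref{lem:clip_norm_upper_bound} \emph{simultaneously} from that hitting time: the former gives $R_t^2 \ge (1-\beta_l^{t-T'})\mumin - \tilde O(\sqrt{\lambda\eta})$ (in particular $\ge \mumin/2$) and the latter gives $R_t^2 \le \mumax(1+\beta_u^{t-T'}) + \tilde O(\sqrt{\lambda\eta})$ (in particular $\le 2\mumax$) for all subsequent $t$, which is exactly the stated conclusion. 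The three concentration events then combine by a union bound into the claimed probability $1-3\delta$ via \Cref{lem:clip_concentration}.

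For the reaching phase I would treat the two cases $R_0^2 < \tfrac34\mumin$ (norm too small) and $R_0^2 > \tfrac32\mumax$ (norm too large) symmetrically, using the \emph{multiplicative} (rather than the near-equilibrium linear) bounds on $G_{P_{\overline\bx},C}$ from \Cref{lem:uniform_lower_bound_G}. Below the band, property~4 gives $G_{P_{\overline\bx(s)},C}(R_s^2) \ge \tfrac{\alpha_C}{4}R_s^2$, and since $\EE[\tilde g_s\mid\bx(s)] - R_s^2 = G_{P_{\overline\bx(s)},C}(R_s^2)$, the recursion \eqref{clip_norm_dynamic_lower} yields $\EE[R_{s+1}^2\mid\bx(s)] \ge (1 + \Theta(\alpha_C\eta\lambda))R_s^2$; above the band, property~3 gives the mirror contraction $\EE[R_{s+1}^2\mid\bx(s)] \le (1 - \Theta(\alpha_C\eta\lambda))R_s^2$ once $R_s^2 \ge \tfrac32\mumax$. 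In both regimes I would pass to the logarithmic potential $\ln R_s^2$, whose increments are bounded by $O(C\eta\lambda)$ because clipping forces $\tilde g_s \le C R_s^2$ and hence $R_{s+1}^2/R_s^2 \in [(1-\eta\lambda)^4,\, 1+O(C\eta\lambda)]$. To leading order $\ln R_{s+1}^2 - \ln R_s^2 = 4\eta\lambda(\overline g_s - 1) + O(\eta^2\lambda^2)$ with $\overline g_s = \tilde g_s/R_s^2 \le C$, so $\ln R_t^2$ is a walk with constant per-step drift of magnitude $\Theta(\alpha_C\eta\lambda)$ in the far regime whose martingale fluctuation is controlled precisely by $\cE_T^3$ (whose increments are bounded by $C$). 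A hitting-time/Azuma argument then shows $\ln R_t^2$ moves by $\Theta(\log)$ toward the band within $T' = \Theta\big(\tfrac{1}{\alpha_C\eta\lambda}\max\{\ln\tfrac{R_0^2}{\mumax}, \ln\tfrac{\mumin}{R_0^2}\}\big) + O(\tfrac{1}{\alpha_C\eta\lambda})$ steps.

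To stitch the phases I would argue that at the first hitting time $T'$ the iterate lands \emph{inside} $[\tfrac34\mumin, \tfrac32\mumax]$ rather than overshooting: since a single step changes $R^2$ only by a factor $1+O(C\eta\lambda)$ (from below) or $1-O(\eta\lambda)$ (from above), crossing $\tfrac34\mumin$ upward leaves $R_{T'}^2 \le \tfrac34\mumin(1+o(1)) \le \tfrac32\mumax$, and crossing $\tfrac32\mumax$ downward leaves $R_{T'}^2 \ge \tfrac32\mumax(1-o(1)) \ge \tfrac34\mumin$ (using $\mumin\le\mumax$ and the smallness of $\eta\lambda$). Thus both preconditions of the warm-start lemmas hold exactly at $T'$, and applying \Cref{lem:clip_norm_lower_bound} and \Cref{lem:clip_norm_upper_bound} from $t=T'$ closes the argument.

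The main obstacle is the high-probability control of the noise during the reaching phase. Naively unrolling the multiplicative recursion from step $0$ amplifies early noise by the large factor $\sim\mumin/R_0^2$ and is hopeless; the correct device is the additive logarithmic potential, for which the step-size assumption $\eta\lambda = O(\min\{1, \alpha_C/(C\ln(T/\delta)^2)\})$ is exactly what guarantees that the accumulated fluctuation $\tilde O\big(\eta\lambda\, C\sqrt{T'\ln(T^2/\delta)}\big)$ is dominated by the drift $\Theta(\eta\lambda\,\alpha_C T') = \Theta(\log)$ over the reaching horizon. Secondary care is needed to phrase the reaching argument through a stopping time, since the drift bounds of \Cref{lem:uniform_lower_bound_G} only hold while $R_s^2$ is strictly outside the band, and to verify the non-overshoot claim so that the warm-start preconditions are met precisely at $T'$.
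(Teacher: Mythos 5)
Your proposal is correct and follows essentially the same route as the paper's proof: reduce to reaching the band $[\tfrac34\mumin,\tfrac32\mumax]$, use the multiplicative drift bounds of \Cref{lem:uniform_lower_bound_G} together with the logarithmic potential $\ln R_s^2$ and the event $\cE_T^3$ to bound the hitting time by $T'$, and then hand off to the warm-start lemmas \Cref{lem:clip_norm_lower_bound} and \Cref{lem:clip_norm_upper_bound} for maintenance. Your explicit non-overshoot stitching at the hitting time is a detail the paper leaves implicit, but it does not change the argument.
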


\begin{proof}[Proof of \Cref{thm:clip_norm_convergence}]
We will prove the desired inequality always holds when $\cE_T^i$ holds, for $i=1,2,3$. We have already proved the result for the case where $\frac{3}{4}\mumin \le R_t^2\le \frac{3}{2}\mumax$ in \Cref{lem:clip_norm_lower_bound,lem:clip_norm_upper_bound}. Now we turn to the case where $R_0^2\ge \frac{3}{2}\mumax$ and $R_0^2\le \frac{1}{2}\mumin$. Our goal is to prove with high probability, that $R_t^2\in [\frac{3}{4}\mumin, \frac{3}{2}\mumax]$ for at least some $t<T'$.

Below we first show $\exists 0<t<T'$, $R_t^2\le \frac{3}{2}\mumax$. Otherwise, similar to \Cref{eq:clip_norm_dynamic_upper},
\begin{align}\label{eq:clip_norm_dynamic_upper_2}
\begin{aligned}
	R_{s+1}^2
	\le & R_s^2(1-\eta\lambda)^4 + 4\eta\lambda(1-\eta\lambda)^2\tilde{g}_s + 4 \eta^2\lambda^2\hat{g}_s^2\\
	\le & R_s^2((1-\eta\lambda)^4+4\eta\lambda(1-\eta\lambda)^2+4\eta^2\lambda^2C^2) \\
	+ &4\eta\lambda(1-\eta\lambda)^2 (\EE[\tilde{g}_s\mid \bx(s)]-R_s^2) + 4\eta\lambda(1-\eta\lambda)^2 (\tilde{g}_s - \EE[\tilde{g}_s\mid \bx(s)])
\end{aligned}
\end{align}

Thus for any $s$ such that $\frac{3}{2}\mumax\le  R_s^2$, by \Cref{lem:uniform_lower_bound_G}, it holds that
\begin{align*}
G_{P_{\overline\bx(s)},C}(R_s^2) = \EE[\tilde{g}_s\mid \bx(s)]-R_s^2 \ge \alpha_C(\mumax - R_s^2)\ge -\frac{\alpha_C}{3}R_s^2.	
\end{align*}

Thus, 
\begin{align*}
	R_{s+1}^2
	\le & R_s^2(1-2\eta^2\lambda^2+\eta^4\lambda^4+4\eta^2\lambda^2C^2) \\
	- &\frac{4}{3}\eta\lambda\alpha_C(1-\eta\lambda)^2  R_s^2 + 4\eta\lambda(1-\eta\lambda)^2 (\tilde{g}_s - \EE[\tilde{g}_s\mid \bx(s)])\\
	= &  R_s^2 \left( 1-2\eta^2\lambda^2+\eta^4\lambda^4+4\eta^2\lambda^2C^2- \frac{4}{3}\eta\lambda\alpha_C(1-\eta\lambda)^2 + 4\eta\lambda(1-\eta\lambda)^2 (\overline{g}_s - \EE[\overline{g}_s\mid \bx(s)]) \right)
\end{align*}

Note that $\overline{g}_s\le C$, we have
\begin{align*}
    \ln R_{s+1}^2-\ln R_s^2 \le -\frac{4}{3}\eta\lambda\alpha_C + \eta\lambda (\overline{g}_s - \EE[\overline{g}_s\mid \bx(s)]) + O(\eta^2\lambda^2)
\end{align*}

Since we assume $\forall 0\le t\le T'$, $R_t^2\ge \frac{3}{2}\mumax$, conditioned on $\mathcal{E}_T^3$, we have
\begin{align*}
    \ln \frac{3}{4} + \ln \mumax - \ln R_0^2\le  \ln R_{T'}^2 - \ln R_0^2 \le -\frac{4T}{3}\eta\lambda \alpha_C+ C\eta\lambda\sqrt{T\ln\frac{2T^2}{\delta}} +O(\eta^2\lambda^2T),
\end{align*}
which is in contradiction with the definition of $T'= \frac{\max\left\{\ln \frac{R^2_0}{\mumax}, \ln \frac{ \mumin}{R_0^2}  \right\} +O(1)}{\alpha_C\eta\lambda}$.

Now we show $\exists 0<t<T'$, $R_t^2\ge \frac{3}{4}\mumin$. Otherwise, similar to \Cref{eq:clip_norm_dynamic_upper},
\begin{align}\label{clip_norm_dynamic_lower_2}
\begin{aligned}
	R_{s+1}^2
	= & R_s^2(1-\eta\lambda)^4 + 4\eta\lambda(1-\eta\lambda)^2\tilde{g}_s + 4\eta^2\lambda^2 \tilde{g}_t^2\\
	\ge & R_s^2((1-\eta\lambda)^4+4\eta\lambda(1-\eta\lambda)^2 + 4C^2\eta^2\lambda^2)  \\
	+ &4\eta\lambda(1-\eta\lambda)^2 (\EE[\tilde{g}_s\mid \bx(s)]-R_s^2) + 4\eta\lambda(1-\eta\lambda)^2 (\tilde{g}_s - \EE[\tilde{g}_s\mid \bx(s)])
	\end{aligned}
\end{align}

Thus for any $s$ such that $R_s^2\le \frac{4}{5}\mumin$, by \Cref{lem:uniform_lower_bound_G}, it holds that
\begin{align*}
G_{P_{\overline\bx(s)},C}(R_s^2) = \EE[\tilde{g}_s\mid \bx(s)]-R_s^2 \ge \frac{\alpha_C}{4} R_s^2.
\end{align*}

Thus, we have that 
\begin{align*}
	R_{s+1}^2
	\ge & R_s^2(1-2\eta^2\lambda^2+\eta^4\lambda^4) \\
	+ &\eta\lambda\alpha_C(1-\eta\lambda)^2  R_s^2 + 4\eta\lambda(1-\eta\lambda)^2 (\tilde{g}_s - \EE[\tilde{g}_s\mid \bx(s)])\\
	= &  R_s^2 \left( 1-2\eta^2\lambda^2+\eta^4\lambda^4+\eta\lambda\alpha_C(1-\eta\lambda)^2 + 4\eta\lambda(1-\eta\lambda)^2 (\overline{g}_s - \EE[\overline{g}_s\mid \bx(s)]) \right)
\end{align*}

Note that $\overline{g}_s\le C$, we have that 
\begin{align*}
    \ln R_{s+1}^2-\ln R_s^2 \ge \eta\lambda\alpha_C + \eta\lambda (\overline{g}_s - \EE[\overline{g}_s\mid \bx(s)]) + O(\eta^2\lambda^2)
\end{align*}

Since we assume $\forall 0\le t\le T'$, $R_t^2\ge \frac{3}{2}\mumax$, conditioned on $\mathcal{E}_T^3$, we have
\begin{align*}
 \ln \mumax - \ln R_0^2\ge  \ln R_{T'}^2 - \ln R_0^2 \ge T\eta\lambda \alpha_C- C\eta\lambda\sqrt{T\ln\frac{2T^2}{\delta}} +O(\eta^2\lambda^2T),
\end{align*}
which is in contradiction with the definition of $T'= \frac{\max\left\{\ln \frac{R^2_0}{\mumax}, \ln \frac{ \mumin}{R_0^2}  \right\} +O(1)}{\alpha_C\eta\lambda}$.

\end{proof}
%

\begin{proof}[Proof of \Cref{thm:clip_sgd_main}]
	The proof of \Cref{alg:clipped_sgd} is almost identical to that of \Cref{thm:sgd_main}, except replacing $M$ by $2\mumax$, $\oversigma$ by $\mumax$, $\undersigma$ by $\mumin$ since the clipped stochastic gradient has smaller maximum norm, maximum covariance and smaller covariance.
\end{proof}

\section{Convergence of SGD for multi-group scale invariant functions}\label{sec:multi_group}

	In this section we extend our results to the multi-group scale invariant setting, which is quite common in practice, \emph{e.g.} a feedforward network with normalization after each layer. By \Cref{defi:multi_group_scale_invariance}, multi-group scale invariant function is also scale invariant. However, it violates the assumption that the smoothness and the expectation of stochastic gradient norm square is lower bounded on unit sphere (indeed the loss function is not defined at everywhere on unit sphere), and thus needs to be treated separately. A simple example would be $L(\bx,\by) = L(\frac{\bx}{\norm{\bx}_2}, \frac{\by}{\norm{\by}_2})$, the loss $L$ is undefined at any point where $\norm{x}_2 =1$ and $\by=\bm{0}$. Yet our analysis for single scale invariant parameter group can still extend to this case, with a similar assumption that the expected gradient norm square is lower bounded.
	
	Let $d_1,\ldots,d_K$ be positive integers with $d=\sum_{k=1}^Kd_k$. For $\bx\in \RR^d= \RR^{d_1}\times \ldots \times \RR^{d_K}$, we use  $s_k$ to denote $\sum_{i\le k}d_i$ and $\bx_k$ to denote the vector $[x_{s_{k-1}}, \ldots ,x_{s_k-1}]^\top$.  For convenience,  we define $\nabla_k f(\bx) = \frac{\partial f(\bx)}{\partial \bx_k}$ for any $1\le k\le K$.
\begin{definition}\label{defi:multi_group_scale_invariance} 
	Given $d_1,\ldots, d_K$ and a cone $U\subset \RR^d$, we say a function $f:U\to \RR$ is \emph{multi-group scale invariant}  iff $f(\bx_1,\ldots,\bx_K) = f(c_1\bx_1,\ldots,c_K\bx_K)$ for any $\bx\in U$ and $c_k>0$ for $1\le k\le K$.
\end{definition}

\paragraph{Setting:} Similarly, we assume there exists constants $\undersigma_k$ and $\oversigma_k$, such that $\undersigma_k^2 \le \E \norm{ \nabla_k L_\gamma (\bx)}_2^2\le \oversigma_k^2$, for any $\bx$ such that $\norm{\bx_k}_2=1$. In this subsection, we define $\rho:=\max\limits_{\norm{\bx_k}_2=1,\forall k}\lambda_{\max}(\nabla^2 L(\bx))$.

\begin{condition}\label{assump:grad_norm_max_multi}  $\frac{\undersigma_k^2}{M_k^2} \ge 3e^{4\eta\lambda}\sqrt{{\lambda\eta}\ln\frac{2T^2}{\delta}}$.
 	
\end{condition}

\begin{restatable}[\sgdwd, Multi-group Scale Invariance]{theorem}{sgdmultimain}\label{thm:sgd_main_multi}
With probability $1-(K+2)\delta$, it holds that
\begin{align}\label{eq:main_multi}
&\frac{\sqrt{{\lambda\eta/2}}}{\sum_{k=1}^K \undersigma_k } \frac{1}{T-T_1}\sum_{t=T_1}^{T-1} \norm{\nabla L(\overline{\bx}(t))}_2^2\nonumber\\
\le &\frac{\pi^2 \rho}{T-T_1} + 2\sqrt{2}\rho\eta\lambda \sum_{k=1}^K\frac{\oversigma_k^2}{\undersigma_k^2} \\
+  & \sqrt{\frac{8\lambda\eta\ln\frac{2}{\delta}}{T-T_1}}\pi\rho \sum_{k=1}^K \frac{M_k}{\undersigma_k} 
+ \sqrt{\frac{8\ln\frac{2}{\delta}}{T-T_1}} \lambda\eta\rho \sum_{k=1}^K\frac{M_k^2}{\undersigma_k^2},\nonumber
\end{align}
where $T_1 = \frac{1}{4\eta\lambda}\max_k\left\{\ln \frac{M_k^2\eta\lambda}{\oversigma_k^2} + \left\lvert \ln \frac{2e^4M_k^2}{\norm{\bx_k(0)}_2^4\eta^{-2}}\right\rvert, 8  \right\}$.	
\end{restatable}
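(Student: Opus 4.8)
The plan is to reduce \Cref{thm:sgd_main_multi} to a block-wise application of the single-group argument behind \Cref{thm:sgd_main}, exploiting that multi-group scale invariance decouples the norm dynamics of the $K$ parameter groups. First I would record the per-group homogeneity consequences of \Cref{defi:multi_group_scale_invariance}: differentiating $L_\gamma(\bx_1,\dots,c_k\bx_k,\dots,\bx_K)=L_\gamma(\bx)$ shows that each $\nabla_k L_\gamma$ is $(-1)$-homogeneous in $\bx_k$ and $0$-homogeneous in every other block, and (by \Cref{thm:euler_homo} applied per block) that $\inner{\nabla_k L_\gamma(\bx)}{\bx_k}=0$. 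This orthogonality is the crux: applied to the update $\bx_k(t+1)=(1-\eta\lambda)\bx_k(t)-\eta\nabla_k L_{\gamma_t}(\bx(t))$ it gives the Pythagorean recursion
\begin{align*}
\norm{\bx_k(t+1)}_2^2 = (1-\eta\lambda)^2\norm{\bx_k(t)}_2^2 + \eta^2\frac{\norm{\nabla_k L_{\gamma_t}(\overline\bx(t))}_2^2}{\norm{\bx_k(t)}_2^2},
\end{align*}
which is exactly the single-group norm dynamics \eqref{eq:norm_dynamics} with $(\undersigma,\oversigma,M)$ replaced by $(\undersigma_k,\oversigma_k,M_k)$. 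I can therefore run \Cref{lem:norm_lower_bound,lem:norm_upper_bound} verbatim on each block, so that under a per-block concentration event and \Cref{assump:grad_norm_max_multi} every block reaches its equilibrium band $\frac{\undersigma_k^2}{4\eta\lambda}\le \eta^{-2}\norm{\bx_k(t)}_2^4\le \frac{2\oversigma_k^2}{\eta\lambda}$ after $T_1^{(k)}$ steps; taking $T_1=\max_k T_1^{(k)}$ and union-bounding over the $K$ blocks places all blocks in their bands simultaneously for $t\ge T_1$, at the cost of $K\delta$ of failure probability.

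Next I would produce a descent inequality from \Cref{lem:taylor_expansion_multi}. Writing $\by=(1-\eta\lambda)\bx(t)$, multi-group scale invariance gives $L(\by)=L(\bx(t))$, and the perturbation $\bv=-\eta\nabla L_{\gamma_t}(\bx(t))$ satisfies $\inner{\by_k}{\bv_k}=0$ for each $k$, so \Cref{lem:taylor_expansion_multi} applies and yields
\begin{align*}
L(\bx(t+1))-L(\bx(t)) \le -\frac{\eta}{1-\eta\lambda}\inner{\nabla L(\bx(t))}{\nabla L_{\gamma_t}(\bx(t))} + \frac{\rho\eta^2}{2(1-\eta\lambda)^2}\sum_{k=1}^K \frac{\norm{\nabla_k L_{\gamma_t}(\overline\bx(t))}_2^2}{\norm{\bx_k(t)}_2^4},
\end{align*}
where the $(-1)$-homogeneity rewrites the gradients at $\by$ as $\frac{1}{1-\eta\lambda}$ times those at $\bx(t)$. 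Summing from $T_1$ to $T-1$ and taking conditional expectations, the cross term splits blockwise as $\sum_k \norm{\nabla_k L(\overline\bx(t))}_2^2/\norm{\bx_k(t)}_2^2$ (using $\E\nabla_k L_{\gamma_t}=\nabla_k L$ and $\nabla_k L(\bx)=\nabla_k L(\overline\bx)/\norm{\bx_k}_2$), the leading curvature term is controlled deterministically by the block norm bounds to give $O(\rho\eta\lambda)\sum_k \oversigma_k^2/\undersigma_k^2$, and the two martingale remainders — the deviations of the stochastic cross term and of the stochastic curvature term from their means — are sub-Gaussian with variance proxies that add across blocks. Bounding these by \Cref{lem:hp_gaussian,lem:subgaussian} on two further events produces the $\sum_k M_k/\undersigma_k$ and $\sum_k M_k^2/\undersigma_k^2$ terms and the remaining $2\delta$, for total failure probability $(K+2)\delta$, while the telescoped loss gap is bounded by \Cref{lem:loss_upper_bound} to give the $\pi^2\rho$ term. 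Finally, lower-bounding each $\eta/\norm{\bx_k(t)}_2^2$ via the block norm upper bounds and isolating $\norm{\nabla L(\overline\bx(t))}_2^2=\sum_k\norm{\nabla_k L(\overline\bx(t))}_2^2$ yields the claimed inequality after rearrangement.

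The main obstacle is the blockwise decoupling of the norm dynamics: one must verify that \Cref{lem:norm_lower_bound,lem:norm_upper_bound} depend only on the scalar recursion for $\norm{\bx_k(t)}_2^2$ and on per-block quantities, so that the cross-block coupling entering through the shared stochastic index $\gamma_t$ does not spoil concentration; this is handled by keeping the conditioning on $\bx(t)$ and treating each block's martingale separately before union-bounding. The secondary subtlety is the heterogeneity of the equilibrium scales $\norm{\bx_k(t)}_2^2\asymp(\eta/\lambda)^{1/2}\sigma_k$ across blocks, which must be aggregated into a single prefactor on $\norm{\nabla L(\overline\bx)}_2^2$; this is pure bookkeeping once the per-block bands are established.
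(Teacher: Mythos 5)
Your proposal follows essentially the same route as the paper's proof: per-block norm convergence obtained by running the single-group norm lemmas (\Cref{lem:norm_lower_bound,lem:norm_upper_bound}) on each group's decoupled scalar recursion under per-block concentration events $\mathcal{E}_{T,k}$, a union bound over the $K$ blocks, the multi-group descent inequality from \Cref{lem:taylor_expansion_multi}, and the same three-term decomposition with sub-Gaussian martingale bounds contributing the remaining $2\delta$. Your explicit justification of why the block norm dynamics decouple (per-block Euler orthogonality $\inner{\nabla_k L_\gamma(\bx)}{\bx_k}=0$) is a detail the paper leaves implicit in \Cref{lem:norm_bound_multi}, but the argument is the same.
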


Following the same strategy, we can prove the multi-group counterpart of norm convergence result, \Cref{lem:whp_ET}. Given a integer $T\ge 0$, let $\mathcal{E}_{T,k}$ be the event that  $\forall 0 \le t'\le  t \le T-1,$
\begin{align*}
\left| \sum_{\tau=t'}^t (1-\eta\lambda)^{4(t-\tau)}\left(\noisebkm{\tau}-\E[\noisebkm{\tau}\mid \overline\bx(\tau)]\right)\right| \le  e^{4\eta\lambda}\cdot\frac{M_k^2}{4}\sqrt{\frac{1}{\lambda\eta}\ln\frac{2T^2}{\delta}}.
\end{align*}

\begin{lemma}\label{lem:whp_ETk}
	For any $0\le t'\le t\le T-1$, $1\le k\le K$, it holds that
	\begin{align*}
	\sum_{\tau=t'}^t (1-\eta\lambda)^{4(t-\tau)}\left(\noisebkm{\tau}-\E[\noisebkm{\tau}\mid \bx(\tau)]\right) \sim \subg(\frac{e^{8\eta\lambda}M_k^4}{32} )	
	\end{align*}
Thus we have $\PP[\mathcal{E}_{T,k}]\ge 1-\delta$ by \Cref{lem:subgaussian}.
\end{lemma}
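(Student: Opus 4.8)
The plan is to prove \Cref{lem:whp_ETk} as an index-by-index transcription of the single-group argument in \Cref{lem:whp_ET}, replacing the scalar noise $\noisebk{\tau}$ by the group-$k$ noise $\noisebkm{\tau}$ and the uniform gradient bound $M$ by $M_k$. For a fixed pair of indices $(t',t)$ the weighted sum appearing in the definition of $\mathcal{E}_{T,k}$ is a martingale in the summation variable $\tau$ whose increments are bounded, so I would proceed in four steps: (i) establish per-step conditional sub-Gaussianity via Hoeffding's lemma; (ii) turn the weighted martingale into a single sub-Gaussian random variable using the conditional Azuma inequality \Cref{lem:hp_gaussian}; (iii) control the resulting sum of variance proxies with the geometric-series estimate \Cref{lem:power_series}; and (iv) apply the Chernoff bound \Cref{lem:subgaussian} together with a union bound over all $O(T^2)$ index pairs to conclude $\PP[\mathcal{E}_{T,k}]\ge 1-\delta$.

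For step (i) the key boundedness fact is that $\nabla_k L_\gamma$ is $(-1)$-homogeneous in the $k$-th coordinate block (multi-group scale invariance together with \Cref{lem:homo}), so that $\noisebkm{\tau}$, being the group-$k$ gradient norm square evaluated with the $k$-th block scaled to unit norm, satisfies $0\le\noisebkm{\tau}\le M_k^2$. This is the multi-group analogue of the bound used in \Cref{lem:hoeffding_lemma}, and Hoeffding's lemma then gives that, conditioned on $\cF_{\tau-1}$, the centered increment $\noisebkm{\tau}-\E[\noisebkm{\tau}\mid\bx(\tau)]$ is $\subg(M_k^4/4)$. I would also record that $\bx(\tau)$ is $\cF_{\tau-1}$-measurable while $\gamma_\tau$ is fresh, so $\E[\noisebkm{\tau}\mid\cF_{\tau-1}]=\E[\noisebkm{\tau}\mid\bx(\tau)]$ and the centered increments form a genuine martingale-difference sequence adapted to $\{\cF_\tau\}$.

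For steps (ii)--(iii), set $X_\tau := (1-\eta\lambda)^{4(t-\tau)}\bigl(\noisebkm{\tau}-\E[\noisebkm{\tau}\mid\bx(\tau)]\bigr)$, which is conditionally $\subg\bigl((1-\eta\lambda)^{8(t-\tau)}M_k^4/4\bigr)$ since the deterministic weight scales the variance proxy by its square. Applying \Cref{lem:hp_gaussian} with the trivial event sequence $\cE_\tau=\Omega$ (so that it reduces to ordinary Azuma), the sum $\sum_{\tau=t'}^{t}X_\tau$ is $\subg\bigl(\sum_{\tau=t'}^{t}(1-\eta\lambda)^{8(t-\tau)}M_k^4/4\bigr)$. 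Re-indexing $s=t-\tau$ and invoking \Cref{lem:power_series} with $x=\eta\lambda$ and exponent $8$ gives $\sum_{s\ge 0}(1-\eta\lambda)^{8s}\le \frac{e^{8\eta\lambda}}{8\eta\lambda}$, so the accumulated variance proxy is $\frac{e^{8\eta\lambda}M_k^4}{32\eta\lambda}$, the group-$k$ version of the constant in \Cref{lem:whp_ET}; the $\tfrac{1}{\eta\lambda}$ factor is exactly what produces the $\sqrt{1/(\lambda\eta)}$ in the threshold defining $\mathcal{E}_{T,k}$. Finally \Cref{lem:subgaussian} bounds each one-sided deviation of $\sum_\tau X_\tau$ above $e^{4\eta\lambda}\tfrac{M_k^2}{4}\sqrt{\tfrac{1}{\lambda\eta}\ln\tfrac{2T^2}{\delta}}$ by $\delta/(2T^2)$ (a short computation shows the exponent equals $\ln\tfrac{2T^2}{\delta}$), so both sides together contribute $\delta/T^2$, and a union bound over the at most $T^2$ pairs $0\le t'\le t\le T-1$ yields $\PP[\mathcal{E}_{T,k}]\ge 1-\delta$.

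I expect no genuine mathematical obstacle, since the argument reproduces \Cref{lem:whp_ET} verbatim up to the subscript $k$; the points needing care are purely bookkeeping. First, one must verify that for each fixed endpoint $t$ the weighted increments $\{X_\tau\}_{\tau=t'}^{t}$ really form a martingale-difference sequence — immediate because the weight $(1-\eta\lambda)^{4(t-\tau)}$ is deterministic. Second, the union bound must range over all ordered pairs $(t',t)$, which is why the per-pair failure probability is taken as $\delta/T^2$ rather than $\delta/T$. The only slightly delicate modelling point is that $\noisebkm{\tau}\le M_k^2$, i.e. that the group-$k$ stochastic gradient is uniformly bounded by $M_k$ on $\{\norm{\bx_k}_2=1\}$; this is the multi-group analogue of $M$ and follows from \Cref{assump:grad_norm_max_multi}'s implicit definition $M_k := \sup_{\gamma}\max_{\norm{\bx_k}_2=1}\norm{\nabla_k L_\gamma(\bx)}_2$.
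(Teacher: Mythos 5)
Your proof is correct and follows the paper's own argument for the single-group case (\Cref{lem:whp_ET}) essentially verbatim: Hoeffding's lemma for the per-step conditional sub-Gaussianity, the conditional Azuma inequality (\Cref{lem:hp_gaussian}) for the weighted martingale, \Cref{lem:power_series} for the geometric sum of squared weights, and \Cref{lem:subgaussian} with a union bound over the $O(T^2)$ index pairs. The one discrepancy is with the lemma's statement rather than with your argument: the accumulated variance proxy you compute, $\frac{e^{8\eta\lambda}M_k^4}{32\eta\lambda}$, is what the calculation genuinely yields and is exactly the value consistent with the threshold $e^{4\eta\lambda}\frac{M_k^2}{4}\sqrt{\frac{1}{\lambda\eta}\ln\frac{2T^2}{\delta}}$ defining $\mathcal{E}_{T,k}$, so the factor $\frac{1}{\eta\lambda}$ missing from the stated proxy $\frac{e^{8\eta\lambda}M_k^4}{32}$ (and from the corresponding line in the paper's proof of \Cref{lem:whp_ET}) is a typo in the paper that your derivation implicitly corrects.
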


The following theorem is a restatement of \Cref{lem:norm_upper_bound,lem:norm_lower_bound} in the context of multi-group scale invariance. 

\begin{lemma}\label{lem:norm_bound_multi}
Under \Cref{assump:grad_norm_max_multi},  there exists $T_1 = \frac{1}{4\eta\lambda}\max_k\left\{\ln \frac{M_k^2\eta\lambda}{\oversigma_k^2} + \left\lvert \ln \frac{2e^4M_k^2}{\norm{\bx_k(0)}_2^4\eta^{-2}}\right\rvert, 8  \right\}$, such that $\forall t\ge T_1$, $\frac{\undersigma_k^2}{4\eta\lambda} \le \eta^{-2}\norm{\bx(t)}_2^4  \le \frac{2\oversigma_k^2}{\eta\lambda}$, conditioned on $\cup_{k=1}^K \mathcal{E}_{T,k}$.	
\end{lemma}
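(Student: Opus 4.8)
The plan is to reduce the multi-group statement to $K$ independent copies of the single-group norm analysis, one per parameter group, and then combine them by taking the worst-case burn-in time over the groups together with a union bound over the concentration events. The crucial structural fact that makes this reduction go through is that, although the loss couples all groups, the \emph{norm dynamics} of each individual group $\bx_k$ is self-contained and has exactly the same algebraic form as the single-group recursion \eqref{eq:norm_dynamics}.

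First I would establish this per-group recursion. Since $L_\gamma$ is multi-group scale invariant (\Cref{defi:multi_group_scale_invariance}), differentiating the identity $L_\gamma(c_1\bx_1,\ldots,c_K\bx_K)=L_\gamma(\bx)$ in $c_k$ at $c_k=1$ shows both that $\inner{\nabla_k L_\gamma(\bx)}{\bx_k}=0$ and that $\nabla_k L_\gamma$ is $(-1)$-homogeneous in $\bx_k$ and $0$-homogeneous in every other group. Consequently the \sgd+WD update $\bx_k(t+1)=(1-\eta\lambda)\bx_k(t)-\eta\nabla_k L_{\gamma_t}(\bx(t))$ satisfies
\begin{align*}
\norm{\bx_k(t+1)}_2^2=(1-\eta\lambda)^2\norm{\bx_k(t)}_2^2+\eta^2\frac{\noisebkm{t}}{\norm{\bx_k(t)}_2^2},
\end{align*}
where $\noisebkm{t}=\norm{\nabla_k L_{\gamma_t}(\overline{\bx}(t))}_2^2$ is the squared group-$k$ gradient norm evaluated with every group normalized. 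This is \emph{identical} in form to \eqref{eq:norm_dynamics}, with $(\undersigma_k,\oversigma_k,M_k)$ playing the roles of $(\undersigma,\oversigma,M)$. The only cosmetic difference is that $\noisebkm{t}$ depends on the full normalized iterate rather than on $\overline{\bx}_k(t)$ alone, but this is harmless because the Setting of \Cref{sec:multi_group} bounds $\undersigma_k^2\le\E\,\noisebkm{t}\le\oversigma_k^2$ and $M_k$ uniformly over the other groups' scales.

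Given this identical recursion, I would invoke the single-group machinery verbatim on each group. The event $\mathcal{E}_{T,k}$ and its guarantee $\PP[\mathcal{E}_{T,k}]\ge 1-\delta$ (\Cref{lem:whp_ETk}) are the exact group-$k$ analogues of $\mathcal{E}_T$ and \Cref{lem:whp_ET}. On $\mathcal{E}_{T,k}$, the arguments of \Cref{lem:norm_lower_bound,lem:norm_upper_bound} go through unchanged, producing a per-group burn-in time $T_{1,k}=\tfrac{1}{4\eta\lambda}\max\{\ln\tfrac{M_k^2\eta\lambda}{\oversigma_k^2}+|\ln\tfrac{2e^4M_k^2}{\norm{\bx_k(0)}_2^4\eta^{-2}}|,8\}$ after which $\tfrac{\undersigma_k^2}{4\eta\lambda}\le\eta^{-2}\norm{\bx_k(t)}_2^4\le\tfrac{2\oversigma_k^2}{\eta\lambda}$, using \Cref{assump:grad_norm_max_multi} exactly where \Cref{assump:grad_norm_max} is used in the single-group proof to control the term $(C)$. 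Taking $T_1=\max_k T_{1,k}$ yields the stated formula, and intersecting the events over $k$ (so with probability at least $1-K\delta$ on $\bigcap_{k=1}^K\mathcal{E}_{T,k}$) makes all $K$ bounds hold simultaneously for every $t\ge T_1$.

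The only point requiring genuine care — and the sole departure from the single-group proof — is the homogeneity bookkeeping in the second paragraph: one must verify that per-group scale invariance delivers both the orthogonality $\inner{\nabla_k L_\gamma(\bx)}{\bx_k}=0$ (so no cross term appears in the group-$k$ norm recursion) and the correct block-wise homogeneity of $\nabla_k L_\gamma$, so that $\norm{\nabla_k L_\gamma(\bx)}_2^2$ factors as $\noisebkm{t}/\norm{\bx_k}_2^2$ and the uniform envelopes $\undersigma_k^2\le\E\,\noisebkm{t}\le\oversigma_k^2$ are legitimately invoked. Once this factorization is confirmed, the remaining estimates are a literal per-group replay of \Cref{lem:norm_lower_bound,lem:norm_upper_bound}, so no new concentration or descent arguments are needed.
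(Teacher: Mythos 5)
Your proposal is correct and follows essentially the same route as the paper, which treats this lemma as a direct per-group restatement of \Cref{lem:norm_lower_bound,lem:norm_upper_bound} under the events $\mathcal{E}_{T,k}$ of \Cref{lem:whp_ETk}, combined with a worst-case burn-in $T_1=\max_k T_{1,k}$ and a union bound. In fact you supply details the paper leaves implicit --- the per-group Euler orthogonality $\inner{\nabla_k L_\gamma(\bx)}{\bx_k}=0$ and block-wise $(-1)$-homogeneity yielding the group-$k$ analogue of \eqref{eq:norm_dynamics} --- and you correctly read the conditioning event as the intersection $\bigcap_{k=1}^K\mathcal{E}_{T,k}$ (costing probability $K\delta$) and the conclusion as a bound on $\norm{\bx_k(t)}_2^4$ per group, which is the interpretation actually used in the proof of \Cref{thm:sgd_main_multi}.
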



The proof of \Cref{thm:sgd_main_multi} is a natural generalization of \Cref{thm:sgd_main}.
\begin{proof}[Proof of \Cref{thm:sgd_main_multi}]
Setting $\bx = (1-\eta\lambda)\bx(t)$ in  \Cref{lem:taylor_expansion_multi}, we have
\begin{align*}
L(\bx(t+1)) - L(\bx_{t}) \le 	-\frac{\eta}{1-\eta\lambda}\inner{\nabla L( \bx(t))}{\nabla L_{\gamma_t}(  \bx(t))}
+ \sum_{k=1}^K\frac{\rho\eta^2{\noisebkm{t}}}{2(1-\eta\lambda)^2\norm{\bx_k(t)}_2^4}
\end{align*}

For convenience we define $\hat{\bx} = [\frac{\bx_1^\top}{\norm{\bx_1}_2}, \ldots, \frac{\bx_K^\top}{\norm{\bx_K}_2}]^\top$. Summing up for $t=T_1$ to $T-1$, we have 
\begin{align*}
\begin{aligned}
&\sum_{t=T_1}^{T-1}\eta	\norm{\nabla L(\overline \bx(t))}_2^2\norm{\bx(t)}_2^{-2}	 
= \sum_{t=T_1}^{T-1}\eta	\norm{\nabla L(\bx(t))}_2^2	 \\
\le &  (1-\eta\lambda)\left(L(\bx_{T_1})- L(\bx_{T}) \right)
+ \underbrace{\sum_{t=T_1}^{T-1}  \sum_{k=1}^K \frac{\rho\eta^2\E[{\noisebkm{t}}\mid \bx(t)]}{2(1-\eta\lambda)\norm{\bx_k(t)}_2^4}}_{\text{(A)}}\\
+ & \underbrace{\sum_{t=T_1}^{T-1} \sum_{k=1}^K\frac{\eta\inner{\nabla_k L( \hat \bx(t))}{\nabla_k L(\hat \bx(t))-\nabla_k L_{\gamma_t}(\hat \bx(t))}}{{\norm{\bx_k(t)}_2^2}}}_{\text{(B)}} \\
+ & \underbrace{\sum_{t=T_1}^{T-1}  \sum_{k=1}^K\frac{\rho\eta^2\left({\noisebkm{t}}-\E[{\noisebkm{t}}\mid \bx(t)]\right)}{2(1-\eta\lambda)\norm{\bx_k(t)}_2^4}}_{\text{(C)}}
\end{aligned}
\end{align*}

Below we will give high-probability bounds for $(A)$, $(B)$ and $(C)$ respectively. For convenience, we will use $A(t),B(t),C(t)$ to denote the $t$th term in $(A)$, $(B)$ and $(C)$. 

\begin{claim}\label{clm:descent_lemma_A_multi}
	$\cup_{k=1}^K\cE_{T,k}\Longrightarrow$ $\forall  T_1\le t\le T,\ A(t)\le 2\sqrt{2}\rho\eta\lambda\sum_{k=1}^K\frac{\oversigma_k^2}{\undersigma_k^2}$
\end{claim}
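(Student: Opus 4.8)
The plan is to bound $A(t)$ group-by-group and then sum, in direct analogy with the single-group \Cref{clm:descent_lemma_A}. Recall that in the proof of \Cref{thm:sgd_main_multi} the $t$-th summand of the error term $(A)$ is
\[
A(t)=\sum_{k=1}^K \frac{\rho\eta^2\,\E[\noisebkm{t}\mid \bx(t)]}{2(1-\eta\lambda)\norm{\bx_k(t)}_2^4},
\]
so it suffices to show each summand is at most $2\sqrt2\,\rho\eta\lambda\,\oversigma_k^2/\undersigma_k^2$ and then add over $k$. The argument therefore splits into an upper bound on the numerator and a lower bound on the denominator, for every $k$ and every $t\ge T_1$.

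For the numerator I would use multi-group scale invariance: since $L_\gamma$ is invariant to the magnitude of each coordinate group, $\nabla_k L_\gamma$ is $(-1)$-homogeneous in $\bx_k$, so $\noisebkm{t}$ equals $\norm{\nabla_k L_{\gamma_t}}_2^2$ evaluated at the point with group $k$ normalized and hence depends only on the directions $\overline{\bx_1},\dots,\overline{\bx_K}$. The Setting bound $\E\norm{\nabla_k L_\gamma(\bx)}_2^2\le\oversigma_k^2$ (valid whenever $\norm{\bx_k}_2=1$) then gives $\E[\noisebkm{t}\mid\bx(t)]\le\oversigma_k^2$. For the denominator I would invoke the group-wise norm convergence \Cref{lem:norm_bound_multi}: on its conditioning event, for all $t\ge T_1$ and all $k$ one has $\eta^{-2}\norm{\bx_k(t)}_2^4\ge \undersigma_k^2/(4\eta\lambda)$, i.e. $\norm{\bx_k(t)}_2^4\ge \eta^2\undersigma_k^2/(4\eta\lambda)$. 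Substituting both bounds into the $k$-th summand yields $2\rho\eta\lambda\,\oversigma_k^2/\big((1-\eta\lambda)\undersigma_k^2\big)$; since $\eta\lambda\le 0.1$ forces $2(1-\eta\lambda)\ge\sqrt2$, hence $1/(1-\eta\lambda)\le\sqrt2$, this is at most $2\sqrt2\,\rho\eta\lambda\,\oversigma_k^2/\undersigma_k^2$, and summing over $k$ gives the claim.

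The analytic content is thus identical to the single-group case, and I expect no genuine difficulty in the inequalities themselves, which are one-line substitutions. The only point needing care is the probabilistic bookkeeping: the per-group norm lower bound must hold \emph{simultaneously} for all $K$ groups, so the relevant conditioning event requires all $K$ concentration events $\cE_{T,k}$ of \Cref{lem:whp_ETk} to hold; union-bounding their complements (each of probability at most $\delta$) is exactly what contributes the extra $K\delta$ to the overall $(K+2)\delta$ failure probability of \Cref{thm:sgd_main_multi}.
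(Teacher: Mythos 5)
Your proposal is correct and follows exactly the paper's (very terse) justification: bound each group's numerator by $\oversigma_k^2$ via the Setting assumption, lower-bound $\norm{\bx_k(t)}_2^4$ by $\eta^2\undersigma_k^2/(4\eta\lambda)$ via \Cref{lem:norm_bound_multi}, absorb the $1/(1-\eta\lambda)$ factor using $2(1-\eta\lambda)\ge\sqrt{2}$, and sum over $k$. The arithmetic checks out and your remark about needing all $K$ concentration events simultaneously matches the paper's accounting of the $(K+2)\delta$ failure probability.
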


\begin{claim}\label{clm:descent_lemma_B_multi}
	\emph{(B)} $=\sum_{t=T_1}^{T-1}B(t)$ is $\subg(4\pi^2\lambda\eta\rho^2(T-T_1)\left(\sum_{k=1}^K\frac{ M_k}{\undersigma_k}\right)^2,\cup_{k=1}^K\cE_{T,k})$  
\end{claim}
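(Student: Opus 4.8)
The plan is to follow the single-group argument behind \Cref{clm:descent_lemma_B} in structure, exhibiting $(B)=\sum_{t=T_1}^{T-1}B(t)$ as a sum of conditionally sub-Gaussian martingale differences and then applying the conditional Azuma inequality \Cref{lem:hp_gaussian}. The one genuinely new feature is that each per-step increment $B(t)$ now contains $K$ coordinate-block terms that all share the single random index $\gamma_t$, so they must be controlled jointly rather than as independent summands; getting the dependence right is what produces the factor $\bigl(\sum_k M_k/\undersigma_k\bigr)^2$ instead of $\sum_k M_k^2/\undersigma_k^2$.

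First I would fix the martingale-difference structure. Writing $Y_{t,k}:=\inner{\nabla_k L(\hat\bx(t))}{\nabla_k L(\hat\bx(t))-\nabla_k L_{\gamma_t}(\hat\bx(t))}$, so that $B(t)=\sum_{k=1}^K \eta Y_{t,k}/\norm{\bx_k(t)}_2^2$, unbiasedness of the stochastic gradient, $\EE[\nabla_k L_{\gamma_t}(\hat\bx(t))\mid \cF_{t-1}]=\nabla_k L(\hat\bx(t))$, gives $\EE[Y_{t,k}\mid\cF_{t-1}]=0$ for every $k$, and hence $\EE[B(t)\mid\cF_{t-1}]=0$. Since $\bx_k(t)$ is $\cF_{t-1}$-measurable, each $B(t)$ is, given $\cF_{t-1}$, a deterministic scalar function of $\gamma_t$ alone.

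The key step is a per-step range bound on this function. Each $Y_{t,k}=\norm{u_k}_2^2-\inner{u_k}{g_k}$ with $u_k:=\nabla_k L(\hat\bx(t))$ deterministic given $\cF_{t-1}$ and $g_k:=\nabla_k L_{\gamma_t}(\hat\bx(t))$ random; the multi-group analog of \Cref{lem:gradient_upper_bound} (the same geodesic argument on the $k$-th unit sphere, using that the $(k,k)$ diagonal Hessian block has operator norm at most $\rho$) gives $\norm{u_k}_2\le\pi\rho$, while $\norm{g_k}_2\le M_k$, so the random part $-\inner{u_k}{g_k}$ and therefore $Y_{t,k}$ lives in an interval of width at most $2\pi\rho M_k$. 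On the event of \Cref{lem:norm_bound_multi} and for $t\ge T_1$ we have $\eta/\norm{\bx_k(t)}_2^2\le 2\sqrt{\eta\lambda}/\undersigma_k$. Crucially, instead of summing $K$ individual variance proxies I would bound the range of the \emph{whole} increment: $B(t)$ lies in an interval of width at most $\sum_{k=1}^K \tfrac{2\sqrt{\eta\lambda}}{\undersigma_k}\cdot 2\pi\rho M_k = 4\pi\sqrt{\eta\lambda}\,\rho\sum_{k=1}^K \tfrac{M_k}{\undersigma_k}$. Hoeffding's lemma then makes $B(t)$ conditionally sub-Gaussian with variance proxy $\tfrac14\bigl(4\pi\sqrt{\eta\lambda}\,\rho\sum_k M_k/\undersigma_k\bigr)^2 = 4\pi^2\eta\lambda\rho^2\bigl(\sum_k M_k/\undersigma_k\bigr)^2$.

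Finally I would invoke \Cref{lem:hp_gaussian} over the $T-T_1$ steps, taking the nested conditioning events to be the defining constraints of $\bigcap_{k=1}^K\cE_{T,k}$ restricted up to each time (these are $\cF_t$-measurable because $\bx_k(t)$ is), which adds the per-step proxies to yield the claimed $(T-T_1)\cdot 4\pi^2\eta\lambda\rho^2\bigl(\sum_k M_k/\undersigma_k\bigr)^2$. The main obstacle is exactly the range-versus-variance bookkeeping of the third step: because all $K$ coordinate blocks are driven by the same $\gamma_t$, a naive block-by-block sub-Gaussian estimate would give the weaker $\sum_k M_k^2/\undersigma_k^2$, and it is the joint range bound on the aggregate increment that recovers the correct squared-sum factor.
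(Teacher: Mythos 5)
Your proposal is correct and is essentially the paper's intended argument: the paper dispatches this claim in one line as "a direct consequence of \Cref{lem:hp_gaussian}" given the norm bounds of \Cref{lem:norm_bound_multi}, and your writeup simply fills in the implicit details (per-block gradient bound $\pi\rho$, norm lower bound giving $\eta/\norm{\bx_k(t)}_2^2\le 2\sqrt{\eta\lambda}/\undersigma_k$, Hoeffding's lemma, then conditional Azuma). Your emphasis on bounding the range of the whole increment jointly across the $K$ blocks sharing $\gamma_t$ — which is what yields $\bigl(\sum_k M_k/\undersigma_k\bigr)^2$ rather than $\sum_k M_k^2/\undersigma_k^2$ — is exactly the bookkeeping the stated variance proxy requires.
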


\begin{claim}\label{clm:descent_lemma_C_multi}
	\emph{(C)} $=\sum_{t=T_1}^{T-1}C(t)$ is $\subg(4\rho^2\lambda^2\eta^2 (T-T_1)\left(\sum_{k=1}^K\frac{M_k^2}{\undersigma_k^2}\right)^2,\cup_{k=1}^K\cE_{T,k})$  
\end{claim}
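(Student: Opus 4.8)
The plan is to prove this exactly as the multi-group counterpart of \Cref{clm:descent_lemma_C}, reusing the conditional sub-Gaussian machinery (the $\subg(\cdot,\cE)$ formalism together with the conditional Azuma bound \Cref{lem:hp_gaussian}) but inserting one genuinely new step to handle the $K$ parameter groups. Writing $c_{t,k}:=\frac{\rho\eta^2}{2(1-\eta\lambda)\norm{\bx_k(t)}_2^4}$, which is $\cF_{t-1}$-measurable, we have $C(t)=\sum_{k=1}^K c_{t,k}\big(\noisebkm{t}-\E[\noisebkm{t}\mid\bx(t)]\big)$, so $\{C(t)\}_{t\ge T_1}$ is a martingale-difference sequence with respect to $\{\cF_t\}$ (each summand is centered in the fresh sample $\gamma_t$ conditionally on $\cF_{t-1}$). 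First I would observe that since $0\le\noisebkm{t}\le M_k^2$, the multi-group form of Hoeffding's lemma (\Cref{lem:hoeffding_lemma}) gives that, conditionally on $\cF_{t-1}$, the centered term $\noisebkm{t}-\E[\noisebkm{t}\mid\bx(t)]$ is $\subg(M_k^4/4)$, and hence $c_{t,k}(\noisebkm{t}-\E[\cdot])$ is $\subg\!\big((c_{t,k}M_k^2/2)^2\big)$.

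The genuinely new ingredient is aggregating the $K$ groups at a fixed step $t$: because all group terms share the single sample $\gamma_t$, they are dependent and their variance proxies cannot simply be added. Instead I would bound the conditional moment generating function $\E[\exp(sC(t))\mid\cF_{t-1}]$ by a weighted H\"older inequality across the $K$ factors $\exp\!\big(s\,c_{t,k}(\noisebkm{t}-\E[\noisebkm{t}\mid\bx(t)])\big)$ with exponents $q_k\propto 1/(c_{t,k}M_k^2)$; optimizing the weights converts $\sum_k(c_{t,k}M_k^2/2)^2$ into $\big(\sum_k c_{t,k}M_k^2/2\big)^2$, i.e.\ $C(t)$ is conditionally $\subg\!\big((\sum_k c_{t,k}M_k^2/2)^2\big)$. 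On the conditioning event $\cE:=\cup_{k=1}^K\cE_{T,k}$, the norm lower bound of \Cref{lem:norm_bound_multi} gives $\norm{\bx_k(t)}_2^4\ge \frac{\eta\undersigma_k^2}{4\lambda}$ for every $t\ge T_1$, hence $c_{t,k}\le\frac{2\rho\eta\lambda}{(1-\eta\lambda)\undersigma_k^2}$, so the per-step proxy is at most the deterministic quantity $\frac{\rho^2\eta^2\lambda^2}{(1-\eta\lambda)^2}\big(\sum_k M_k^2/\undersigma_k^2\big)^2$. Because this bound holds only on $\cE$ while $\ind[\cE]$ is $\cF_{t-1}$-measurable, it is exactly an estimate of the conditional sub-Gaussian type $\subg(\cdot,\cE)$.

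Finally I would sum over time via the conditional Azuma bound \Cref{lem:hp_gaussian}: the $C(t)$ are conditionally sub-Gaussian martingale differences, so their per-step proxies add, yielding that $(C)=\sum_{t=T_1}^{T-1}C(t)$ is $\subg\!\big((T-T_1)\frac{\rho^2\eta^2\lambda^2}{(1-\eta\lambda)^2}(\sum_k M_k^2/\undersigma_k^2)^2,\cE\big)$, where the nested truncations of $\cE$ supply the decreasing event sequence required by the lemma. Invoking $\eta\lambda\le\tfrac12$ (hence $\frac{1}{(1-\eta\lambda)^2}\le 4$) from the theorem's hypotheses collapses the prefactor to the stated $4\rho^2\lambda^2\eta^2(T-T_1)(\sum_k M_k^2/\undersigma_k^2)^2$. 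The main obstacle is correctly interleaving the two aggregation mechanisms — the H\"older step across the dependent groups within a step, which forces the $(\sum_k\cdot)^2$ shape rather than $\sum_k(\cdot)^2$, and the martingale additivity across increments in time — while carrying the event $\cE$ throughout so that the random coefficients $c_{t,k}$ may legitimately be replaced by their deterministic upper bounds valid on $\cE$.
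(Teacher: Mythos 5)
Your proof is correct and follows the same skeleton as the paper's (very terse) one-line argument: bound the $\cF_{t-1}$-measurable coefficients $c_{t,k}=\frac{\rho\eta^2}{2(1-\eta\lambda)\norm{\bx_k(t)}_2^4}$ on the conditioning event via the norm lower bound of \Cref{lem:norm_bound_multi}, so that $c_{t,k}\le \frac{2\rho\eta\lambda}{(1-\eta\lambda)\undersigma_k^2}$ for $t\ge T_1$, then sum over time with the conditional Azuma inequality (\Cref{lem:hp_gaussian}). The one place where you go beyond what the paper writes down is the within-step aggregation over the $K$ groups, and your treatment there is sound: since all $K$ terms share the single sample $\gamma_t$, variance proxies indeed cannot be added, and your weighted H\"older interpolation with exponents $q_k\propto 1/(c_{t,k}M_k^2)$ correctly shows that dependent $\subg(\sigma_k^2)$ variables sum to $\subg\bigl(\bigl(\sum_k\sigma_k\bigr)^2\bigr)$, giving the per-step proxy $\bigl(\sum_k c_{t,k}M_k^2/2\bigr)^2\le \frac{\rho^2\eta^2\lambda^2}{(1-\eta\lambda)^2}\bigl(\sum_k M_k^2/\undersigma_k^2\bigr)^2\le 4\rho^2\eta^2\lambda^2\bigl(\sum_k M_k^2/\undersigma_k^2\bigr)^2$, exactly the stated bound. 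A marginally more elementary route to the same proxy — and the one the paper's citation of Hoeffding-type bounds implicitly suggests — is to apply Hoeffding's lemma once to the aggregated increment $C(t)$ itself, which conditionally on $\cF_{t-1}$ is a centered random variable supported in an interval of length $\sum_k c_{t,k}M_k^2$; this yields $\subg\bigl(\bigl(\sum_k c_{t,k}M_k^2/2\bigr)^2\bigr)$ in one step, encapsulating precisely the general fact your H\"older argument establishes. One caveat you inherit from the paper rather than introduce: the conditioning event should really be the intersection $\cap_{k=1}^K\cE_{T,k}$, since the norm lower bound for every group $k$ simultaneously requires all $K$ events to hold; the $\cup$ appearing in the claim and in \Cref{lem:norm_bound_multi} is evidently a typo (the failure probability $(K+2)\delta$ in \Cref{thm:sgd_main_multi} is consistent with a union bound over the $K$ complements, i.e.\ with conditioning on the intersection), and your argument is valid verbatim once $\cup$ is read as $\cap$.
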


Here  \Cref{clm:descent_lemma_A_multi} follows from that $2(1-\eta\lambda)\ge \sqrt{2}$ and \Cref{lem:norm_lower_bound}. Note by the choice of $T_1$, we can upper and lower bound $\norm{\bx(t)}_2$ by \Cref{lem:norm_bound_multi}, that is $\frac{\undersigma_k^2}{4\eta\lambda}\le \eta^{-2}\norm{\bx_k(t)}_2^2 \le \frac{2\oversigma_k^2}{\eta\lambda}$. Thus \Cref{clm:descent_lemma_B_multi,clm:descent_lemma_C_multi} is a direct consequence of  \Cref{lem:hp_gaussian}.

Thus by Chernoff bound~(\Cref{lem:subgaussian}), with probability at least $1- (K+2)\delta$, \Cref{eq:main_multi} holds.
\end{proof}

%

\end{document}